\newcommand\blue{\textcolor{blue}}
\newcommand\red{\textcolor{black}}
\newtheorem{theorem}{Theorem}[section]
\newtheorem{assumption}[theorem]{Assumption}
\newtheorem{proposition}[theorem]{Proposition}
\title{Semantic-aware Data Augmentation for Text-to-image Synthesis}
\author{
    Zhaorui Tan\textsuperscript{\rm 1,}\textsuperscript{\rm 2}, Xi Yang\textsuperscript{\rm 1}$^*$, Kaizhu Huang\textsuperscript{\rm 3}\thanks{Corresponding authors}
}
\begin{document}

\maketitle

\begin{abstract}
Data augmentation has been recently leveraged as an effective regularizer in various vision-language deep neural networks. However, in text-to-image synthesis (T2Isyn), current augmentation wisdom still suffers from the semantic mismatch between augmented paired data. Even worse, semantic collapse may occur when generated images are less semantically constrained. In this paper, we develop a novel Semantic-aware Data Augmentation (SADA) framework dedicated to T2Isyn. In particular, we propose to augment texts in the semantic space via an Implicit Textual Semantic Preserving Augmentation ($ITA$), in conjunction with a specifically designed Image Semantic Regularization Loss ($L_r$) as Generated Image Semantic Conservation, to cope well with semantic mismatch and collapse.
As one major contribution, we theoretically show that $ITA$ can certify better text-image consistency while $L_r$ regularizing the semantics of generated images would avoid semantic collapse and enhance image quality.
Extensive experiments validate that SADA enhances text-image consistency and improves image quality significantly in T2Isyn models across various backbones. Especially, incorporating SADA during the tuning process of Stable Diffusion models also yields performance improvements.

\end{abstract}

\section{Introduction}
Text-to-image synthesis (T2Isyn) is one mainstream task in the visual-language learning community that has yielded tremendous results. Image and text augmentations are two popular methods for regularizing visual-language models~\cite{naveed2021survey,liu2020survey}. 
As shown in Figure~\ref{fig:training_para}~(a),
existing T2Isyn backbones~\cite {xu2018attngan,tao2022df,wang2022clip}  typically concatenate noises to textual embeddings as the primary text augmentation method~\cite{reed2016generative} whilst employing simply basic image augmentations (\textit{e.g,}, Crop, Flip) on images' raw space. 
Recent studies~\cite{dong2017i2t2i,cheng2020rifegan} suggest text augmentation to be more critical and robust than image augmentation for T2Isyn, given that real texts and their augmentations involve the inference process. 

\begin{figure}
\centering
\includegraphics[width=\linewidth]{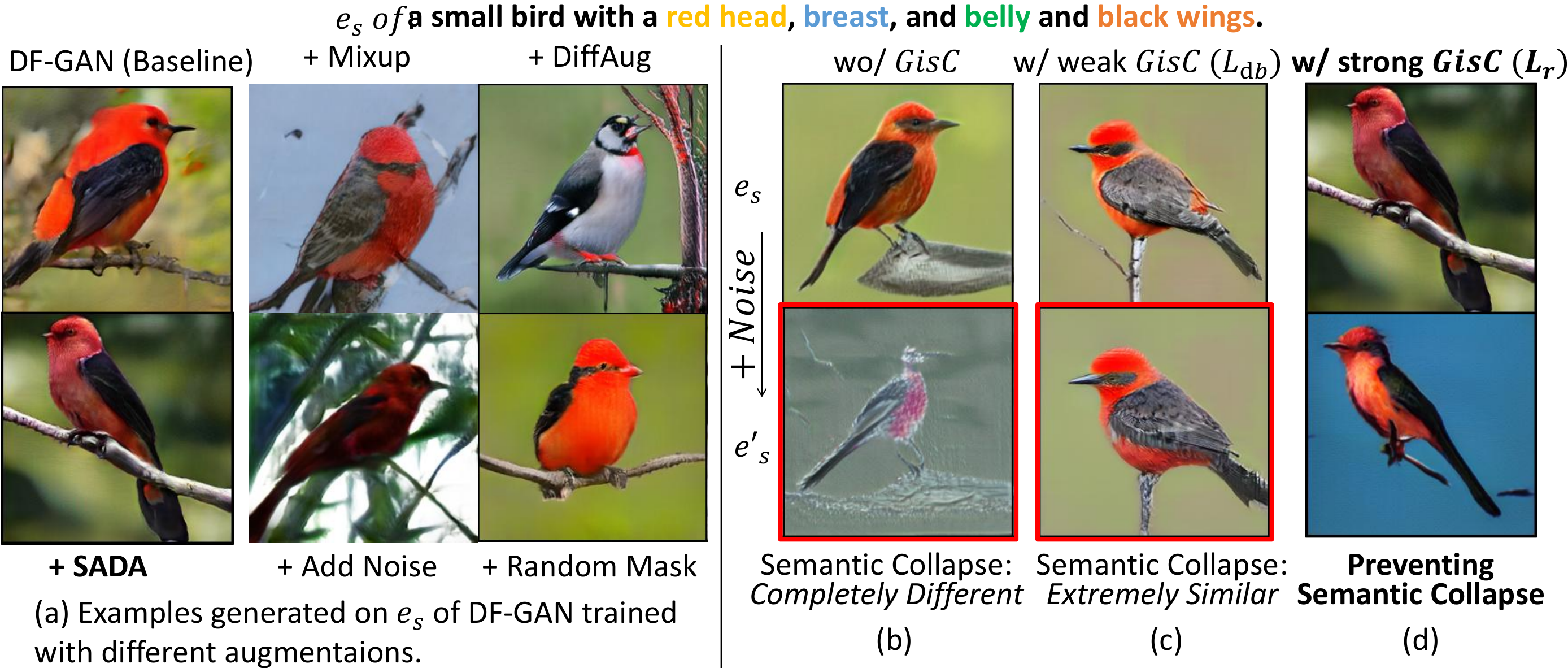}
\caption{(a) Current augmentations cause semantic mismatch and quality degradation in T2Isyn task. (b)(c) Illustrations of semantic collapse. (d) Our method prevents semantic collapse. See Supplementary Materials~\ref{app:More_Results} for more.}
\label{fig:t_banner}
\end{figure}

Albeit their effectiveness, we argue that current popular augmentation methods exhibit two major limitations in the T2Isyn task: 
1) 
\textit{Semantic mismatch} exists between augmented texts/images and generated pairs, it triggers accompanied semantic distribution disruption across both modalities, leading to augmented texts/images lacking corresponding visual/textual representations. 
As shown in Figure~\ref{fig:t_banner}~(a), advanced image augmentation, such as Mixup~\cite{zhang2017mixup}, {DiffAug~\cite{zhao2020differentiable}}, along with text augmentation like Random Mask\footnote{Randomly masking words in raw texts.} or Add Noise\footnote{Directly adding random noise to textual semantic embeddings.} might weaken both semantic and visual supervision from real images.
2) \textit{Semantic collapse} occurs in the generation process, \textit{i.e.}, when two \textit{slightly} semantic distinct textual embeddings are given, the model may generate either \textit{completely} different or \textit{extremely} similar images. This indicates that the models may be under-fitting or over-fitting semantically (see Figure~\ref{fig:t_banner}~(b)(c)).  
Both issues will compromise semantic consistency and generation quality. While imposing semantic constraints on generated images can alleviate semantic collapse, the study~\cite{wang2022clip} solely focuses on regulating the direction of semantic shift, which may not be entirely adequate.


Motivated by these findings, this paper proposes a novel Semantic-aware Data Augmentation (SADA) framework that offers semantic preservation of texts and images. SADA consists of an Implicit Textual Semantic Preserving Augmentation ($ITA$) and a Generated Image Semantic Conservation ($GisC$).
$ITA$ efficiently augments textual data and alleviates the semantic mismatch; $GisC$ preserves generated image semantics distribution by adopting constraints on semantic shifts.
As one major contribution, we show that SADA can both certify better text-image consistency and avoid semantic collapse with a theoretical guarantee. 

Specifically, $ITA$ preserves the semantics of augmented text by adding perturbations to semantic embeddings while constraining its distribution without using extra models. It bypasses the risks of semantic mismatch and enforces the corresponding visual representations of augmented textual embeddings. Crucially, we provide a theoretical basis for $ITA$ enhancing text-image consistency, a premise backed by the group theory for data augmentation~\cite{chen2020group}.
As illustrated in Figure~\ref{fig:training_para}~(b), the augmented text embeddings are engaged with the inference process, providing semantic supervision to enhance their regularization role. On the implementation front, two variants for $ITA$: a closed-form calculation $ITA_C$ (training-free), and its simple learnable equivalent $ITA_T$. It is further proved that a theoretical equivalence of  $ITA_C$ arrives at the same solution to recent methods~\cite{dong2017i2t2i,cheng2020rifegan} that employ auxiliary models for textual augmentation when these auxiliary models are well-trained. This suggests that $ITA_C$ offers an elegant and simplified alternative to prevent semantic mismatch.  

\begin{figure}[t]
\centering
\includegraphics[width=\linewidth]{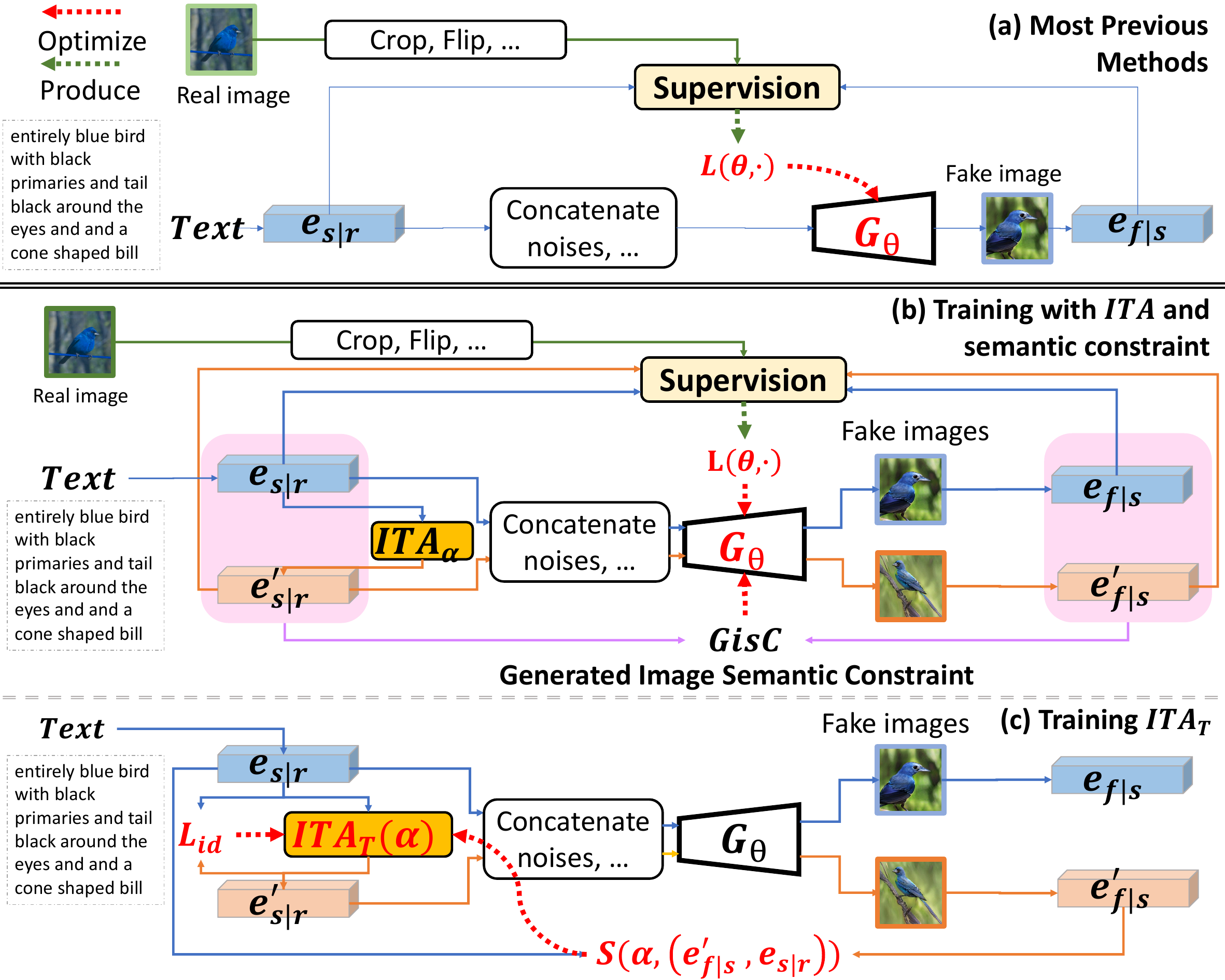}
\caption{$L(\theta,\! \cdot)$ is optimization loss for $G$. $S(\theta,(\cdot,\! \cdot))$ measures semantic consistency. (a) Simplified training paradigm of previous methods. (b) Training paradigm of SADA.
(c) Training of $ITA_T$ where generators are frozen.}
\label{fig:training_para}
\end{figure}
Meanwhile, we  identify that an effective $GisC$ diminishes semantic collapse and benefits the generated image quality.
Inspired by variance-preservation~\cite{bardes2021vicreg}, we design an Image Semantic Regularization Loss ($L_r$) to serve as a $GisC$ with $ITA_C$, which constrains both the semantic shift direction and distance of generated images (see Figure~\ref{fig:r_expalin}~(d)).  Through Lipschitz continuity and semantic constraint tightness analysis (as seen in Propositions~\ref{prop:lip} and \ref{prop:r_better_db}), we theoretically justify that $L_r$ prevents the semantic collapse, consequently yielding superior image quality compared to methods that solely bound semantic direction~\cite{gal2022stylegan}.
\red{Notably, SADA can serve as a theoretical framework for other empirical forms of $ITA$ and $GisC$ in the future.}

Our contributions can be summarized as follows:
\begin{itemize}
    \item This paper proposes a novel Semantic-aware Data Augmentation  (SADA) framework that consists of an Implicit Textual Semantic Preserving Augmentation ($ITA$) and a Generated Image Semantic Conservation ($GisC$).

    \item Drawing upon the group theory for data augmentation~\cite{chen2020group}, we prove that $ITA$ certifies a text-image consistency improvement. As evidenced empirically,  $ITA$ bypasses semantic mismatch while ensuring visual representation for augmented textual embeddings.
    
    \item We make the first attempt to theoretically and empirically show that $GisC$ can additionally affect the raw space to improve image quality.  We theoretically justify that using Image Semantic Regularization Loss $L_{r}$ to achieve $GisC$ prevents semantic collapse through the analysis of Lipschitz continuity and semantic constraint tightness.
    

    \item Extensive experimental results show that SADA can be simply applied to typical T2Isyn frameworks, 
    {such as diffusion-model-based frameworks}, 
    effectively improving text-image consistency and image quality. 
\end{itemize}

\section{Related Work}
\textbf{T2Isyn Frameworks and Encoders:}
Current T2Isyn models have four main typical frameworks: attentional stacked GANs accompanied with a perceptual loss produced by pre-trained encoders~\cite{zhang2017stackgan,zhang2018stackgan++,xu2018attngan,zhu2019dm,ruan2021dae}, one-way output fusion GANs~\cite{tao2022df}, VAE-GANs with transformers~\cite{gu2022vector}, and diffusion models (DMs)~\cite{dhariwal2021diffusion}. 
Two encoders commonly used for T2Isyn are DAMSM~\cite{xu2018attngan,tao2022df} and 
CLIP~\cite{radford2021learning}. 
Our proposed SADA is readily applied to these current frameworks with different encoders.

\noindent\textbf{Augmentations for T2Isyn:} 
Most T2Isyn models~\cite{reed2016generative,xu2018attngan,tao2022df,gu2022vector}
only use basic augmentations such as image corp, flip, and noise concatenation to textual embedding without exploiting further augmentation facilities. To preserve textual semantics, I2T2I~\cite{dong2017i2t2i} and RiFeGAN~\cite{cheng2020rifegan} preserve textual semantics using an extra pre-trained captioning model and an attentional caption-matching model respectively, to generate more captions for real images and to refine retrieved texts for T2Isyn. They still suffer from semantic conflicts between input and retrieved texts, and their costly retrieval process leads to infeasibility on large datasets, prompting us to propose a more tractable augmentation method. 

\noindent\textbf{Variance Preservation:} 
Stylegan-nada~\cite{gal2022stylegan} presents semantic Direction Bounding ($L_{db}$) to constrain semantic shift directions of texts and generated images, which may not guarantee the prevention of semantic collapse. 
Inspired by variance preservation in contrastive learning~\cite{bardes2021vicreg} based on the principle of maximizing the information
content~\cite{ermolov2021whitening,zbontar2021barlow,bardes2021vicreg}, we constrain the variables of the generated image semantic embeddings to have a particular variance along with its semantic shift direction.

\section{Implicit Textual Semantic Preserving Augmentation}


Consider observations $\hat{X}_1, ..., \hat{X}_k \in \hat{\mathcal{X}}$ sampled i.i.d. from a probability distribution $\mathbb{P}$  in the sample space $\hat{\mathcal{X}}$, 
where each $\hat{\mathcal{X}}$ includes real image $r$ and its paired text $s$. According to $\hat{X} \in \hat{\mathcal{X}}$, we then have $X_1, ..., X_k \in \mathcal{X}$ where each $X$ includes real image embedding $e_r$ and text embedding $e_s$. 
We take $G$ with parameter $\theta$ as a universal annotation for generators in different frameworks; $L(\theta, \cdot)$ represents total losses for $G$ used in the framework. 
Following the Group-Theoretic Framework for Data Augmentation~\cite{chen2020group}, we also assume that:

\begin{assumption}
\label{ass:invariance}
    If original and augmented data are a group that is exact invariant (i.e., the distribution of the augmented data is equal to that of the original data),  semantic distributions of texts/images are exact invariant.
\end{assumption}



Consider augmented samples $X' \in \mathcal{X}'$, 
where $X'$ includes $e_r$, and augmented textual embedding $e_s'$.
According to Assumption~\ref{ass:invariance}, we have an equality in distribution:
\begin{align}
    \mathcal{X} =_d \mathcal{X}',
\end{align}
which infers that both $X$ and $X'$ are sampled from $\mathcal{X}$.
Bringing it down to textual embedding specifically, we further draw an assumption:
\begin{assumption}
\label{ass:aug}
    If the semantic embedding $e_s$ of a given text follows a distribution $Q_{s}$, then $e_s'$ sampled from $Q_{s}$ also preserves the main semantics of $e_s$. 
\end{assumption}

This assumption can be intuitively understood to mean that for the given text, there are usually a group of synonymous texts. 
Satisfying exact invariant, $e_s'$ sampled from $Q_{s}$ preserves the main semantics of $e_s$.  $e_s'$ can be guaranteed to drop within the textual semantic distribution and correspond to a visual representation that shares the same semantic distribution with the generated image on $e_s$. Thus, $e_s'$ can be used to generate a reasonable image.
Under Assumption~\ref{ass:aug},  we propose the Implicit Textual Semantic Preserving Augmentation ($ITA$) that can obtain $Q_s$.
As shown in Figure~\ref{fig:r_expalin}~(a)(b), $ITA$ boosts the generalization of the model by augmenting implicit textual data under $Q_s$.

\subsection{Training Objectives for $G$ with $ITA$}

The general sample objective with $ITA$ is defined as:
\begin{align}
\label{eq:itatsolution}
    \min_\theta \Hat{R}_k(\theta):= \frac{1}{k} \sum\nolimits_{i=1}^k  {L}(\theta, ITA(X_i)).
\end{align}
We then define the solution of $\theta$ based on Empirical Risk Minimization (ERM)~\cite{ERM} 
as:
\begin{align}
    \text{ERM: } \theta_{ITA}^* \in arg \min_{\theta \in \Theta }\frac{1}{k} \sum\nolimits_{i=1}^k  {L}(\theta, ITA(X_i)) ,  
\end{align}
where $\Theta$ is defined as some parameter space. See detailed derivation based on ERM in Supplementary Materials~\ref{app:ita_obj}.

\begin{proposition}[\textbf{$ITA$ increases T2Isyn semantic consistency}] 
\label{prop:semantic_improvement}
    Assume exact invariance holds.
    Consider an unaugmented text-image generator  $\Hat{\theta}(X)$ of $G$ and its augmented version $\Hat{\theta}_{ITA}$. For any real-valued convex loss $S(\theta, \cdot)$ that measures the semantic consistency, we have:
    \begin{align}
    \label{eq:consisteny}
        \mathbb{E} [S(\theta, \Hat{\theta}(X))] \ge \mathbb{E} [S(\theta, \Hat{\theta}_{ITA} (X))] ,
    \end{align}
    which means with $ITA$, a model can have lower $\mathbb{E} [S(\theta, \Hat{\theta}_{ITA} (X)]$ thus a better text-image consistency. 
\end{proposition}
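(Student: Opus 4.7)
The plan is to follow the Group-Theoretic Framework for Data Augmentation of \cite{chen2020group} and reduce the proposition to an application of Jensen's inequality enabled by the exact-invariance hypothesis. The central intuition: under Assumptions~\ref{ass:invariance} and~\ref{ass:aug}, training with $ITA$ amounts to averaging the base estimator against the invariant measure $Q_s$, and pushing an average through a convex loss $S(\theta,\cdot)$ can only decrease its value.

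I would organize the argument in three steps. (1) \emph{Represent the augmented estimator as an orbit average.} Because the ERM target $\theta^{\ast}_{ITA}$ is obtained by averaging $L(\theta, ITA(X_i))$ over independent draws $e_s'\sim Q_s$, the framework of \cite{chen2020group} identifies $\hat\theta_{ITA}(X)$ at the population level with $\mathbb{E}_{X' \sim ITA(X)}\bigl[\hat\theta(X')\bigr]$. (2) \emph{Apply Jensen.} Since the map $u\mapsto S(\theta,u)$ is convex,
\begin{align*}
S\bigl(\theta, \hat\theta_{ITA}(X)\bigr) \;=\; S\Bigl(\theta,\, \mathbb{E}_{X'}[\hat\theta(X')]\Bigr) \;\le\; \mathbb{E}_{X'}\bigl[S(\theta, \hat\theta(X'))\bigr].
\end{align*}
(3) \emph{Invoke invariance.} Taking expectation over $X$ and using $X'=_d X$ from Assumption~\ref{ass:invariance}, the iterated expectation $\mathbb{E}_{X,X'}\bigl[S(\theta, \hat\theta(X'))\bigr]$ collapses to $\mathbb{E}_X\bigl[S(\theta, \hat\theta(X))\bigr]$, closing the inequality in exactly the form stated.

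The main obstacle will be Step~(1): for a nonlinear generator $G$, equating the $ITA$-trained $\hat\theta_{ITA}$ with an orbit average of $\hat\theta$ is not automatic and genuinely needs the machinery of \cite{chen2020group}, where this identification is established for estimators arising from ERM against an invariant augmentation. Assumption~\ref{ass:aug} is what supplies the needed invariance at the semantic-embedding level, guaranteeing that draws from $Q_s$ remain inside the invariant orbit of $e_s$ so that the distributional identity $\mathcal{X}=_d\mathcal{X}'$ can be lifted to the estimator level. Once Step~(1) is granted, Steps~(2) and~(3) are one line apiece, and the direction of the inequality is forced by convexity of $S$. As a sanity check, if $Q_s$ degenerates to a Dirac mass at $e_s$ then there is no augmentation, $\hat\theta_{ITA}\equiv\hat\theta$, and the bound becomes an equality, as it should.
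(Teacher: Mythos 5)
Your proof is correct, but it follows a genuinely different (and arguably more complete) route than the paper's. The paper's proof, both in the main text and in Supplementary Materials~\ref{app:proof}, invokes the Group-Theoretic Framework of \cite{chen2020group} only to extract the covariance-reduction corollary $Cov[\hat\theta_{ITA}(X)] \preceq Cov[\hat\theta(X)]$ in the Loewner order, and then asserts the expected-loss inequality "for any real-valued convex loss $S$" as an immediate consequence. You instead reach back to the structural fact \emph{underlying} that corollary: under exact invariance, the ERM-augmented estimator is an orbit average (a conditional expectation) of the unaugmented one, $\hat\theta_{ITA}(X) = \mathbb{E}_{X'\sim ITA(X)}[\hat\theta(X')]$, and then a single application of Jensen's inequality plus the distributional identity $X'=_d X$ finishes the argument. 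The distinction matters: Loewner covariance dominance with matched means gives $\mathbb{E}[S(\hat\theta_{ITA})]\le\mathbb{E}[S(\hat\theta)]$ directly only when $S$ is a quadratic form, whereas the conditional-expectation/Jensen route handles arbitrary convex $S$, which is exactly what the proposition claims. So your three-step argument is, in effect, the full-strength version of what the paper is implicitly relying on from \cite{chen2020group}; the paper's proof as written is a shorthand that skips the step you make explicit. Your framing is therefore both compatible with the paper's and more self-contained for the stated level of generality, and your sanity check (Dirac $Q_s$ collapses the bound to equality) is a nice addition that the paper does not make.

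Two small cautions worth keeping in mind. First, your Step~(1) — identifying $\hat\theta_{ITA}$ with the orbit average — is, as you note, the load-bearing and least elementary part; it is established in \cite{chen2020group} at the population level for ERM under invariant augmentation, and the paper's Assumptions~\ref{ass:invariance}--\ref{ass:aug} are what let one import it here, so you should cite that theorem rather than treat the identification as self-evident. Second, in Step~(3) the collapse $\mathbb{E}_{X,X'}[S(\theta,\hat\theta(X'))]=\mathbb{E}_X[S(\theta,\hat\theta(X))]$ uses only the \emph{marginal} equality $X'=_d X$, which is exactly what exact invariance supplies; it is worth stating that the dependence of $X'$ on $X$ is immaterial once the outer expectation is taken, since a careless reader might worry about the joint law. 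Neither point is a gap in your argument, but making them explicit would bring your write-up fully up to the standard the proposition deserves.
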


\begin{proof} 
   we obtain a direct consequence that: $Cov [ \Hat{\theta}_{ITA}(X)]$ $\preceq$ $Cov [\Hat{\theta}(X)]\;,$
   where $Cov[\cdot]$ means the covariance matrix decreases in the Loewner order.
   Therefore, $G$ with $ITA$ can obtain better text-image consistency. See proof details in Supplementary Materials~\ref{app:proof}.
\end{proof}
For a clear explanation, we specify a form $S(\theta,\cdot ):= S(\theta,(\cdot, \cdot))$ where $(\cdot, \cdot)$ take a $e_s$ and $e_r$ for semantic consistency measuring, and $\theta$ denotes the set of training parameters. 
Since we preserve the semantics of $e_s'$, its generated images should also semantically match $e_s$. Thus,
the total semantic loss of $G$ is defined as: 
\begin{align}
\label{eq:Ls} 
   { L_{S} =} & {S(\theta, (e_s, \mathcal{G}(e_s)) ) + S(\theta, (e_s', \mathcal{G}(e_s')) ) } \notag \\
    & {+ S(\theta, (e_s, \mathcal{G}(e_s')) ) + S(\theta, (e_s', \mathcal{G}(e_s)) ) \; },
\end{align}
where $\mathcal{G} = h(G(\cdot))$, $(\cdot)$ takes a textual embedding and $h(\cdot)$ maps images into semantic space. Typically, as the first term is included in the basic framework, it is omitted while other terms are added  for SADA applications.


\begin{figure}[t]
\begin{center}
\centerline{
 \includegraphics[width=\linewidth]{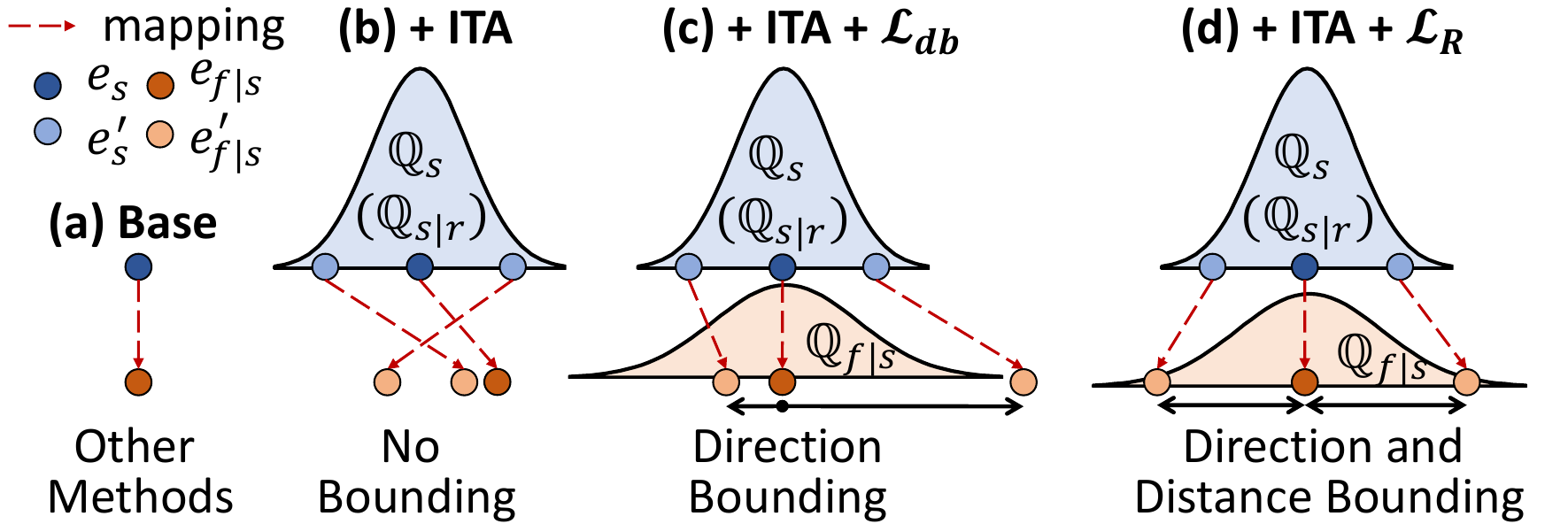}
}
\caption{Diagram of augmentation effects of our proposed SADA ($+ITA, +ITA+L_{db}$, $ITA+L_r$).
}
\label{fig:r_expalin}
\end{center}
\vskip -0.1in
\end{figure}

\subsection{Obtaining Closed-from $ITA_C$} 
\subsubsection{Theoretical Derivation of $ITA_C$}
Assume that exact invariance holds. We treat each textual semantic embedding $e_s$ 
as a Gaussian-like distribution $\phi = \mathcal{N}(e_s, \sigma)$, where each sample $e_s' \sim \mathcal{N}(e_s, \sigma)$ can maintain the main semantic $m_s$ of $e_s$. 
In other words,  $\sigma$ is the variation range of $e_s$ conditioned by $m_s$, $\phi$ derives into:
\begin{align}
\label{eq:phi1}
    \phi = \mathcal{N}(e_s, \sigma|m_s) \; .
\end{align}

By sampling $e_s'$ from $\phi$, we can efficiently obtain augmented textual embedding for training. 
We need  to draw support from real images to determine the semantics $m_s$ that need to be preserved. Empirically, real texts are created based on real images.
$e_s$ is thus naturally depending on $e_r$, leading to the inference: $e_{s|r} \triangleq  e_{s}, m_{s|r} \triangleq  m_s, Q_{s|r} \triangleq  Q_{s}$.
Given a bunch of real images, $\sigma|m_s$ is assumed to represent the level of variation inherent in text embeddings, conditioned on the real images.  
We can redefine $\phi$ in Eq.~(\ref{eq:phi1}) for $ITA_C$ augmentation as: $\phi  \triangleq  \mathcal{N}(e_{s|r},  \sigma|m_{s|r}) =  \mathcal{N}(e_{s|r}, \beta \cdot \mathbb{C}_{ss|r} \mathbb{I}),$
where $\mathbb{C}_{**}$ denotes covariance matrix of semantic embeddings; 
$r, s$ stand for real images and real texts; $\mathbb{C}_{ss|r}$ is the self-covariance of $e_s$ conditioned by semantic embedding of real images $e_r$; $\mathbb{I}$ denotes an identity matrix; $\beta$ is a positive hyper-parameter for controlling sampling range. As such, we define: $\phi \triangleq {Q}_{s|r}$. 
According to~\cite{kay1993fundamentals}, conditional $\mathbb{C}_{ss|r}$ is equivalent to:
\begin{align}
\label{eq:cond_ss_r}
\mathbb{C}_{ss|r} = \mathbb{C}_{ss} - \mathbb{C}_{sr}\mathbb{C}_{rr}^{-1}\mathbb{C}_{rs}\;,
\end{align}
where all covariances can be directly calculated.
Then $\phi$ is calculated from the dataset using semantic embeddings of texts and images for $s$ and $r$. 
In practice, $\mathbb{C}_{ss|r}$ is calculated using real images and their given texts from the training set.


\subsubsection{Remarks of $ITA_C$}
We explore the connections between $ITA_C$ and previous methods~\cite{dong2017i2t2i,cheng2020rifegan}, assuming all models are well-trained. 
\begin{proposition}
\label{prop:close_form}
    $ITA_C$ can be considered a closed-form solution for general textual semantic preserving augmentation methods of T2Isyn. 
\end{proposition}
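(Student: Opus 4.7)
The plan is to establish that, in the limit of a perfectly trained auxiliary captioning or caption-matching model, the induced distribution of augmented text embeddings coincides with the Gaussian posterior $\mathcal{N}(e_{s|r}, \beta\cdot\mathbb{C}_{ss|r}\mathbb{I})$ from which $ITA_C$ samples; once the two induced laws agree on the semantic embedding space, the ERM solutions of Eq.~(\ref{eq:itatsolution}) coincide, which is what ``closed-form solution'' should mean here.

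First, I would formalize the abstract auxiliary-model scheme so that methods such as I2T2I and RiFeGAN fit a common template: given a real image $r$ with embedding $e_r$ and its paired text $s$ with embedding $e_s$, the auxiliary model produces additional texts $\tilde{s}$ whose embeddings $e_{\tilde{s}}$ are drawn from an implicit distribution determined by two training criteria, namely fidelity to the image (the new caption must describe $r$) and preservation of the paired text's meaning (the new caption is a valid paraphrase of $s$). A standard consistency argument shows that at the well-trained optimum this implicit distribution collapses to the true posterior $p(e_s \mid e_r)$, which is precisely the semantic law that Assumption~\ref{ass:aug} asks augmentations to respect.

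Next, I would identify this posterior with the object that $ITA_C$ already samples. Using the Gaussian surrogate $\phi = \mathcal{N}(e_s, \sigma \mid m_s)$ introduced for $ITA_C$ together with the joint second-order statistics of $(e_s, e_r)$, the conditional distribution of $e_s$ given $e_r$ has mean $e_{s|r}$ and covariance given exactly by Eq.~(\ref{eq:cond_ss_r}), $\mathbb{C}_{ss|r} = \mathbb{C}_{ss} - \mathbb{C}_{sr}\mathbb{C}_{rr}^{-1}\mathbb{C}_{rs}$. Up to the scalar $\beta$ that merely rescales the sampling radius, this is precisely the law used by $ITA_C$. I would then close the argument by observing that two augmentation schemes inducing the same law on the semantic embedding space produce the same expected objective $\Hat{R}_k(\theta)$, and hence the same minimizer $\theta^\ast_{ITA}$; therefore $ITA_C$ reproduces in closed form, and without any auxiliary network, the augmentation effect of fully trained captioning-based or retrieval-based methods.

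The main obstacle I expect is the rigorous justification of the Gaussian identification: the true posterior over natural-language captions is combinatorial and multimodal, so strict distributional equality cannot hold. I would handle this by restricting the equivalence to the first two moments, which is all that $ITA_C$ matches by construction and all that feeds into the covariance-based consistency statement of Proposition~\ref{prop:semantic_improvement}; matching mean and covariance suffices to produce the same value of $\mathbb{E}[S(\theta, \cdot)]$ up to the convex-loss bound already exploited in that earlier proof. A second, milder obstacle is showing that auxiliary-model outputs actually concentrate on $p(e_s\mid e_r)$ at optimum; here I would simply invoke the training losses of I2T2I and RiFeGAN at their global minima, which are explicitly designed to enforce image-conditional and paraphrase-consistent generation respectively, and defer further technicalities to the supplementary derivation.
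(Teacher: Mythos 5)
Your proposal follows essentially the same route as the paper's own proof: both argue that when the auxiliary captioning or retrieval model is well-trained, the distribution of the augmented text embeddings it produces coincides with the image-conditional posterior $Q_{s|r}$, and that this is precisely the Gaussian $\mathcal{N}(e_{s|r}, \beta\cdot\mathbb{C}_{ss|r}\mathbb{I})$ that $ITA_C$ samples from directly, making $ITA_C$ the closed-form surrogate. The main cosmetic difference is structural: the paper splits the argument into two explicit cases (image-conditioned caption generators such as I2T2I, and text-retrieval/refinement schemes such as RiFeGAN) and invokes the exact-invariance assumption to identify $Q_{s|r}=_d Q'_{s|r}$ in each case, whereas you fold both into a single abstract template (image fidelity plus paraphrase preservation) and argue convergence to the posterior once. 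You also make two useful points that the paper leaves implicit: that ``closed-form solution'' should be read as inducing the same ERM minimizer via the same law on embeddings, and that the Gaussian identification is only a first-two-moments matching rather than a genuine distributional equality, which is all that the covariance-based consistency bound of Proposition~\ref{prop:semantic_improvement} actually consumes. These refinements make the claim sharper without changing the underlying argument.
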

Proof details can be seen in Supplementary Materials~\ref{app:proof}.
Therefore, training with bare $ITA_C$ is equivalent to using other textual semantic preserving augmentation methods.

\subsubsection{$ITA_C$ Structure}
Based on Eq.~(\ref{eq:cond_ss_r}), we obtain $e_{s|r}'$ from calculated $ITA_C$:
\begin{align}
\label{eq:esaug}
    e_{s|r}' = e_{s|r}' \sim \phi = e_{s|r} + z \triangleq e_{s|r} + \epsilon \odot \beta \cdot  \mathbb{C}_{ss|r} \mathbb{I},
\end{align}
where $z \sim \mathcal{N}(0, \beta \cdot \mathbb{C}_{ss|r} \mathbb{I})$, $\epsilon$ is sampled from a uniform distribution $U(-1,1)$, as shown in Figure~\ref{fig:ITA}.  $ITA_C$ requires no training and can be used to train or tune a T2Isyn model.

\begin{figure}[t]
\begin{center}
\centerline{
\includegraphics[width=\linewidth]{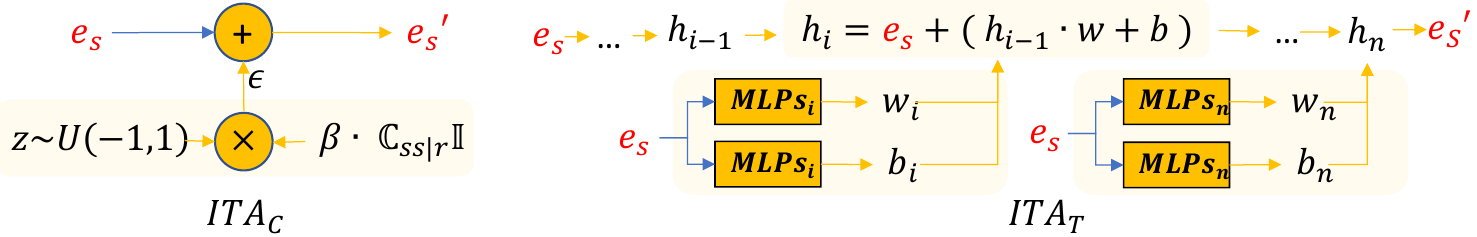}
}
\caption{Network structure of $ITA_C$ and $ITA_T$. \red{Note that $e_{s}$ and $e'_{s}$ are equivalent to $e_{s|r}$ and $e'_{s|r}$ respectively. }}
\label{fig:ITA}
\end{center}
\vskip -0.15in
\end{figure}

\subsection{Obtaining Learnable $ITA_T$}

We also design a learnable $ITA_T$ as a clever substitute. Proposition~\ref{prop:close_form} certifies that well-trained $ITA_T$ is equivalent to $ITA_C$.
To obtain $ITA_T$ through training, we need to  achieve the following objectives:
\begin{align}
   \max_\alpha {L}_{d}( \alpha,(e_{s|r}', e_{s|r})) ,\;\min_\alpha  S(\alpha, (e_{s|r}, \mathcal{G}(e_{s|r}')) ) \notag\; ,
\end{align}
where 
${L}_{d}( \alpha, \cdot, \cdot)$ denotes a distance measurement, 
enforcing that the augmented $e_{s|r}'$ should be far from $e_{s|r}$ as much as possible; $\alpha$ is training parameters of ${ITA_T}$. $S(\alpha,(\cdot, \cdot))$  bounds the consistency between $e_{s|r}$ and generated images on $e_{s|r}'$ ,
preserving the semantics of $e_{s|r}'$. 
The first objective can be easily reformed as minimizing the inverse distance:
\begin{align}
     &\min_\alpha {L}_{id}( \alpha,(e_{s|r}', e_{s|r})) :=  \min_\alpha - {L}_{d}( \alpha,(e_{s|r}', e_{s|r})) . \notag
\end{align}
The final loss for training $ITA_T$ is a weighted combination of ${L}_{d}$ and $S(\alpha,(\cdot, \cdot))$:
\begin{align} 
\label{eq:Litat}
      {L}_{ITA_T} = & r \cdot {L}_{id}( \alpha,(e_{s|r}', e_{s|r})) \nonumber \\ &+ ( 1-r) \cdot S(\alpha, (e_{s|r}, \mathcal{G}(e_{s|r}')) ,
\end{align}
where $r$ is a hyper-parameter controlling the augmentation strength. Note that ${L}_{ITA_T}$ is only used for optimizing $\alpha$ of $ITA_T$ and parameters of $G$ are frozen here (as Figure~\ref{fig:training_para}~(c)).


\subsubsection{$ITA_T$ Structure}
Since the augmented $e_{s|r}'$ should maintain the semantics in $e_{s|r}$, $\epsilon$ in Eq.~(\ref{eq:esaug}) is maximized but does not disrupt the semantics in $e_{s|r}$. 
As such, $\epsilon$ is not a pure noise but a $e_{s|r}$-conditioned variable. Hence, Eq.~(\ref{eq:esaug}) can be reformed as $ e_{s|r}' = e_{s|r} + f(e_{s|r})$ to achieve $ITA_T$,
where $f(e_{s|r})$ means a series of transformations of $e_{s|r}$. The final $ITA_T$ process can be formulated as $e_{s|r}' = ITA_T(e_{s|r}) = e_{s|r} + f(e_{s|r})$.
We deploy a recurrent-like structure as shown in Figure~\ref{fig:ITA} to learn the augmentation. 
$ITA_T$ takes $e_{s|r}$ as an input. For $i^{th}$ step in overall $n$ steps, there is a group of Multilayer Perceptrons to learn the weights $w_i$ and bias $b_i$ conditioned by $e_{s|r}$ for the previous module's output $h_{i-1}$. Then $h_i = e_{s|r} +(h_{i-1} \cdot w_i + b_i )$ will be output to the following processes.
We empirically set $n=2$ for all our experiments.
$ITA_T$ can be trained simultaneously with generative frameworks from scratch or used as a tuning trick. 


\section{ Generated Image Semantic Conservation }
Enabled by $ITA$'s providing $e_{s|r}, e_{s|r}' $,
we show that using Generated Image Semantic Conservation  ($GisC$) will affect generated images' raw space. 
Consider a frozen pre-trained image encoder ($E_I$) that maps images into the same semantic space. Consider a feasible and trainable generator $G$ that learns how to generate text-consistent images:
$
    G(X) \to  \mathcal{F}, \; E_I(\mathcal{
    F}) \to \mathcal{E} 
$,
where $\mathcal{F}$ and $\mathcal{E}$ are the sets for generated images $f$ and their semantic embeddings $e_f$. Since images are generated on texts, we have $e_{f|s} \triangleq e_f$.
We show that semantically constraining generated images can additionally affect their raw space.

\begin{proposition}
\label{prop:enhance_image}
Assume that $E_I$ is linear and well-trained. Constraining the distribution $Q_\mathcal{E}$ of $e_{f|s}$ can additionally constrain the distribution $\mathcal{F}$ of $f$. 
\end{proposition}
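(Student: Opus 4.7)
The plan is to exploit the linearity of $E_I$ together with the push-forward relationship between the two distributions. Since images $f = G(X)$ are mapped into the semantic space by $e_{f|s} = E_I(f)$, the distribution $Q_\mathcal{E}$ is precisely the push-forward $(E_I)_\# Q_\mathcal{F}$ of the distribution $Q_\mathcal{F}$ on the raw image space $\mathcal{F}$. Thus any constraint we place on $Q_\mathcal{E}$ is automatically a constraint on $Q_\mathcal{F}$ filtered through the linear map $E_I$; the proof amounts to making precise that this filtering is informative (not degenerate) when $E_I$ is well-trained.

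First I would write $E_I$ as a linear operator $f \mapsto A f$ and invoke the elementary identity for push-forwards under linear maps: if $f$ has mean $\mu_f$ and covariance $\Sigma_f$, then $e_{f|s}$ has mean $A\mu_f$ and covariance $A \Sigma_f A^{\top}$. Consequently, imposing a bound, a target shape, or a regularizer on the moments (or more generally the density) of $Q_\mathcal{E}$ translates directly into a corresponding constraint on the moments of $Q_\mathcal{F}$ through the operator $A$. In particular, if one constrains $A \Sigma_f A^{\top}$, the induced constraint on $\Sigma_f$ is nonvacuous on every subspace on which $A$ is nondegenerate.

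Next I would use the assumption that $E_I$ is well-trained to rule out the degenerate case where the constraint on $Q_\mathcal{E}$ fails to propagate back. Intuitively, a well-trained encoder separates semantically different images, which in the linear regime means $A$ is injective on the support of $Q_\mathcal{F}$, equivalently that $A$ has a left inverse $A^{+}$ (e.g. the Moore--Penrose pseudoinverse) on this range. Then $Q_\mathcal{F}$ can be recovered as $(A^{+})_\# Q_\mathcal{E}$, so constraining $Q_\mathcal{E}$ is genuinely a constraint on $Q_\mathcal{F}$. I would phrase this formally via the change-of-variables formula for push-forward measures and conclude that $Q_\mathcal{F}$ is determined by $Q_\mathcal{E}$ up to the (trivial, by assumption) kernel of $A$, which suffices to show the additional effect on the raw space.

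The main obstacle is making the phrase ``well-trained'' mathematically rigorous: one must commit to what training guarantees about $E_I$. I expect the cleanest route is to assume injectivity of $A$ on the data manifold, giving a bounded-below smallest singular value of $A$ restricted to that support. This turns the push-forward argument into a quantitative statement: any contraction of $Q_\mathcal{E}$ in the semantic space forces at least a proportional contraction of $Q_\mathcal{F}$ in the raw image space, completing the proof sketch with a concrete mechanism rather than mere existence of influence.
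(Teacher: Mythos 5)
Your proposal takes essentially the same approach as the paper: both identify the kernel of the linear encoder $E_I$ as the sole obstruction to a semantic-space constraint propagating back to raw image space, and both invoke the ``well-trained'' assumption to rule out the degenerate case where the generated distribution is concentrated in that kernel. The only substantive difference is that you read ``well-trained'' as injectivity of $A$ on the support (yielding a left inverse and hence full recoverability of $Q_\mathcal{F}$ from $Q_\mathcal{E}$), whereas the paper needs only the weaker condition that $\mathcal{F}$ is not entirely contained in $\mathrm{Null}(E_I)$, which it secures by observing that the contrary would mean a total collapse of $G$, contradicting the feasibility assumption on the generator.
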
 
\begin{proof}
    There are two scenarios: 
     1) If $E_I$ is inevitable, Proposition~\ref{prop:enhance_image} is obvious.
     2) If $E_I$ is not inevitable, it is impossible that  $\mathcal{F}$ all locates in the $Null(E_I)$ (nullspace of $E_I$) for well trained $E_I$, thus constraining $\mathcal{F}$ can affect $\mathcal{E}$.
     See more proof details in Supplementary Materials~\ref{app:proof}.
\end{proof}
We further assume that the positive effeteness of feasible $GisC$ can pass to the raw generated image space. 
The non-linear case is non-trivial to proof. 
Our results of using non-linear encoders (DAMSM~\cite{xu2018attngan} and CLIP~\cite{radford2021learning}) with different feasible $GisC$ methods suggest that Proposition~\ref{prop:enhance_image} holds for non-linear $E_I$ and positively affect image quality.




\subsection{Image Semantic Regularization Loss }


We design an Image Semantic Regularization Loss $L_r$ to attain $GisC$ for preventing semantic collapse and providing tighter semantic constraints than direction bounding $\mathcal{L}_{db}$~\cite{gal2022stylegan}. 

\subsubsection{Theoretical Derivation of $L_r$}
To tackle semantic collapse empirically, we constrain the semantic distribution of generated images, which draws inspiration from the principle of maximizing the information content of the embeddings through variance preservation~\cite{bardes2021vicreg}. 
Since semantic redundancies undescribed by texts in real images are not compulsory to appear in generated images, the generated images are not required to be the same as real images.
Therefore, conditioned by the texts, generated images should obtain semantic variation in real images.  
For example, when text changes from `orange' to `banana', `orange' in real images should likewise shift to `banana' despite the redundancies, and fake images should obtain this variance~\cite{tan2023semantic}. 
If exact invariance holds and the model is well-trained,
the text-conditioned semantic distribution of its generated images $Q_{f|s} = \mathcal{N}(m_{f|s}, \mathbb{C}_{ff|s}\mathbb{I})$ should have the semantic variance as close as that of the real images $Q_{rr|s} = \mathcal{N}(m_{r|s}, \mathbb{C}_{rr|s}\mathbb{I})$:
    \begin{align}
    \label{eq:ffs_rrs}
        \min_{e_f} ||\mathbb{C}_{ff|s}\mathbb{I}\! - \! \mathbb{C}_{rr|s}\mathbb{I} ||^2, \mathbb{C}_{rr|s}\!  = \! \mathbb{C}_{rr}\!  - \! \mathbb{C}_{rs}\mathbb{C}_{ss}^{-1}\mathbb{C}_{sr} \;,
    \end{align}
where $\mathbb{C}_{rr|s}$ is the self-covariance of $e_r$ conditioned by real text embeddings.

Aim to maintain latent space alignment, an existing $GisC$ method, direction bonding~\cite{gal2022stylegan} is defined as:
\begin{align}
\label{eq:Lb}
  {L}_{db} = 1 - \frac{( e_{s|r}'-e_{s|r} ) \cdot (e_{f|s}'- e_{f|s}) }{||( e_{s|r}'-e_{s|r} )||^2 \cdot ||(e_{f|s}'- e_{f|s}) ||^2} \;.
\end{align}
$L_{db}$ follows that semantic features are usually linearized~\cite{bengio2013better,upchurch2017deep,wang2021regularizing}. 
Given a pair of encoders that maps texts and images into the same semantic space, inspired by $L_{db}$, we assume that:
\begin{assumption}
\label{ass:shifts}
    If the paired encoders are well-trained, aligned, and their semantic features are linearized. The semantic shifts images are proportional to texts:
    \begin{align}
    \label{eq:shift}
        ( e_{f|s}' - e_{f|s}  ) \propto ( e_{s|r}' - e_{s|r} ).
    \end{align}
\end{assumption}

Assumption~\ref{ass:shifts}
holds for T2Isyn intuitively because when given textual semantics changes, its generated image's semantics also change, whose shifting direction and distance are based on   textual semantics changes. Otherwise, semantic mismatch and collapse would happen.
If Assumption~\ref{ass:shifts} holds, 
based on $ITA_C$ that preserves $e_{s|r}' - e_{s|r}$,  we have:
\begin{align}
\label{eq:Lr_requires}
     e_{f|s}' - e_{f|s}  \le \epsilon \odot \beta \cdot d(         \mathbb{C}_{ff|s})  \notag
     \\ \text{s.t. } e_{s|r}' - e_{s|r} \le \epsilon \odot \beta \cdot  d( \mathbb{C}_{ss|r})\;.
\end{align}
If we force that each dimension of ${\epsilon^*}^{d}_{i =1} \sim \{-1, 1\}$ where $d = \{1,...,n\}$ and $n$ is the dimension of the semantic embedding, we have:
\begin{align}
\label{eq:Lr_requires_final}
     e_{f|s}'' - e_{f|s}  = {\epsilon^*} \odot \beta \cdot d( \mathbb{C}_{ff|s})  \notag
     \\ \text{s.t. } e_{s|r}'' - e_{s|r} = {\epsilon^*} \odot \beta \cdot  d( \mathbb{C}_{ss|r}) \; .
\end{align}
Derived form Eqs.~(\ref{eq:ffs_rrs})~and~(\ref{eq:Lr_requires_final}), 
we define our Image Semantic Regularization Loss $L_r$ as:
\begin{align}
\label{eq:R}
     L_r = \varphi \cdot || \; ( e_{f|s}'' - e_{f|s}) - {\epsilon^*} \odot \beta \cdot d(\mathbb{C}_{rr|s}) || ^2 \;,
\end{align}
where $\beta \cdot d(\mathbb{C}_{ff|s})$ can be considered a data-based regularized term.
$\epsilon$ constrains the shifting direction, as shown in Figure~\ref{fig:r_expalin} (d). $\varphi$ is a hyper-parameter for balancing $L_r$ with other loss.
Note that for $ITA_T$, the range of $e_{s|r}' - e_{s|r}$ is not closed-form. Thus we cannot apply $L_r$ with $ITA_T$. 

\subsubsection{Remarks of $L_r$}

We show the effect of $L_r$ on the semantic space of generated images:
\begin{proposition}[\textbf{$L_r$ prevent semantic collapse: completely different}]
\label{prop:lip}
$L_r$ leads to $| e_{f|s}'-e_{f|s} |$ is less than or equal to a sequence $\Lambda$ of positive constants, further constrains the semantic manifold of generated embeddings to meet the Lipschitz condition.
\end{proposition}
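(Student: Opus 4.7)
My plan is to establish Proposition~\ref{prop:lip} in two natural steps mirroring the two claims in the statement: first the componentwise bound by $\Lambda$, and then the Lipschitz property of the composed text-to-image-semantic map.

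For the componentwise bound, I would start directly from the definition of $L_r$ in Eq.~(\ref{eq:R}). Driving $L_r \to 0$ forces the equality $e_{f|s}'' - e_{f|s} = \epsilon^* \odot \beta \cdot d(\mathbb{C}_{rr|s})$. Since each coordinate of $\epsilon^*$ lies in $\{-1,+1\}$, taking absolute values coordinate-by-coordinate yields $|e_{f|s}'' - e_{f|s}|_i = \beta \cdot d(\mathbb{C}_{rr|s})_i$. Extending this to arbitrary $e_{f|s}'$ via Assumption~\ref{ass:shifts} (which ties image shifts to text shifts proportionally) together with the $ITA_C$ bound $|e_{s|r}' - e_{s|r}| \leq \beta \cdot d(\mathbb{C}_{ss|r})$ from Eq.~(\ref{eq:esaug}), gives $|e_{f|s}' - e_{f|s}| \leq \Lambda$ where $\Lambda := \beta \cdot d(\mathbb{C}_{rr|s})$ is precisely the claimed sequence of positive constants.

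For the Lipschitz property, denote by $\Phi : e_{s|r} \mapsto e_{f|s}$ the composed text-to-image-semantic map induced by the generator and image encoder. Combining the two componentwise bounds gives
\begin{align*}
|\Phi(e_{s|r}') - \Phi(e_{s|r})|_i \;\leq\; \beta \cdot d(\mathbb{C}_{rr|s})_i \;\leq\; K_i \cdot |e_{s|r}' - e_{s|r}|_i,
\end{align*}
with $K_i := d(\mathbb{C}_{rr|s})_i / d(\mathbb{C}_{ss|r})_i$. Taking $K = \max_i K_i$ yields the desired Lipschitz inequality $\|\Phi(e_{s|r}') - \Phi(e_{s|r})\| \leq K \|e_{s|r}' - e_{s|r}\|$, which directly precludes the ``completely different outputs from slightly different inputs'' manifestation of semantic collapse described in the introduction.

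The main obstacle I anticipate is extending the coordinate equality (which, strictly speaking, holds only at the specific probe perturbations encoded by $\epsilon^*$) into a genuine upper bound valid for all $e_{s|r}'$ in a neighborhood of $e_{s|r}$. This extension appears to hinge on Assumption~\ref{ass:shifts} and on keeping $d(\mathbb{C}_{ss|r})_i$ bounded away from zero so that the constants $K_i$ remain finite; I would likely add a small regularizer, or restrict attention to semantically active dimensions, to ensure a well-defined global Lipschitz constant.
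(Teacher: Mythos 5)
Your first step matches the paper's argument: from $L_r=0$ and the $\{-1,+1\}$-valued $\epsilon^*$ you get $|e_{f|s}''-e_{f|s}| = \beta\cdot d(\mathbb{C}_{rr|s})$, and since $e''$ is the extreme-shift probe, any $e'$ drawn from $\phi$ satisfies $|e_{f|s}'-e_{f|s}|\le\Lambda := \beta\cdot d(\mathbb{C}_{rr|s})$. That part is sound and is essentially how the paper establishes the bound.

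The second step, however, has a genuine logical gap, and you yourself flag it as your ``main obstacle'' without resolving it correctly. The chain
\begin{align*}
|\Phi(e_{s|r}') - \Phi(e_{s|r})|_i \;\le\; \beta\cdot d(\mathbb{C}_{rr|s})_i \;\le\; K_i\cdot|e_{s|r}'-e_{s|r}|_i,
\end{align*}
with $K_i = d(\mathbb{C}_{rr|s})_i / d(\mathbb{C}_{ss|r})_i$, is false for generic $e_{s|r}'$. The middle quantity $\beta\cdot d(\mathbb{C}_{rr|s})_i$ is a \emph{constant}, while the right-hand side shrinks to zero as $e_{s|r}'\to e_{s|r}$; the second inequality only holds at the boundary of the sampling range where $|e_{s|r}'-e_{s|r}|_i = \beta\cdot d(\mathbb{C}_{ss|r})_i$. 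Bounding the numerator and denominator separately can never give a Lipschitz ratio; you need to control the ratio itself. Your suggested remedies (a regularizer, restricting to active dimensions) address the finiteness of $K_i$ but not this scaling mismatch, so they do not close the gap.

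The missing idea is to invoke Assumption~\ref{ass:shifts} not as a vague justification but as the mechanism that makes the ratio scale-invariant. The paper first computes the ratio at the extreme probe, $M=\Lambda/\Lambda_s$ where $\Lambda_s = \beta\cdot d(\mathbb{C}_{ss|r})$, and then uses the linearized-shift assumption: if $e_{s|r}'-e_{s|r} = \delta\,(e_{s|r}''-e_{s|r})$ for some scalar $\delta\neq 0$, then proportionality forces $e_{f|s}'-e_{f|s} = \delta\,(e_{f|s}''-e_{f|s})$ as well, so the ratio $|e_{f|s}'-e_{f|s}|/|e_{s|r}'-e_{s|r}|$ equals $M$ for \emph{all} nonzero shifts, not just the extremal one. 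That cancellation of $\delta$ is exactly what your inequality chain lacks, and it is the content of the conclusion $\frac{|E_I(G(e_{s|r}'))-E_I(G(e_{s|r}))|}{|e_{s|r}'-e_{s|r}|}\le K$ s.t.\ $e_{s|r}'\neq e_{s|r}$.
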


\begin{proof}
From Eq.~(\ref{eq:R}),  we have the constraint $| e_{f|s}' - e_{f|s} | \leq \Lambda$. Therefore, we have: $\frac{| e_{f|s}' - e_{f|s} |}{ | e_{s|r}' - e_{s|r} |} \leq K, \; \text{s.t.} \; e_{s|r}' \neq e_{s|r} \; ,$
where $K$ is a Lipschitz constant.
See more proof details in Supplementary Materials~\ref{app:proof}.
\end{proof}


Proposition~\ref{prop:lip} justifies why image quality can be improved with $L_r$.  
According to Proposition~\ref{prop:enhance_image}, we believe that the Lipschitz continuity can be passed to visual feature distribution, leading to better  continuity in visual space as well.  Our experiments verify that with $L_r$ methods, T2Isyn models achieve the best image quality.

\begin{proposition}[\textbf{$L_r$ prevent semantic collapse: extremely similar}]
\label{prop:r_better_db}
 $L_r$ prevents $| e_{f|s}''-e_{f|s} | = 0$ and provides tighter image semantic constraints than direction bounding ${L}_{db}$.
\end{proposition}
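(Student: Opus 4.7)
The plan is to attack the two claims separately: first show that the minimizer of $L_r$ forces a strictly positive semantic shift $|e_{f|s}'' - e_{f|s}|$, then show that any configuration annihilating $L_r$ is also feasible for $L_{db}$, while the converse fails.

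For the first claim, I would read off Eq.~\eqref{eq:R} directly. Since $L_r$ is a squared-error penalty driving $e_{f|s}'' - e_{f|s}$ toward the target vector $\epsilon^{*} \odot \beta \cdot d(\mathbb{C}_{rr|s})$, and each coordinate of $\epsilon^{*}$ equals $\pm 1$ while $d(\mathbb{C}_{rr|s})$ collects the strictly positive conditional variances of a non-degenerate real-image distribution, the target vector has no zero coordinate. Consequently, at any minimizer one has $|e_{f|s}'' - e_{f|s}| \ge \beta \cdot \min_i |d(\mathbb{C}_{rr|s})_i| > 0$, ruling out the collapsed solution $e_{f|s}'' = e_{f|s}$.

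For the second claim, I would substitute any scalar multiple $e_{f|s}'' - e_{f|s} = c \, (e_{s|r}'' - e_{s|r})$ with $c > 0$ into Eq.~\eqref{eq:Lb}: the cosine equals $1$, so $L_{db} = 0$ regardless of $c$, and sending $c \to 0^{+}$ exhibits a near-collapse configuration that $L_{db}$ cannot penalize. By contrast, $L_r$ simultaneously pins both the coordinatewise sign pattern and the magnitude of the shift. I would phrase tightness by set inclusion: every minimizer of $L_r$ minimizes $L_{db}$, but the witness above shows the inclusion is strict, so the feasible region of $L_r$ is a proper subset of that of $L_{db}$.

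The main obstacle will be formalizing the word \emph{tighter} unambiguously. I would do so by comparing the respective zero-loss level sets and exhibiting an explicit witness $(e_{f|s}'' = e_{f|s},\; e_{s|r}'' \neq e_{s|r})$ that lies in the zero set of $L_{db}$ but strictly outside that of $L_r$, provided $d(\mathbb{C}_{rr|s}) \neq 0$, which is mild under a non-degenerate real conditional distribution. A secondary subtlety is that Eq.~\eqref{eq:R} uses $d(\mathbb{C}_{rr|s})$ as the reference magnitude rather than $d(\mathbb{C}_{ff|s})$; I would bridge this by invoking Eq.~\eqref{eq:ffs_rrs}, which asserts that a well-trained generator makes the two covariances close, so the non-collapse conclusion transfers between the ideal and observed quantities without affecting the argument.
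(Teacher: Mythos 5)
Your argument is essentially the paper's: both derive that $L_r=0$ pins $|e_{f|s}''-e_{f|s}|$ to $\beta\cdot d(\mathbb{C}_{rr|s})$, which is strictly positive coordinatewise, while $L_{db}=0$ only fixes direction and therefore admits arbitrarily small shift magnitudes (the paper records this as $|e_{f|s}''-e_{f|s}|\ge 0$; you make it sharper with the $c\to 0^{+}$ witness). You also correctly flag the $\mathbb{C}_{rr|s}$ vs.\ $\mathbb{C}_{ff|s}$ mismatch and bridge it via Eq.~\eqref{eq:ffs_rrs}, which the paper's proof leaves implicit.

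One side claim in your tightness framing does not quite hold as stated: ``every minimizer of $L_r$ minimizes $L_{db}$'' would require the zero set of $L_r$ to sit inside that of $L_{db}$, but $L_r=0$ forces $e_{f|s}''-e_{f|s}=\epsilon^*\odot\beta\cdot d(\mathbb{C}_{rr|s})$ while $e_{s|r}''-e_{s|r}=\epsilon^*\odot\beta\cdot d(\mathbb{C}_{ss|r})$, and these two vectors share a sign pattern but are not generically parallel, so $L_{db}$ need not vanish. The inclusion holds only under the additional proportionality hypothesis of Assumption~\ref{ass:shifts}, so you should either invoke it explicitly or drop the set-inclusion phrasing and state tightness the way the paper does, purely through the strict-versus-nonstrict lower bound on the shift magnitude. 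With that caveat your proof matches the paper's reasoning.
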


\begin{proof}
For Eq.~(\ref{eq:Lb}),
assume $ {L}_{db} = 0$ and use $e_{s|r}''$ to substitute $e_{s|r}$, combining with Eq.~(\ref{eq:esaug}),
we have: $ | e_{f|s}''-e_{f|s} | \ge 0 \;.$
Preservation of semantic collapse is not guaranteed due to the distance between $e_{f|s}''\;(e_{f|s}') $ and $e_{f|s}$ is not strictly contained. 
Assume $L_r = 0$, we have: $| e_{f|s}'' - e_{f|s} | > 0 \; ,$
where provides tighter constraints than $L_{db}$. See visual explanation in Figure~\ref{fig:r_expalin}~(c)(d) and proof details in Supplementary Materials~\ref{app:proof}.
\end{proof}
Propositions~\ref{prop:lip}-\ref{prop:r_better_db} 
show that $L_r$ prevents semantic collapse. See SADA' algorithms in Supplementary Materials~\ref{app:algorithms}.

\section{Experiments}
\label{sec:exp}
Our experiments include three parts:
1) To demonstrate how $ITA$ improves text-image consistency, we apply $ITA$ of SADA to Text-Image Retrieval tasks. 
2) To exhibit the feasibility of our SADA, we conduct extensive experiments by using different T2Isyn frameworks with GANs, Transformers, and Diffusion Models (DM) as backbones on different datasets. 
3) Detailed ablation studies are performed; we compare our SADA with other typical augmentation methods to show that SADA certifies an improvement in text-image consistency and image quality in T2Isyn tasks. Particularly noteworthy is the observation that $GisC$ can alleviate semantic collapse. Due to page limitations, key findings are presented in the main paper. For detailed application and training information,
as well as more comprehensive results and visualizations, please refer to Supplementary Materials~\ref{app:More_Experimental_Details} and~\ref{app:More_Results}. \red{Codes are available at \url{https://github.com/zhaorui-tan/SADA}.}


\begin{figure*}[t]
    \centering
    \includegraphics[width=\linewidth]{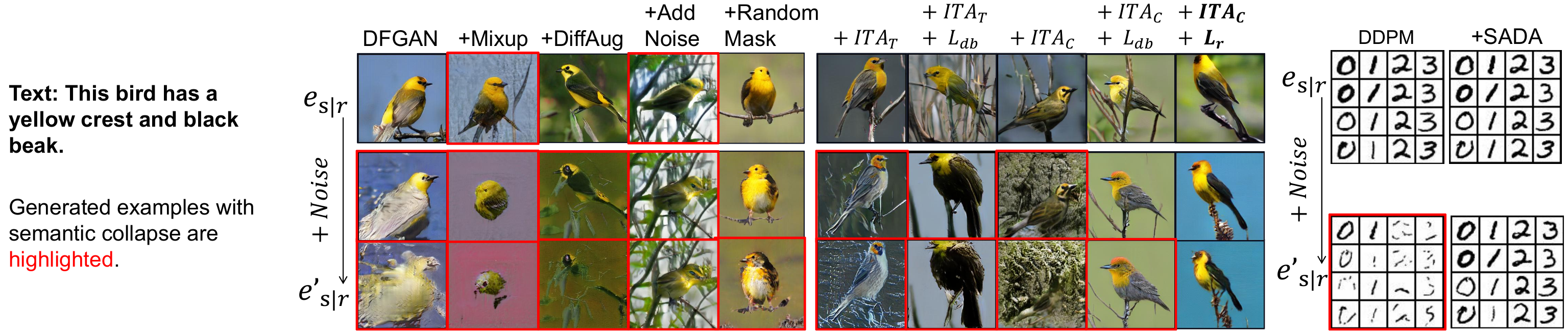}
    \caption{Generated examples of DF-GAN and DDPM  trained with different augmentations on $e_{s|r}$ as ascending $Noise\! \sim \!\mathcal{N}(0, \!\beta \cdot \mathbb{C}_{ss|r} \mathbb{I})$ is given.  Input noise is fixed for each column. See full examples in Supplementary Materials Figures~\ref{fig:DFCUB},~\ref{fig:DFCOCO}~\&~\ref{fig:diffaug}.}
    \label{fig:ablation}
\end{figure*}

\subsection{SADA on Text-Image Retrieval}
\subsubsection{Experimental setup} We compare tuning CLIP~\cite{wang2022clip}(ViT-B/16) performance w/ $ITA$ and wo/ $ITA$ on the COCO~\cite{lin2014microsoft} dataset. 
Evaluation is based on Top1 and Top5 retrieval accuracy under identical hyperparameter settings.
\subsubsection{Results} As exhibited in Table~\ref{tab:retrieval_res}, using $ITA$ results in a boost in image-text retrieval accuracy in both the Top1 and Top5 rankings, reflecting its proficiency in enhancing the consistency between text and images. The increase of $0.45\%$ and $1.56\%$ in Top1 retrieval accuracy explicitly suggests a precise semantic consistency achieved with SADA, providing empirical validation to our Proposition~\ref{prop:semantic_improvement}. 

\begin{table}[t]
\small
    \centering
    \begin{tabular}{p{0.7cm}|p{1.3cm}p{1.45cm}|p{1.3cm}p{1.45cm}}
    \hline
        & \multicolumn{2}{c|}{Image Retrieval} &\multicolumn{2}{c}{Text Retrieval} \\ \hline
         & Top1 & Top5  & Top1 & Top5  \\ 
         \hline
        CLIP 
        & 30.40 & 54.73 &  49.88 & 74.96  \\ \hline
        Tuned & 44.43 & 72.38 & 61.20 & 85.16 \\
        \textbf{+$ITA$}
        & \textbf{44.88}{(+0.45)} & \textbf{72.42}{(+0.04)} & \textbf{62.76}{(+1.56)} &  \textbf{85.38}{(+0.22)} \\
        \hline
    \end{tabular}
    %
    \caption{Text-Image Retrieval results of CLIP tune w/ and wo/ SADA. \red{Please refer to Supplementary Material~\ref{app:More_Results_clip}for tuning CLIP with different number of samples.}} 
    \label{tab:retrieval_res}
\end{table}
\begin{table}[t]
    \centering
    \small
    \begin{tabular}{p{1.6cm}|cc|cccc}
    \hline
        Backbone &  \multicolumn{2}{c|}{\begin{tabular}[c]{@{}l@{}}Encoder,  Method\\ Settings, Dataset\end{tabular}} &  CS$\uparrow$ & FID$\downarrow$  \\\hline
         \textbf{Transformer} & CLIP & VQ-GAN+CLIP & 62.78 & {16.16}\\
         {+SADA} & Tune & COCO &  \textbf{62.81} & {\textbf{15.56}} \\ \hline
         \textbf{DM} & CLIP &  SD &  72.72 & 55.98  \\
        {+SADA}& Tune & Pok\'emon BLIP & \textbf{73.80} & \textbf{46.07}
        \\
        \hline 
        \textbf{DM} & CLIP &  DDPM &  70.77 &  8.61 \\ 
         {+SADA} & Train &  MNIST &   \textbf{70.91} & \textbf{7.78} \\ \hline
        \textbf{GANs} &  DAMSM &  AttnGAN & 68.00 & 23.98  \\
        {+SADA}& Train &  CUB &   \textbf{68.20} & \textbf{13.17}
        \\
        \hline 
         \textbf{GANs} &  DAMSM &  AttnGAN & 62.59 & 29.60  \\
         {+SADA}& Tune &  COCO &  \textbf{64.59} & \textbf{22.70}
        \\
        \hline 
        \textbf{GANs} & DAMSM &  DF-GAN &58.10 & 12.10  \\
         {+SADA} & Train  &   CUB &\textbf{58.24} & \textbf{10.45}
        \\
        \hline 
         \textbf{GANs} & DAMSM &  DF-GAN & 50.71  &15.22   \\
         {+SADA} & Train & COCO & \textbf{51.02} & \textbf{12.49}
        \\
        \hline
    \end{tabular}
    %
    \caption{Performance evaluation of SADA with different backbones with different datasets. Results better than the baseline are in \textbf{bold}. } 
    \label{tab:all_res}
\end{table}
\subsection{SADA on Various T2Isyn Frameworks}
\subsubsection{Experimental setup}
We test SADA on GAN-based AttnGAN~\cite{xu2018attngan} and DF-GAN~\cite{tao2022df}, transformer-based VQ-GAN+CLIP~\cite{wang2022clip}, vanilla DM-based  conditional DDPM~\cite{ho2020denoising} and Stable Diffusion (SD)~\cite{rombach2021highresolution} with different pretrianed text-image encoders (CLIP and DAMSM~\cite{xu2018attngan}).
Parameter settings follow the original models of each framework for all experiments unless specified.
Datasets CUB~\cite{wah2011caltech}, COCO~\cite{lin2014microsoft}, MNIST, and Pok\'emon BLIP~\cite{deng2012mnist} are employed for training and tuning (see the $2^{nd}$ column in Table~\ref{tab:all_res} for settings). Supplementary Material~\ref{app:More_Results_SD_more} offers additional SD-tuned results. 
For qualitative evaluation, we use CLIPScore (CS)~\cite{hessel2021clipscore} to assess text-image consistency (scaled by $100$) and Fr{\'e}chet Inception Distance (FID)~\cite{heusel2017gans} to evaluate image quality (computed over 30K generated images).

\subsubsection{Results} 
As shown in Table~\ref{tab:all_res} and corresponding Figure~\ref{fig:all_vis}, 
the effectiveness of our SADA can be well supported by improvements across all different backbones, datasets, and text-image encoders, which experimentally validate the efficacy of SADA in enhancing text-image consistency and image quality. 
Notably, facilitated by $ITA_C\!+\!L_r$, AttnGAN achieves $13.17$ from $23.98$ on CUB. 
For tuning  VQ-GAN+CLIP and SD that have been pre-trained on large-scale data, SADA still guarantees improvements. These results support Propositions~\ref{prop:semantic_improvement},~\ref{prop:enhance_image} and~\ref{prop:lip}. 
It's worth noting that the tuning results of models with DM backbones (SD) are influenced by the limited size of the Pok\'emon BLIP dataset, resulting in a relatively high FID score.  Under these constraints, tuning with SADA performed better than the baseline, improving the CS from $72.72$ to $73.80$ and lowering the FID from $55.98$ to $46.07$.


\begin{figure}
    \centering
    \includegraphics[width=\linewidth]{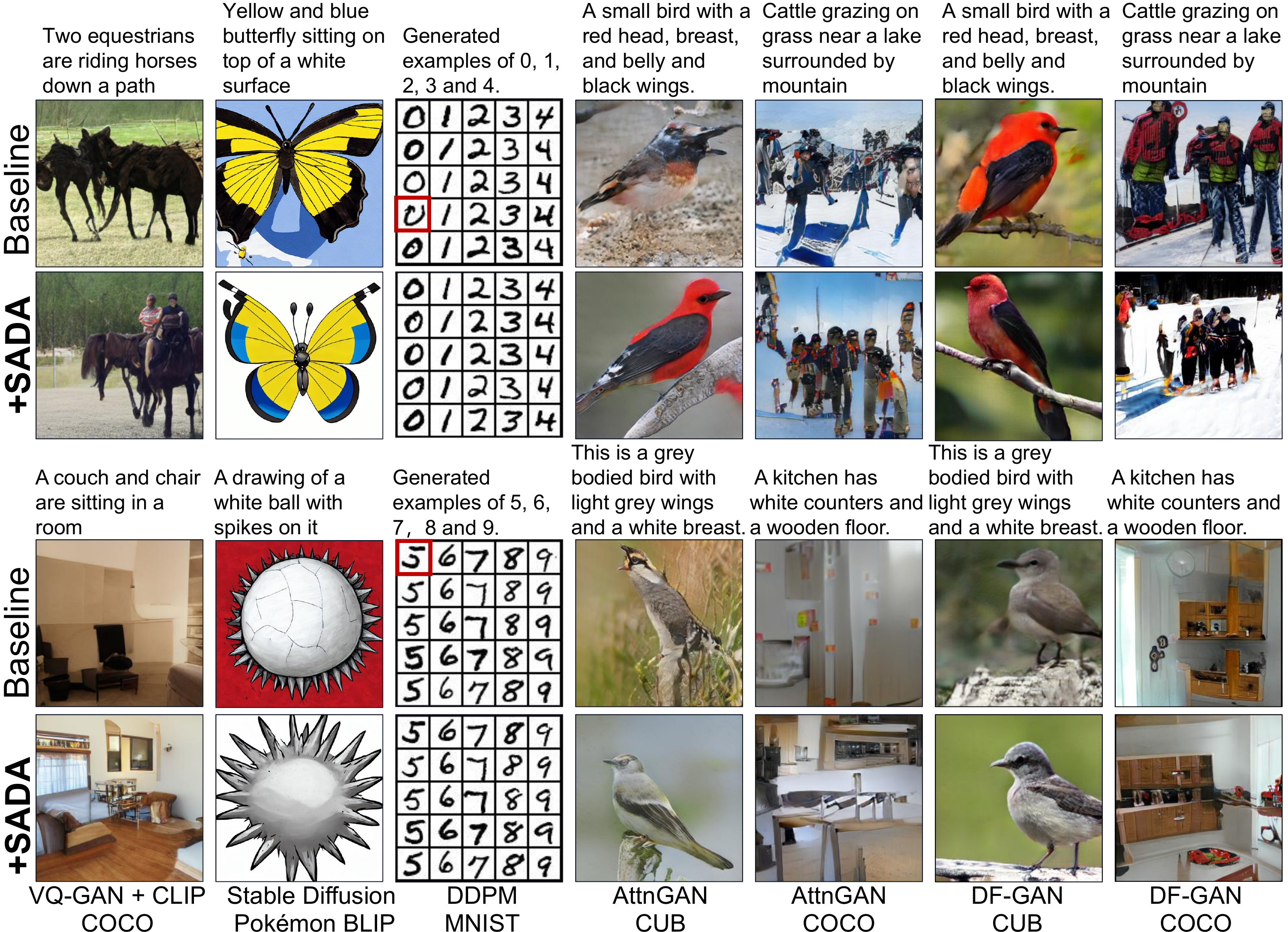}
    \caption{Generated examples of different backbones with different datasets wo/ SADA and w/ SADA. See more examples of different frameworks in Supplementary Materials~\ref{app:More_Results}.}
    \label{fig:all_vis}
\end{figure}

\subsection{Ablation Studies} 
\subsubsection{Experimental setup} 
Based on AttnGAN and DF-GAN, we compare Mixup~\cite{zhang2017mixup}, DiffAug~\cite{zhao2020differentiable}, Random Mask (RandMask), Add Noise, with SADA components in terms of CS and FID. Refer to Supplementary Materials~\ref{app:More_Experimental_Details},~\ref{app:More_Results_ITA_T_r} for more detailed settings and the impact of $r$ in $ITA_T$. 
\subsubsection{Quantitative results} Quantitative results are reported in Table~\ref{tab:results}.\footnote{Note for task 2, we use the best results among current augmentations as the baseline since no released checkpoint is available.}  We discuss the results from different aspects. 

{1). Effect of other competitors:} Mixup and DiffAug weaken visual supervision, resulting in worse FID than baselines. They also waken text-image consistency under most situations.
Moreover, 
Random Mask and Add Noise are sensitive to frameworks and datasets, thus they  
cannot guarantee consistent improvements.

{2). $ITA$ improves text-image consistency:}
Regarding text-image consistency, using $ITA$ wo/, or w/ $GisC$
all lead to improvement in semantics, supporting Proposition~\ref{prop:semantic_improvement}.  
However, $ITA_T$ consumes more time to converge due to its training, weakening its semantic enhancement at the early stage (as in Task~5). 
As it converged with longer training time, $ITA_T$ improves text-image consistency as in Task~6.

{3). $GisC$ promotes image quality:} 
For image quality, it can be observed that using  bare $ITA$ wo/ $GisC$, FID is improved in most situations;
but using constraints such as $L_{db}$ and $L_r$ with $ITA_T$ and $ITA_C$ can further improve image quality except $ITA_T + L_{db}$ in Task~1. These support our Proposition~\ref{prop:enhance_image} and Proposition~\ref{prop:lip}.

{4). $L_r$ provides a tighter generated images semantic constraint than $L_{db}$:} 
Specifically,
compared with $L_{db}$, using our proposed $L_r$ with $ITA_C$ provides the best FID and is usually accompanied by a good text-image consistency, thus validating  our Proposition~\ref{prop:r_better_db}.

\begin{table}[!t]
\begin{center}
\small
\begin{tabular}{p{1.58cm}|p{0.55cm}p{0.55cm}|p{0.55cm}p{0.55cm}|p{0.55cm}p{0.55cm}}
\hline
 & \multicolumn{2}{c|}{AttnGAN} & \multicolumn{4}{c}{DF-GAN} \\ \hline
Settings & \multicolumn{2}{c|}{Task~1: Train} & \multicolumn{2}{c|}{Task~2: Train} & \multicolumn{2}{c}{Task~3: Train}  \\
\hline
\textbf{CUB} & CS$\uparrow$ & FID$\downarrow$ & CS$\uparrow$ & \multicolumn{1}{c|}{FID$\downarrow$} & CS$\uparrow$ & FID$\downarrow$ \\ \hline
Paper & 68.00$^*$ & 23.98$^*$ & - & \multicolumn{1}{c|}{14.81$^*$} & - & -   \\
RM & 68.00 & 23.98 & - & \multicolumn{1}{c|}{14.81} & 58.10$^*$ & 12.10$^*$ \\ \hline
{+Mixup} & 65.82 & 41.47 & 57.29  & \multicolumn{1}{c|}{28.73} & 57.36 & 25.77  \\
 {+DiffAug} & {66.94} & \textbf{22.53} & {58.22}& \multicolumn{1}{c|}{{17.27}} &  {{58.05}} &  {12.35}  \\
{+RandMask} & 67.80 & \textbf{15.59} & 57.96$^*$  & \multicolumn{1}{c|}{15.42} & 58.07 & 15.17 \\
{+Add Noise} & 67.79 & \textbf{17.29}  & 57.46 & \multicolumn{1}{c|}{48.23} & 57.58  & 42.07   \\
\hline
{+$ITA_T$}  & \textbf{68.53$^\dagger$} & \textbf{14.14} & \textbf{58.09} & \multicolumn{1}{c|}{\textbf{14.03}} & \textbf{58.80$^\dagger$} & \textbf{12.17} \\
{+$ITA_T$+$L_{db}$} & \textbf{68.10} & \textbf{14.55} & \textbf{58.07} & \multicolumn{1}{c|}{\underline{\textbf{11.74}}} & \underline{\textbf{58.67}} & \textbf{11.58}  \\
{+$ITA_C$} & \underline{\textbf{68.42}} & \textbf{13.68} & \textbf{58.25} & \multicolumn{1}{c|}{\textbf{12.70}} & \textbf{58.23} & \textbf{11.81}  \\
{+$ITA_C$+$L_{db}$} & \textbf{68.18} & \underline{\textbf{13.74}} & \textbf{\textbf{58.30$^\dagger$}} & \multicolumn{1}{c|}{\textbf{12.93}} & \textbf{58.23} & \underline{\textbf{10.77}}   \\
{+$ITA_C$+$L_r$} & \textbf{68.20} & \textbf{\textbf{13.17$^\dagger$}} & \underline{\textbf{58.27}} & \multicolumn{1}{c|}{\textbf{11.70$^\dagger$}} & \textbf{58.24} & \textbf{10.45$^\dagger$}\\ 
\hline
Settings & \multicolumn{2}{c|}{Task~4: Tune} & \multicolumn{2}{c|}{Task~5: Tune} & \multicolumn{2}{c}{Task~6: Tune} 
\\ \hline
\textbf{COCO} &CS$\uparrow$ & FID$\downarrow$ & CS$\uparrow$ & \multicolumn{1}{c|}{FID$\downarrow$} & CS$\uparrow$ & FID$\downarrow$ \\ \hline
Paper & 50.48 & 35.49 & - & \multicolumn{1}{c|}{19.23} & - & -  \\
RM & 50.48 & 35.49 & 50.94 & \multicolumn{1}{c|}{15.41} & 50.94 & 15.41  \\
+ Tuned & 62.59$^*$ & 29.60$^*$ & 50.63$^*$ & \multicolumn{1}{c|}{15.67$^*$} & 50.71$^*$ & 15.22$^*$  \\  \hline
{+Mixup} & 62.30 & 33.41 & 50.38 & \multicolumn{1}{c|}{23.80} & \textbf{50.83} & 22.86\\
{+DiffAug} & {\textbf{65.44}} & {33.86} & {49.45} & \multicolumn{1}{c|}{{21.31} } & {\textbf{50.94}}  & {18.97}    \\
{+RandMask} & \textbf{63.76} & \underline{\textbf{23.82}} & 50.54 & \multicolumn{1}{c|}{15.74} & 50.64 & 15.33  \\
{+Add Noise} & \textbf{64.77$^\dagger$} & 35.47 & \textbf{50.94$^\dagger$} & \multicolumn{1}{c|}{34.90} & \textbf{50.80} & 33.84  \\ 
\hline
{+$ITA_T$+$L_{db}$} & \textbf{63.31} & \textbf{26.65} & {50.60} & \multicolumn{1}{c|}{\textbf{15.05}} & \textbf{50.77} & \textbf{13.67}\\
{+$ITA_C$+$L_{db}$} & {\textbf{63.97}} & {\textbf{25.82}} & \underline{\textbf{50.92}} & \multicolumn{1}{c|}{\underline{\textbf{14.71}}} & \underline{\textbf{50.98}} & \underline{\textbf{13.28}} \\
{+$ITA_C$+$L_r$} & \underline{\textbf{64.59}} & \textbf{22.70$^\dagger$} & {\textbf{50.81}} & \multicolumn{1}{c|}{\textbf{13.71$^\dagger$}} & \textbf{51.02$^\dagger$} & \textbf{12.49$^\dagger$} \\
\hline
\end{tabular}
%
\caption{CS$\uparrow$ and FID$\downarrow$ for AttnGAN, and DF-GAN with Mixup, Random Mask, Add Noise, and the proposed SADA components
on CUB and COCO.  *: Baseline results; \textbf{Bold}: Results better than the baseline; $^\dagger$: Best results; \underline{Underlines}: Second best results; `RM': Released Model; `e': epochs.
}
\label{tab:results}
\end{center}
\end{table}

\subsubsection{Qualitative Results}
As depicted in Figure~\ref{fig:ablation} and further examples in Supplementary Materials~\ref{app:More_Results}, we derived several key insights.

{1). Semantic collapse happens in the absence of a sufficient $GisC$:}
As seen in Figure~\ref{fig:ablation}, neither non-augmented nor other augmented methods fail to prevent semantic collapse in different backbones. The application of $GisC$ through SADA serves to alleviate this issue effectively. 
We also notice that semantic collapse is more severe when a complex description is given. Applying SADA alleviates the semantic collapse across all descriptions (More results shown in Section~\ref{app:Results: SADA on Complex Sentences and Simple Sentences}).

{2). $ITA$ preserves textual semantics:}
It shows that generated images of models wo/ $ITA$ on $e'_{s|r}$ still maintain the main semantics of $e_{s|r}$ though they have low quality, indicating the textual semantic preservation of $ITA$. 

\begin{figure}[t]
    \centering
    \includegraphics[width=\linewidth]{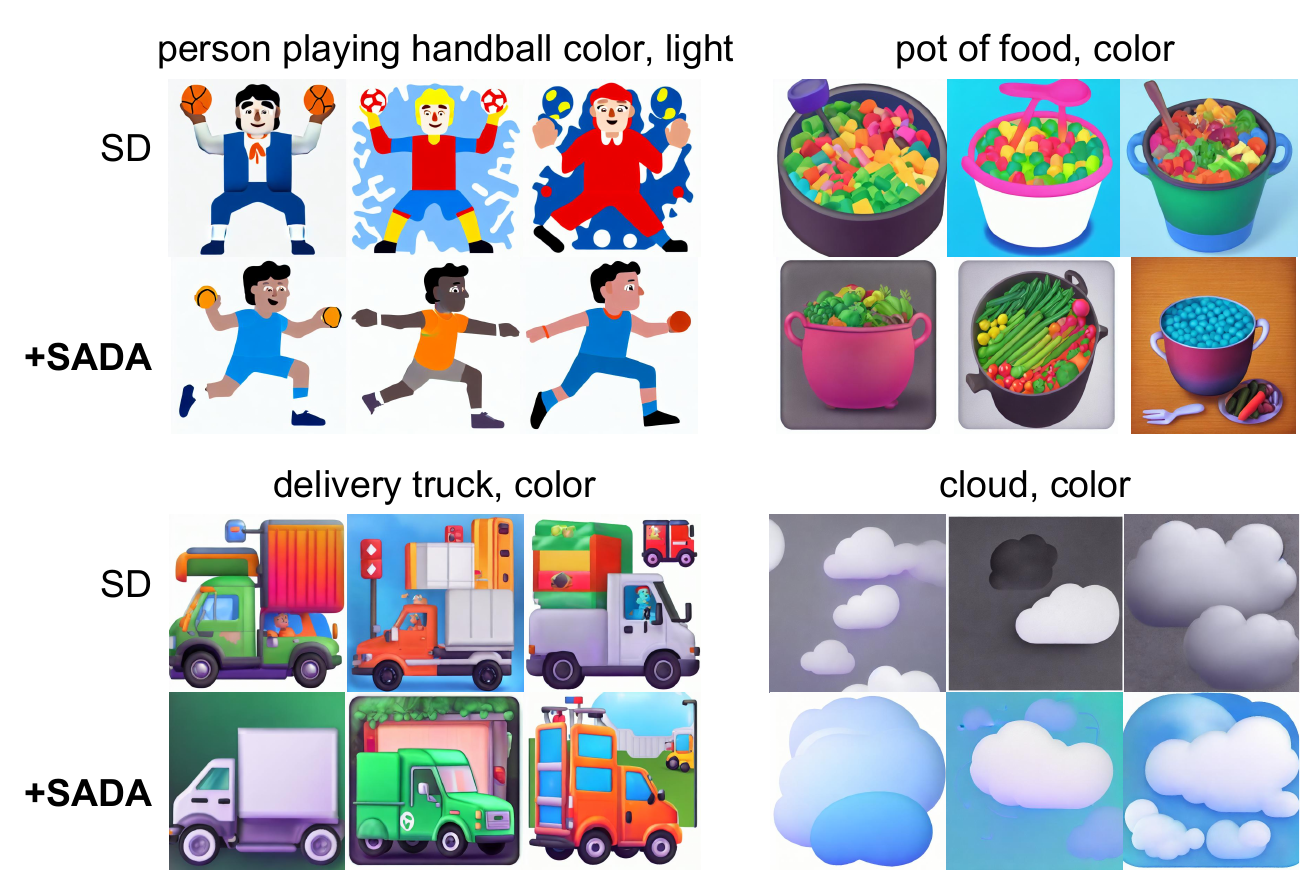}
    \caption{Generated examples of SD tuned on the Emoji dataset wo/ and w/ SADA. A significant improvement in diversity with $+ITA_C +L_r$ can be observed, especially in terms of skin color and view perspective. }
    \label{fig:sd_emoji}
\end{figure}

{3). SADA enhances generated image diversity:}
SADA appears to improve image diversity when input noise is not fixed significantly and $e_{s|r}$ of testing text is used. The greatest improvement in image diversity was achieved by $ITA_C\! +\! L_r$, as the detailed semantics of birds, are more varied than the other semantics. Textual unmentioned details such as skin colors as shown in Figure~\ref{fig:sd_emoji} is more various when using SADA.
Analysis of textual unmentioned details can be observed in Supplementary Materials Figure~\ref{fig:AttnCUB} (highlighting wing bars, color, background).


{4). 
$ITA$ with $GisC$ improves the model generalization by preventing semantic collapse:
Using $ITA_T + L_{db}$ and $ITA_C \!+ \! L_{db}/L_{r}$ lead to obvious image quality improvement when more $Noise$ is given, corresponding to our Proposition~\ref{prop:enhance_image} and Proposition~\ref{prop:lip}. 
However, with $ITA_C + L_{db}$, though the model can produce high-quality images, generated images on $e_{s|r}$ and $e'_{s|r}$ are quite similar while $ITA_C + L_r$ varies a lot, especially in the background, implying a not guaranteed semantic preservation of $L_{db}$ and a tighter constraint of $L_r$ as proved in Proposition~\ref{prop:r_better_db}. Furthermore, $ITA_C + L_r$ provides the best image quality across all experiments. 

\subsubsection{SADA on Complex Sentences and Simple Sentences}
\label{app:Results: SADA on Complex Sentences and Simple Sentences}
We explore the effect of SADA on complex sentences and simple sentences. We use textual embeddings of sentences in Table~\ref{tab:sent} and illustrate interpolation examples at the inference stage between $e_{s|r}$ and $e'_{s|r}$ as shown in Figure~\ref{fig:interpo} and Figure~\ref{fig:vs} right side, where $Noise \sim \mathcal{N}(0, \beta \cdot \mathbb{C}_{ss|r} \mathbb{I})$. 
It can be observed that models trained with SADA can alleviate the semantic collapse that occurs in models without SADA, and its semantics can resist even larger $Noise$ given. 
Using $e'_{s|r}$ at the inference stage can cause image quality degradation, which reveals the robustness of the models.


As shown in Figure~\ref{fig:vs}, on the left side, DF-GAN with SADA generates more text-consistent images with better quality from rough to precise descriptions compared to other augmentations. The Right side indicates that DF-GAN without augmentations experiences semantic collapse when larger $Noise$ is given. The semantic collapse is more severe when a complex description is given. Applying SADA alleviates the semantic collapse across all descriptions. The model with SADA can generate reasonably good and text-consistent images when the $1.5Noise$ with complex description is given. These visualizations further verified the effectiveness of our proposed SADA.

\begin{table}[t]
    \centering
    \begin{tabular}{c|l}
        \hline
        sent1 & this is a yellow bird with a tail. \\ \hline
        sent2 & \begin{tabular}[l]{@{}l@{}} this is a small yellow bird with a  tail \\ and gray wings with white stripes. \end{tabular} \\ \hline
        sent3 & 
        \begin{tabular}[l]{@{}l@{}} this is a small yellow bird with a \\ a grey long tail and gray wings with white stripes. \end{tabular} 
         \\ \hline
    \end{tabular}
    \caption{Rough, detailed, and in-between description used for generation. }
    \label{tab:sent}
\end{table}
\begin{figure}[t]
    \centering
    \includegraphics[width=\columnwidth]{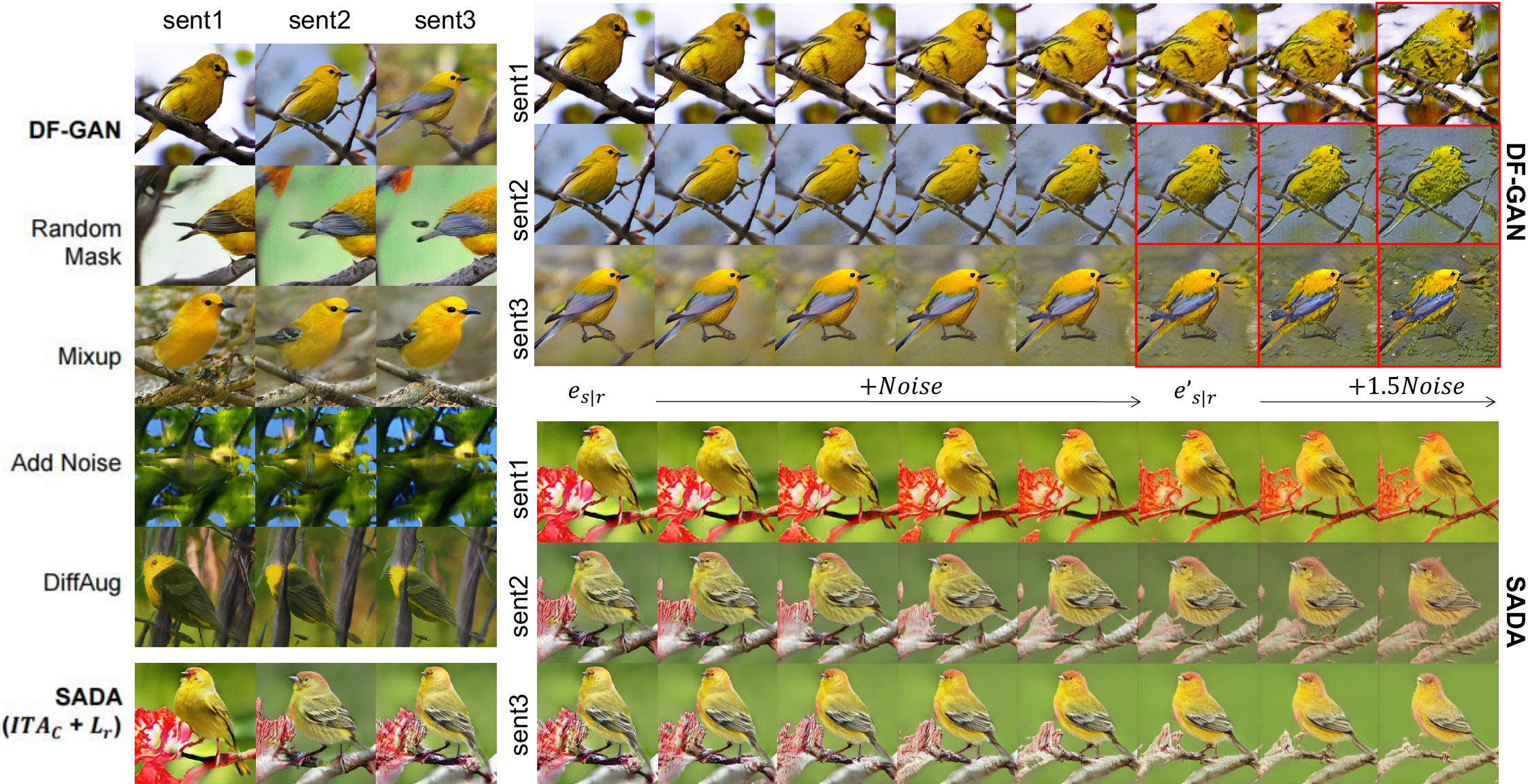}
    \caption{Left: Generated results of DF-GAN with different methods on rough to detailed sentences. Right: 
    Interpolation examples at the inference stage between $e_{r|s}$ and  $e_{r|s}'$ of DF-GAN and it with SADA on rough to detailed sentences. $e_{r|s}'$, input noise for generator $G$, and textual conditions are the same across all rows. Examples with significant collapse are highlighted by red.}
    \label{fig:vs}
\end{figure}





\section{Conclusion}

In this paper, we propose a Semantic-aware Data Augmentation framework (SADA) that consists of $ITA$ (including $ITA_T$ and $ITA_C$) and $L_r$.  We theoretically prove that using $ITA$ with T2Isyn models leads to text-image consistency improvement. We also show that using $GisC$ can improve generated image quality, and our proposed $ ITA_C + L_r$ promotes image quality the most.
ITA relies on estimating the covariance of semantic embeddings, which may, however, be unreliable in the case of unbalanced datasets. We will explore this topic in the future. 


\section*{Acknowledgments}
The work was partially supported by the following: National Natural Science Foundation of China under No. 92370119, No. 62376113, and No. 62206225; Jiangsu Science and Technology Program (Natural Science Foundation of Jiangsu Province) under No.  BE2020006-4;
Natural Science Foundation of the Jiangsu Higher Education Institutions of China under No. 22KJB520039.

\bibliography{ITA}

\newpage
~~~~~~~~~~~~~~~~
\newpage

\appendix

\section{More Mathematical Details}

Here, we provide more details for our derivations and proofs.

\subsection{Derivation Details of Training Objectives for $G$ with $ITA$ }
\label{app:ita_obj}
Based on empirical risk minimization (ERM), the empirical risk for generator $G$ is defined as: 
\begin{align}
    R_k(\theta):= \frac{1}{k} \sum_{i=1}^k L(\theta, X).
\end{align}
Its standard augmented version and corresponding augmented loss are defined as:
\begin{align}
    \hat{R}_k(\theta):= \frac{1}{k} \sum_{i=1}^k \int_\mathcal{A} L(\theta, f(X))d{Q}_{ITA}(f), 
\end{align}
where $\mathbf{Q}_{ITA}$ is a probability distribution on a group $\mathcal{A}$ of $ITA$ transforms from which $f$ is sampled. Since only one $ITA$ will be used, the general sample objective with $ITA$ is defined as:
\begin{align}
\label{eq:itatsolution}
    \min_\theta \Hat{R}_k(\theta):= \frac{1}{k} \sum_{i=1}^k  {L}(\theta, ITA(X_i)).
\end{align}
We then define the solution of Eq.~(\ref{eq:itatsolution}) as:
\begin{align}
    \theta_{ITA}^* \in arg \min_{\theta \in \Theta }\frac{1}{k} \sum_{i=1}^k  {L}(\theta, ITA(X_i)) ,
\end{align}
where $\Theta$ is defined as some parameter space.

\subsection{Proof Details}
\label{app:proof}
\begin{proposition}[$ITA$ increases T2Isyn semantic consistency] 
\label{appprop:semantic_improvement}
    Assume exact invariance holds.
    Consider an unaugmented text-image generator  $\Hat{\theta}(X)$ of $G$ and its augmented version $\Hat{\theta}_{ITA}$. For any real-valued convex loss $S(\theta, \cdot)$ that measures the semantic consistency, we have:
    \begin{align}
    \label{eq:consisteny}
        \mathbb{E} [S(\theta, \Hat{\theta}(X))] \ge \mathbb{E} [S(\theta, \Hat{\theta}_{ITA} (X))] ,
    \end{align}
    which means with $ITA$, a model can have lower $\mathbb{E} [S(\theta, \Hat{\theta}_{ITA} (X)]$ then a better text-image consistency.
\end{proposition}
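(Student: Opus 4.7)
The plan is to follow the Rao--Blackwell-style argument from the group-theoretic augmentation framework of \cite{chen2020group}. The key idea is that, under Assumption~\ref{ass:invariance}, the augmented estimator $\Hat{\theta}_{ITA}(X)$ can be represented as an average of $\Hat{\theta}$ over augmentations drawn from the invariance group, i.e.\ $\Hat{\theta}_{ITA}(X) = \mathbb{E}_{f \sim \mathbf{Q}_{ITA}}[\Hat{\theta}(f(X))]$, after which a single application of Jensen's inequality delivers both the risk inequality stated in Eq.~(\ref{eq:consisteny}) and the Loewner-order consequence on covariances that the author's sketch cites.

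Concretely, I would proceed in three steps. First, identify $\Hat{\theta}_{ITA}(X)$ with the orbit-averaged estimator; exact invariance gives $f(X) \stackrel{d}{=} X$ for every $f$ in the support of $\mathbf{Q}_{ITA}$, so each $\Hat{\theta}(f(X))$ shares the marginal law of $\Hat{\theta}(X)$ and in particular they share the same mean. Second, apply Jensen's inequality conditionally on $X$ to the convex loss $S(\theta,\cdot)$:
\begin{align*}
S\bigl(\theta, \Hat{\theta}_{ITA}(X)\bigr)
= S\bigl(\theta, \mathbb{E}_f[\Hat{\theta}(f(X))]\bigr)
\le \mathbb{E}_f\bigl[S(\theta, \Hat{\theta}(f(X)))\bigr].
\end{align*}
Taking the outer expectation over $X$ and collapsing the inner expectation by exact invariance (each $\Hat{\theta}(f(X))$ is distributed as $\Hat{\theta}(X)$) reduces the right-hand side to $\mathbb{E}[S(\theta, \Hat{\theta}(X))]$, which is the desired inequality. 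Third, to recover the covariance statement, I would invoke the law of total covariance: $\mathrm{Cov}[\Hat{\theta}(X)] = \mathrm{Cov}[\mathbb{E}_f[\Hat{\theta}(f(X))]] + \mathbb{E}[\mathrm{Cov}_f(\Hat{\theta}(f(X)))]$, and drop the positive semidefinite residual term to conclude $\mathrm{Cov}[\Hat{\theta}_{ITA}(X)] \preceq \mathrm{Cov}[\Hat{\theta}(X)]$.

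The main obstacle I expect lies in justifying the averaging representation $\Hat{\theta}_{ITA}(X) = \mathbb{E}_f[\Hat{\theta}(f(X))]$ rather than taking it as a definition: the ERM minimizer of an averaged loss is not generally equal to the average of per-augmentation minimizers, so one must either assume the loss $L$ is quadratic (or otherwise a proper scoring rule whose minimizer is a conditional mean), or appeal to the orbit-averaging results of \cite{chen2020group} that exploit the group structure on $\mathcal{A}$ together with the exact-invariance hypothesis to transport the average through the ERM operator. I would therefore devote most of the proof to stating the regularity conditions that make this identification rigorous, and treat the Jensen and covariance steps as brief closing observations. A secondary minor issue is ensuring that $S(\theta,\cdot)$ is convex in its second argument (rather than only in $\theta$), which I would simply fold into the hypotheses, consistent with the statement's phrasing ``for any real-valued convex loss $S(\theta, \cdot)$''.
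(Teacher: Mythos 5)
Your proposal is correct and follows essentially the same route as the paper's own (very terse) proof: both rest on the Rao--Blackwell / orbit-averaging argument of \cite{chen2020group}, identifying the augmented estimator with a conditional expectation of the unaugmented one over the invariance group. In fact your ordering is the more careful one --- you obtain the loss inequality directly from Jensen applied to the conditional-expectation representation of $\hat{\theta}_{ITA}$ and then recover the Loewner covariance bound as a corollary via the law of total covariance, whereas the paper's sketch cites the covariance bound first and asserts the convex-loss inequality ``therefore,'' an implication that does not hold for arbitrary convex $S$ and really runs in the opposite direction.
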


\begin{proof} 
   From Group-Theoretic Framework for Data Augmentation~\cite{chen2020group}, we obtain a direct consequence that:
   \begin{align}
     Cov [ \Hat{\theta}_{ITA}(X)] \preceq Cov [\Hat{\theta}(X)]\;,
   \end{align}
   where $Cov[\cdot]$ means the covariance matrix decreases in the Loewner order. 
   Therefore for any real-valued convex loss function $S(\theta, \cdot)
   $, we have
   Proposition~\ref{appprop:semantic_improvement} is proofed.

    Empirically, $S(\theta, \cdot)$ can be a real-valued convex loss produced by discriminators, perceptual semantic loss produced by pre-trained models, and others.   
    It also suggests that $ITA$ can be considered as an algorithmic regularization like other data augmentations, and augmented $G$ can obtain better text-image consistency. 
\end{proof}


\begin{proposition}
\label{appprop:close_form}
    $ITA_C$ can be considered a closed-form solution for general textual semantic preserving augmentation methods of T2Isyn. 
\end{proposition}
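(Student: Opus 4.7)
The plan is to argue that both auxiliary-model based augmentation methods, namely I2T2I and RiFeGAN, are in the well-trained limit implicitly drawing augmented samples from the same conditional distribution $p(e_s \mid e_r)$ that $ITA_C$ models in closed form. First I would formalise each competitor as a stochastic map $\Phi_\text{aux} : e_r \mapsto e_s'$. The captioning model of I2T2I is trained by maximum likelihood on paired $(r,s)$ data, so as its capacity and the dataset grow, $\Phi_\text{aux}(e_r)$ converges in distribution to $p^\star(e_s \mid e_r)$; RiFeGAN's attentional caption-matching head, when well-trained, rescores retrieved texts so that the final selection distribution matches the same conditional. Thus, at the well-trained optimum, all such methods share a common target: samples from $p^\star(e_s \mid e_r)$ that preserve the image-aligned semantics $m_{s|r}$.

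Next I would derive the closed form explicitly. Under the Gaussian modelling assumption already used by $ITA_C$, namely that the joint semantic embedding $(e_s, e_r)$ is (locally) Gaussian with covariance blocks $\mathbb{C}_{ss}, \mathbb{C}_{sr}, \mathbb{C}_{rr}$, the conditional $p^\star(e_s \mid e_r)$ is exactly $\mathcal{N}(e_{s|r}, \mathbb{C}_{ss|r})$ with $\mathbb{C}_{ss|r} = \mathbb{C}_{ss} - \mathbb{C}_{sr}\mathbb{C}_{rr}^{-1}\mathbb{C}_{rs}$, which coincides with Eq.~(\ref{eq:cond_ss_r}). The $ITA_C$ sampler in Eq.~(\ref{eq:esaug}) is precisely $e_s' \sim \mathcal{N}(e_{s|r}, \beta \cdot \mathbb{C}_{ss|r}\mathbb{I})$, i.e.\ the same Gaussian up to the bandwidth scaling $\beta$ and a diagonalisation that controls augmentation strength. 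Hence, when the auxiliary models reach the well-trained optimum, their empirical sample distribution coincides with what $ITA_C$ evaluates analytically from the training-set second moments, establishing the claimed equivalence.

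The main obstacle I anticipate is justifying the Gaussian assumption and the convergence of the auxiliary models to $p^\star$. I would handle the former by appealing to the fact that the conditional Gaussian is the minimum-mean-squared-error affine predictor of $e_s$ from $e_r$ regardless of the true joint law, so $ITA_C$ captures exactly the second-order dependence that any well-trained auxiliary model using only paired $(e_s, e_r)$ supervision can reliably recover. For the convergence point I would invoke the standard consistency argument for maximum-likelihood captioners and attentional retrievers under exact invariance (Assumption~\ref{ass:invariance}) together with Assumption~\ref{ass:aug}, which together guarantee that the augmented distribution that preserves the main semantics is unique up to the conditional law $p^\star(e_s \mid e_r)$. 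The remaining detail, relegated to the supplementary material, is to quantify the approximation gap when either the Gaussian assumption or the well-trained condition fails, which only affects higher-order moments not exploited by current T2Isyn backbones.
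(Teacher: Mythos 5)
Your proposal is correct and follows essentially the same route as the paper: it splits the competitors into caption-from-image methods (I2T2I) and text-retrieval/refinement methods (RiFeGAN), argues that in the well-trained limit both sample from the conditional law $p(e_s\mid e_r)$ under exact invariance, and observes that $ITA_C$ evaluates exactly this conditional $\mathcal{N}(e_{s|r},\,\beta\cdot\mathbb{C}_{ss|r}\mathbb{I})$ in closed form via the Schur-complement covariance. Your additional framing of the Gaussian assumption through the MMSE-affine-predictor property and of auxiliary-model convergence through MLE consistency is a welcome sharpening of the same argument rather than a different approach.
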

\begin{proof} 
   Assume exact invariance holds. Captions offered in the dataset are based on real images, thus: $ e_{s|r} \triangleq e_s$. Assume all the models that are mentioned in the following are well-trained.  
   We consider two situations:
   \begin{enumerate}
       \item For methods that use extra models to generate more textual data based on real images $r$ (such as I2T2I~\cite{dong2017i2t2i}, which uses a pre-trained captioning model), we have:
            \begin{align}
            &e_{s|r} \sim \mathcal{N}(m_r, \mathbb{C}_{ss|r}\mathbb{I}) = {Q}_{s|r}, \\
            &e_{s|r}' \sim \mathcal{N}(m_r, \mathbb{C'}_{ss|r}\mathbb{I})  = {Q'}_{s|r}.
            \end{align}
        When the extra models are trained on the dataset used for T2Isyn, exact invariance holds. We have:
           \begin{align}
               &{Q}_{s|r} =_d {Q'}_{s|r}, \\
               &e_{s|r}' \sim \mathcal{N}(m_r, \mathbb{C}_{ss|r}\mathbb{I}).
           \end{align}
        \item Consider methods that use extra models that generate synonymous texts based on real texts (such as retrieving texts from the dataset and refining the conflicts like RiFeGan~\cite{cheng2020rifegan}, using extra pre-trained synonymous text generating model, and our proposed $ITA_T$). Assume exact invariance holds. Captions offered in the dataset are based on real images, thus: $ e_{s|r} \triangleq e_s, e_{s|r} \sim Q_{ss|r} $. 
        Augmented texts $e_s'$ are retrieved from the dataset and refine the semantic conflicts between $e_{s|r}'$ and $e_{s|r}$ based on the main semantics of real images $r$. 
       Therefore:
       \begin{align}
            e_{s|r} \sim N(m_r, \mathbb{C}_{ss|r}\mathbb{I}) = {Q}_{s|r}, \\
           e_{s|r}' \sim N(m_r, \mathbb{C}_{ss|r}\mathbb{I}) ={Q}_{s|r}.
       \end{align}
       \end{enumerate}
              Due to $e_{s|r}$ depending on the semantics of $r$,  $e_{s|r}$  should maintain the main semantics of $r$. Therefore we have:
           \begin{align}
               &m_{s|r} \approx m_{r}, \\
                &e_{s|r}' \sim N(m_{s|r},
                \mathbb{C}_{ss|r}\mathbb{I}),
           \end{align}
           where $ITA_C$ is a closed-form solution.
    Therefore, $ITA_C$ can be considered a closed-form solution for general textual semantic preserving augmentation methods of T2Isyn.
\end{proof}

\begin{proposition}
\label{appprop:enhance_image}
Assume that $E_I$ is linear. Constraining the distribution $Q_\mathcal{E}$ of $e_{f|s}$ can additionally constrain the distribution $\mathcal{F}$ of $f$. 
\end{proposition}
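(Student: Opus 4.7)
The plan is to follow the two-case decomposition sketched by the authors but to make the link between constraints in semantic space and constraints in raw space fully rigorous. Since $E_I$ is linear, I would first write $E_I(f) = A f$ for a matrix $A$, so that the distribution $Q_\mathcal{E}$ of $e_{f|s}$ is exactly the pushforward $A_{\#} Q_\mathcal{F}$ of the raw image distribution. Imposing a constraint on $Q_\mathcal{E}$ thus restricts the admissible set of pushforward measures, and the task reduces to showing that this restriction is nontrivial on $Q_\mathcal{F}$ itself.

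In the first case, assume $A$ is invertible on the linear hull of $\mathrm{supp}\, Q_\mathcal{F}$. Then for any measurable set $B$ in image space, $Q_\mathcal{F}(B) = Q_\mathcal{E}(A B)$, so the correspondence between $Q_\mathcal{F}$ and $Q_\mathcal{E}$ is bijective and every constraint pulled back through $A^{-1}$ yields a matching constraint on $Q_\mathcal{F}$ (support, moments, covariance, etc.). This case is essentially a change of variables and will be dispatched in one line.

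In the second case, $A$ has a nontrivial nullspace $N = \mathrm{Null}(A)$, and I would decompose the raw image space as $N \oplus N^{\perp}$, writing each $f = f_N + f_{\perp}$. The restriction $A|_{N^{\perp}}$ is injective, so the pushforward already determines the $N^{\perp}$-marginal of $Q_\mathcal{F}$ exactly. The pivotal step is to rule out the degenerate situation where $Q_\mathcal{F}$ is supported entirely in $N$, which would make $Q_\mathcal{E}$ a point mass at zero and render any constraint vacuous. I would formalize this by appealing to the assumption that $E_I$ is well-trained: a calibrated encoder produces informative (non-constant) embeddings on in-distribution inputs, and in particular $A(\mathrm{supp}\, Q_\mathcal{F})$ cannot collapse to a single point. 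Granted this nondegeneracy, constraining $Q_\mathcal{E}$ constrains the $N^{\perp}$-marginal of $Q_\mathcal{F}$, hence $Q_\mathcal{F}$ as a whole up to the $N$-component, which by construction carries no semantic content.

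The main obstacle will be making the \emph{well-trained} hypothesis mathematically precise rather than rhetorical. I would phrase it as an auxiliary assumption of the form $\mathrm{supp}\, Q_\mathcal{F} \not\subseteq N$, and explicitly note that without such an assumption the conclusion fails (a trivially zero encoder is a counterexample). With this assumption in place, the remaining argument is standard linear algebra on pushforward measures. For the nonlinear encoders actually used in practice (DAMSM, CLIP), I would indicate that a Jacobian/local linearization argument extends the reasoning on sufficiently smooth submanifolds, but as the authors explicitly note, a full nonlinear proof is beyond the scope and the empirical results in their experiments are intended to corroborate it.
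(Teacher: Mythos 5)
Your proposal takes the same approach as the paper's proof: a two-case split on whether $E_I$ is invertible, with the degenerate case $\mathrm{supp}\, Q_\mathcal{F} \subseteq \mathrm{Null}(E_I)$ ruled out by appealing to the well-trained hypothesis. The only difference is one of rigor --- you explicitly decompose image space as $N \oplus N^{\perp}$, work with the pushforward measure $A_{\#}Q_\mathcal{F}$, and phrase the nondegeneracy condition precisely (noting the zero-encoder counterexample), whereas the paper leaves these steps informal.
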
 

\begin{proof}
    There are two situations:
    \begin{enumerate}
        \item If $E_I$ is inevitable, Proposition~\ref{appprop:enhance_image} is obvious.
        \item If $E_I$ is not inevitable, 
        constraining $\mathcal{F}$ can affect $\mathcal{E}$ in not Nullspace of $E_I$:$\neg Null(E_I)$:
        \begin{align}
            \mathcal{C}(\mathcal{E}) \propto \mathcal{C}(\neg Null(E_I)(\mathcal{F})),
        \end{align}
        where $\mathcal{C}(\cdot)$ is a certain constraint. 
        For Nullspace, there will be no effect.
        If not all the mass of $\mathcal{F}$ locates in the $Null(E_I)$, Proposition~\ref{appprop:enhance_image} holds. 
        If $\mathcal{F}$ all locates in the $Null(E_I)$ while $E_I$ is well trained, it means $\mathcal{F}$ does not contain any semantics that matches textual semantics, inferring a total collapse of $G$. Since we assume the $G$ can learn the representation, it is impossible that  $\mathcal{F}$ all locates in the $Null(E_I)$.  
    \end{enumerate}
    Therefore, Proposition~\ref{appprop:enhance_image} holds.
\end{proof}


\begin{proposition}
\label{appprop:lip}
$L_r$ leads to $| e_{f|s}'-e_{f|s} |$ is less than or equal to a sequence $\Lambda$ of positive constants, further constrains the semantic manifold of generated embeddings to meet the Lipschitz condition.
\end{proposition}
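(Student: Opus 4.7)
The plan is to start from the definition of $L_r$ in Eq.~(\ref{eq:R}) and show that driving this loss small forces a uniform bound on the image-side semantic shift, and then to couple this with the closed-form textual shift of $ITA_C$ to obtain a Lipschitz estimate. Concretely, when $L_r$ is minimized, the inner term $(e_{f|s}'' - e_{f|s}) - \epsilon^* \odot \beta \cdot d(\mathbb{C}_{rr|s})$ is forced toward zero, so $(e_{f|s}'' - e_{f|s})$ is pinned near the fixed vector $\epsilon^* \odot \beta \cdot d(\mathbb{C}_{rr|s})$. Since $\beta$ is a fixed positive hyperparameter, $\mathbb{C}_{rr|s}$ is determined by the dataset, and the entries of $\epsilon^*$ are confined to $\{-1,1\}$, the right-hand side is bounded componentwise by a finite positive sequence $\Lambda$, yielding $| e_{f|s}'' - e_{f|s} | \le \Lambda$. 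This establishes the first half of the statement.

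Next, I would divide this image-side bound by the textual-side magnitude. By the construction of $ITA_C$ in Eq.~(\ref{eq:esaug}), we already have $e_{s|r}'' - e_{s|r} = \epsilon^* \odot \beta \cdot d(\mathbb{C}_{ss|r})$, which is nonzero whenever $\epsilon^* \neq 0$ and $\mathbb{C}_{ss|r}$ is nondegenerate. Dividing gives
\begin{equation*}
\frac{| e_{f|s}'' - e_{f|s} |}{| e_{s|r}'' - e_{s|r} |} \;\le\; K,
\end{equation*}
where $K$ is a finite constant controlled by the componentwise ratio of $d(\mathbb{C}_{rr|s})$ to $d(\mathbb{C}_{ss|r})$. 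To conclude the Lipschitz condition on the reachable semantic manifold, I would invoke Assumption~\ref{ass:shifts} (proportionality of image and text shifts) to extend the estimate from the discrete-sign pair $(e_{s|r}'', e_{f|s}'')$ to the continuous family $(e_{s|r}', e_{f|s}')$ produced by arbitrary $\epsilon \sim U(-1,1)$, completing the bound on $G$ composed with the image encoder.

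The principal obstacle I anticipate is the passage from the discrete, sign-based constraint encoded by $\epsilon^*$ to a genuine Lipschitz bound over a continuum of textual perturbations: $L_r$ only enforces the identity at the specific sampled vertices of the sign hypercube, whereas the Lipschitz property must hold for all nearby pairs. Bridging this gap cleanly likely requires combining the proportionality of Assumption~\ref{ass:shifts} with the well-trained-and-linear hypothesis used in Proposition~\ref{prop:enhance_image}, so that the vertex bound extends by a convex-hull or scaling argument to the full range of textual perturbations. Formalizing this extension, and carefully handling the degenerate case $e_{s|r}' = e_{s|r}$ so that the quotient remains well-defined, is the delicate step; once it is in hand the Lipschitz conclusion follows directly from the two bounds derived above.
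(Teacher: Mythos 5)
Your plan follows essentially the same route as the paper's own proof: bound $|e_{f|s}'' - e_{f|s}|$ by the fixed vector $\epsilon^* \odot \beta \cdot d(\mathbb{C}_{rr|s})$ via $L_r \to 0$, divide by the closed-form textual shift $e_{s|r}'' - e_{s|r} = \epsilon^* \odot \beta \cdot d(\mathbb{C}_{ss|r})$ to obtain a constant ratio $M$, and then invoke Assumption~\ref{ass:shifts} (linearized proportionality) to carry that vertex-level bound over to the continuous family indexed by $\epsilon \sim U(-1,1)$, excluding the degenerate case $e_{s|r}' = e_{s|r}$. The paper handles the discrete-to-continuous passage by exactly the scaling argument you anticipate (a proportionality coefficient $\delta$ absorbed into a global constant $K$), so the obstacle you flag is resolved in the same way you propose rather than needing a separate convex-hull lemma.
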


\begin{proof}
From $L_r = \varphi \cdot || \; ( e_{f|s}'' - e_{f|s}) - {\epsilon^*} \odot \beta \cdot d(\mathbb{C}_{rr|s}) || ^2 \;,$
 we have following constrain for $e_{f|s}'$ and $e_{f|s}$:
\begin{align}
   | e_{f|s}' - e_{f|s} |  \leq | e_{f|s}'' - e_{f|s} |  = |{\epsilon^*}| \odot \beta  \cdot d(\mathbb{C}_{rr|s})\; .   
\end{align}
For each dimension of semantic embeddings, we have:
\begin{align}
\label{eq:lip_2}
   | {e_{f|s}' } ^ d- {e_{f|s}} ^ d |  &=  \beta  \cdot \mathbb{E}[ ( {e_{s|r}'} ^ d - {e_{s|r}} ^ d )^2 ] 
    \;, \\  \notag
     &\leq \beta  \cdot \max [ ({e_{s|r}'} ^ d - {e_{s|r}} ^ d)^2  ] \; \\  \notag
     & =  \beta  \cdot  [ ( {e_{s|r}''} ^ d - {e_{s|r}} ^ d)^2  ] \;, 
     \\  \notag
     & = |{\epsilon^*}^d| \cdot \beta  \cdot d(\mathbb{C}_{rr|s}) ^ d 
     \\  \notag
     & = \beta  \cdot d(\mathbb{C}_{rr|s}) ^ d ,  \\ 
      | {e_{f|s}' } ^ d- {e_{f|s}} ^ d |  & \leq  =  \beta  \cdot d(\mathbb{C}_{rr|s}) ^ d , 
\end{align}
where $d = \{1,...,n\}$ and $n$ is the dimension of the semantic embedding; $d(.)$ represents diagonal part of a matrix; $\beta$ is a positive constant.
Due to the fact of the many-to-many relationship between texts and images,  we have $d(\mathbb{C}_{rr|s}) ^ d > 0$. Assume exact invariance holds,  $|{\epsilon^*}^d| = 1$; $ \beta  \cdot d(\mathbb{C}_{rr|s}) ^ d > 0 $ is a constant.
Thus:
\begin{align}
\label{eq:lip_3}
   | {e_{f|s}' } - {e_{f|s}} |  \leq \Lambda.
\end{align}

If we use $e_{s|r}''$ to generate images, we can alter Eq.~(\ref{eq:lip_3}) to:
\begin{align}
\label{eq:lip_4}
   | {e_{f|s}'' } - {e_{f|s}} |  = \Lambda.
\end{align}

Similar to Eq.~\ref{eq:lip_4}, we can have:
\begin{align}
    | {e_{s|r}'' } - {e_{s|r}} | = \Lambda_s,
\end{align}
where $\Lambda_s$ is also a sequence of positive constants.
Then we have:
\begin{align}
    \frac{| {e_{f|s}'' } - {e_{f|s}} |}{| {e_{s|r}'' } - {e_{s|r}} |}
    = \frac{\Lambda}{\Lambda_s} = M.
\end{align}
Due to the findings that semantic features in deep feature space are usually linearized~\cite{bengio2013better,upchurch2017deep,wang2021regularizing}, 
we assume semantic features for texts and images are linearized. Following: 
 \begin{align}
    \label{eq:shift}
        ( e_{f|s}' - e_{f|s}  ) \propto ( e_{s|r}' - e_{s|r} ).
\end{align}
we can further have that:
\begin{align}
 |\delta| \frac{| {e_{f|s}' } - {e_{f|s}} |}{| {e_{s|r}' } - {e_{s|r}} |} = 
    \frac{| {e_{f|s}'' } - {e_{f|s}} |}{| {e_{s|r}'' } - {e_{s|r}} |}
    = \frac{M}{|\delta| } \le K, \; \text{s.t.} \; e_{s|r}' \neq e_{s|r} ,
\end{align}
where $\delta$ is a non-zero coefficient. 
Finally, $e_{f|s}' = E_I(G(e_{s|r}')), e_{f|s} = E_I(G(e_{s|r}))$ where $E_I$ is the image encoder, we have:
\begin{align}
\frac{| { E_I(G(e_{s|r}')) } - E_I(G({e_{s|r}})) |}{| {e_{s|r}' } - {e_{s|r}} |} \leq K, \; \text{s.t.} \; e_{s|r}' \neq e_{s|r} ,
\end{align}
where it meets Lipschitz condition. 
\end{proof}

\begin{proposition}
\label{appprop:r_better_db}
 $L_r$ provides tighter image semantic constraints than ${L}_{db}$ \cite{gal2022stylegan} which is defined as:
\begin{align}
  {L}_{db} = 1 - \frac{( e_{s|r}'-e_{s|r} ) \cdot (e_{f|s}'- e_{f|s}) }{||( e_{s|r}'-e_{s|r} )||^2 \cdot ||(e_{f|s}'- e_{f|s}) ||^2} \;, 
\end{align}
\end{proposition}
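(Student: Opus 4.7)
The plan is to analyze the zero sets of the two losses separately and show that $L_{db}=0$ admits solutions with $|e_{f|s}''-e_{f|s}|$ arbitrarily close to $0$, while $L_r=0$ forces a strictly positive, prescribed magnitude. First I would inspect $L_{db}$ as defined just above the proposition: setting $L_{db}=0$ is equivalent to requiring that the cosine of the angle between $e_{s|r}'-e_{s|r}$ and $e_{f|s}'-e_{f|s}$ equals $1$, i.e.\ only the \emph{direction} of the image-semantic shift is pinned down. Substituting $e_{s|r}''$ for $e_{s|r}'$ and using the closed form of $ITA_C$ in Eq.~(\ref{eq:esaug}), the magnitude $|e_{f|s}''-e_{f|s}|$ remains an unconstrained nonnegative scalar under $L_{db}=0$, so I can only conclude $|e_{f|s}''-e_{f|s}|\ge 0$ with nothing ruling out the degenerate case of exact equality. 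This is exactly the ``extremely similar'' flavor of semantic collapse the proposition targets.

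Next I would examine $L_r$ from Eq.~(\ref{eq:R}). Setting $L_r=0$ yields the pointwise equality $e_{f|s}''-e_{f|s}=\epsilon^*\odot\beta\cdot d(\mathbb{C}_{rr|s})$. Each coordinate of $\epsilon^*$ is $\pm 1$ by construction and $\beta>0$, so the only remaining question is whether the diagonal entries of $\mathbb{C}_{rr|s}$ are strictly positive. Using the conditional-covariance identity in Eq.~(\ref{eq:ffs_rrs}) together with the many-to-many nature of text-image correspondence (no single caption determines a unique image), I would argue that $d(\mathbb{C}_{rr|s})$ is coordinatewise strictly positive, which forces $|e_{f|s}''-e_{f|s}|>0$ and rules out the collapse $e_{f|s}''=e_{f|s}$.

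For the tightness comparison, I would conclude by noting that $L_r=0$ simultaneously fixes both the \emph{direction} (through the sign pattern $\epsilon^*$) and the \emph{magnitude} (through $\beta\cdot d(\mathbb{C}_{rr|s})$) of the image-semantic shift, whereas $L_{db}=0$ fixes only the direction. Hence the feasible set of pairs $(e_{f|s}'',e_{f|s})$ compatible with $L_r=0$ is a proper subset of that compatible with $L_{db}=0$, giving the claimed tightness. This picture matches the geometric cartoon in Figure~\ref{fig:r_expalin}~(c)(d), where $L_{db}$ permits a full ray (including the origin) while $L_r$ singles out a prescribed point on it.

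The main obstacle is the clean justification that $d(\mathbb{C}_{rr|s})\succ 0$: this is where the nondegeneracy of the conditional image distribution given text is used, and without it one could in principle have a direction in which $L_r=0$ still permits a zero shift. I would address it by appealing to Assumption~\ref{ass:shifts} together with the empirical fact that $\mathbb{C}_{rr}-\mathbb{C}_{rs}\mathbb{C}_{ss}^{-1}\mathbb{C}_{sr}$ is positive definite on any realistic T2Isyn dataset, and then remark that the remaining direction-plus-magnitude comparison is a purely algebraic consequence of the two loss definitions.
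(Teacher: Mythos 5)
Your proposal is correct and takes essentially the same approach as the paper: both analyze the zero sets of the two losses, showing that $L_{db}=0$ constrains only the shift \emph{direction} (leaving $|e_{f|s}''-e_{f|s}|\ge 0$ with no lower bound, so the degenerate $=0$ case is not excluded), while $L_r=0$ pins down both direction (via $\epsilon^*$) and magnitude (forcing $|e_{f|s}''-e_{f|s}|=\beta\cdot d(\mathbb{C}_{rr|s})>0$). Your explicit attention to why $d(\mathbb{C}_{rr|s})\succ 0$ is a minor strengthening; the paper makes the same appeal to the many-to-many text-image relationship but buries it in the proof of the preceding Lipschitz proposition rather than re-stating it here.
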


\begin{proof}
For Eq.~(\ref{eq:Lb}),
assume $ {L}_{db} = 0$ and use $\epsilon^*$, combining with Eq.~(\ref{eq:esaug}):
\begin{align}
    e_{s|r}' = e_{s|r}' \sim \phi = e_{s|r} + z =e_{s|r} + \epsilon \odot \beta \cdot  \mathbb{C}_{ss|r} \mathbb{I},
\end{align}

we have:
\begin{align} 
    \frac{( e_{f|s}''- e_{f|s} ) } {|| e_{f|s}''- e_{f|s} ||^2} & = \frac { || e_{s|r}''- e_{s|r} ||^2} {( e_{s|r}''-e_{s|r} )}  \\
    & =\frac{||\beta  \cdot \epsilon^* \odot d(\mathbb{C}_{ss|r})||^2}{ \beta  \cdot \epsilon^* \odot d(\mathbb{C}_{ss|r})} 
     \;  .
\end{align}
Therefore:
\begin{align} 
       | e_{f|s}''-e_{f|s} | =   
       || e_{f|s}''- e_{f|s}||^2 \cdot  \frac{||\beta  \cdot \epsilon^* \odot d(\mathbb{C}_{ss|r})||^2} {  |\beta  \cdot \epsilon^* \odot d(\mathbb{C}_{ss|r}) | } \ge 0 \;.  
\end{align}
where preservation of semantic collapse is not guaranteed due to the distance between $e_{f|s}'$ and $e_{f|s}$ is not contained. This infers that when two slightly semantic distinct textual embeddings are given, the generated images' semantics can also be the same.

Assume $L_r = 0$, we have:
\begin{align}
       | e_{f|s}'' - e_{f|s} | &= |{\epsilon^*}| \odot \beta  \cdot d(\mathbb{C}_{rr|s})\; \\
       & = \beta  \cdot d(\mathbb{C}_{rr|s}) \\
       & > 0 \;,
\end{align}
where provides tighter constraints than $L_{db}$. 
\end{proof}

\section{Algorithms of Applying SADA}
\label{app:algorithms}
The algorithms of SADA can refer to Algorithm~\ref{alg:itac_alg}~and~\ref{alg:itat_alg}.

\begin{algorithm}[t]
\caption{$ITA_C$ algorithm w/ and wo/ $L_r$ in one epoch. Important differences are highlighted as \blue{blue}. Cal. is short for Calculate.} \label{alg:itac_alg}
\begin{algorithmic}[1]
\Require $G$ with parameter $\theta$ for optimization, paired image text encoders $E_I, E_T$, hyperparameters $\beta, \varphi, lr$. Calculated $\mathbb{C}_{ss|r}, \mathbb{C}_{rr|s}$. \hfill $\triangleright$ See Eq.(9)(13)

\For{$\hat{X} = (r,s) \sim \hat{\mathcal{X}}$}
    \State $e_{r|s} \gets E_{I}(r)$, $e_{s|r} \gets E_{T}(s)$
    \State $f \gets G_{\theta}(e_{s|r})$,  $e_{f|s}\gets E_{I}(f)$
    \If { not use $L_{r}$}
        \State $e_{s|r}' \gets e_{s|r} + \blue{\epsilon} \odot \beta \cdot \mathbb{C}_{ss|r}\mathbb{I}, \;  \blue{\epsilon \sim U(-1,1)}$  \hfill $\triangleright$ See Eq.(10)
        \State $f' \gets G_{\theta}(e'_{s|r})$,  $e'_{f|s} \gets E_{I}(f')$
        \State Cal. $L_{ori}$, $L_S$ by using $s, r, f, f', e_{s|r},e'_{s|r}$
        \hfill $\triangleright$ See Eq.(6)
        \State $\theta \gets \theta - lr \cdot \bigtriangledown [L_{ori}+L_S]$
    \ElsIf{use $L_{r}$}
        \State $e_{s|r}'' \gets e_{s|r} + \blue{\epsilon^*} \odot \beta \cdot \mathbb{C}_{ss|r}\mathbb{I}, \; \blue{\epsilon^* \sim \{-1,1\}}$  \hfill $\triangleright$ See Eq.(10)
        \State $f'' \gets G_{\theta}(e_{s|r}'')$, $e_{f|s}'' \gets E_{I}(f'')$
        \State Cal. $L_{ori}$, $L_S$  
        by using $s, r, f, f'', e_{s|r},e''_{s|r}$
        \hfill $\triangleright$ See Eq.(6)
        \State \blue{Cal. $ L_r \! = \!\varphi \! \cdot \! || \; ( e_{f|s}'' \!- \!e_{f|s}) \! -  \!{\epsilon^*} \odot \beta \cdot d(\mathbb{C}_{rr|s}) || ^2 $ $\triangleright$ See Eq.(18)}
        \State $\theta \gets \theta - lr \cdot \bigtriangledown  [L_{ori}+ L_S + \blue{L_r}]$
    \EndIf 
\EndFor
\end{algorithmic}
\end{algorithm}

\begin{algorithm}[h]
\caption{$ITA_T$ algorithm in one epoch. Cal. is short for Calculate.} \label{alg:itat_alg}
\begin{algorithmic}[1]
\Require $G$, $ITA_T$ with parameter $\theta
$, $\alpha$ for optimization, respectively; paired image text encoders $E_I, E_T$, hyperparameters $r$. Calculated $\mathbb{C}_{ss|r}$. \hfill $\triangleright$ See Eq.(9)(13)

\For{$\hat{X} = (r,s) \sim \hat{\mathcal{X}}$}
    \State $e_{r|s} \gets E_{I}(r)$, $e_{s|r} \gets E_{T}(s)$
    \State $f \gets G_{\theta}(e_{s|r})$,  $e_{f|s}\gets E_{I}(f)$
    \State $e_{s|r}' = ITA_T(e_{s|r})$
    \State $f' \gets G_{\theta}(e'_{s|r})$,  $e'_{f|s} \gets E_{I}(f')$
    \State Cal. $L_{ori}$, $L_S$ by using $s, r, f, f', e_{s|r},e'_{s|r}$ \hfill $\triangleright$ See Eq.(6)
    \State $\theta \gets \theta - lr \cdot \bigtriangledown [L_{ori}+L_S]$
    \State Cal. $L_{ITA_T}$ \hfill $\triangleright$ See Eq.(11)
    \State $\alpha \gets\alpha -  lr \cdot \bigtriangledown [L_{ITA_T}]$
\EndFor
\end{algorithmic}
\end{algorithm}


\section{More Experimental Details}
\label{app:More_Experimental_Details}
This section includes implementations of $L_S$ and ${L}_{ITA_T}$  with different backbones.  Parameter settings follow the original models (including augmentations they used) of each framework for all experiments unless specified. For training settings, we train the model from scratch and also use their released model for tuning experiments. 
Notice that we do not conduct $ITA_T$ with $L_r$ because $e_{s|r}'- e_{s|r} \le \epsilon \odot \beta \cdot d(\mathbb{C}_{ff|s})$ is required for $L_r$. 
For all frameworks, we use their original losses $L(\theta, X)$ and $L(\theta, X')$ with $GisC$: $L_{db}$ or $L_r$. See specified parameter settings in Table~\ref{tab:parameters}.
We then demonstrate detailed implementations for tested frameworks. Note that since $ITA_C$ needs no more training, thus model with $ITA_C$ requires no more implementation of  $L_S$   and ${L}_{ITA_T}$.



\subsection{Obtaining $ITA_C$ and $L_r$}

$ITA_C$ and $L_r$ are based on $\mathbb{C}_{ss|r}$ and $\mathbb{C}_{rr|s}$ defined as:
\begin{align}
         \mathbb{C}_{ss|r} = \mathbb{C}_{ss} - \mathbb{C}_{sr}\mathbb{C}_{rr}^{-1}\mathbb{C}_{rs} , \\
        \mathbb{C}_{rr|s} = \mathbb{C}_{rr} - \mathbb{C}_{rs}\mathbb{C}_{ss}^{-1}\mathbb{C}_{sr} \;.
\end{align}
We only used 30K random samples from CUB and COCO training sets, respectively, to obtain $\mathbb{C}_{ss|r}$ and $\mathbb{C}_{rr|s}$ for our experiments. The number of samples follows it of calculating FID~\cite{heusel2017gans}. It is rational to scale the number of samples up according to the size of the dataset. Nevertheless, we do not recommend using the whole training set for the calculation due to its memory consumption. Our calculated $\mathbb{C}_{ss|r}$ and $\mathbb{C}_{rr|s}$ will be released with our code.

\subsection{Applying SADA to GAN-based Methods}

Note that the discriminators are retrained during the tuning process for AttnGAN and DF-GAN since no released checkpoints are available, and we only tune the transformer part of experimental settings, which are specified in paper Table~1. 

\subsubsection{Applying $ITA_T$ to DF-GAN} 
\label{Supplementary:implementation_DF}
DF-GAN~\cite{tao2022df} is a currently proposed one-way output T2Isyn backbone. 
For ${L}_{DF_D} = (\theta_{DF}, \cdot)$ of DF-GAN's Discriminator $ D_{DF}$, we use it as $L_S$ for DF-GAN: 
\begin{align}
    L_{S-DF_D} =  {L}_{DF_D}(\theta_{DF_D},   (e_{s|r},    \mathcal{G}_{DF}(e_{s|r})) + \notag \\  {L}_{DF_D}(\theta_{DF_D},  (e_{s|r}',    \mathcal{G}_{DF}(e_{s|r}'))  + \notag \\ {L}_{DF_D}(\theta_{DF_D},  (e_{s|r},    \mathcal{G}_{DF}(e_{s|r}') 
    )\;,
\end{align}
where $L_{DF}$ is the simplified representation for DF-GAN's original Discriminator losses; $\mathcal{G} = h_{DF}{(G_{DF}(.))}$ where $(.)$ takes a textual embedding, $h_{DF}$ maps generated images of $(G_{DF}$ on the textual embedding. Notations in the following frameworks are similar. 
All embeddings used in DF-GAN are gained from DAMSM images and text encoders. 

Then for generator $G$ loss $L_{DF_G}(\theta_G, \cdot)$, we have loss:
\begin{align}
     L_{S-DF_G} = {L}_{DF_G}(\theta_{DF_G}, (e_{s|r}, \mathcal{G}_{DF}(e_{s|r}))) + \notag \\
      {L}_{DF_G}(\theta_{DF_G}, (e_{s|r}', \mathcal{G}_{DF}(e_{s|r}'))) + \notag \\
      {L}_{DF_G}(\theta_{DF_G}, (e_{s|r}, \mathcal{G}_{DF}(e_{s|r}')))
      . 
\end{align}

Since DF-GAN only uses one discriminator $D_{DF}$ for both semantic matching and image quality supervision. Therefore, we can use $ {L}_{DF_D}(\theta_{DF}, \cdot )$ to force $\mathcal{G}_{DF}(t_s')$ be consistent with $t_s$ by optimizing parameters $r$ of $ITA_T$ : 
\begin{align}
    {L}_{ITA_T - DF} = &  r \cdot {L}_{iemse} (e_{s|r}, e_{s|r}') +  \notag \\ 
    &  (1-r) \cdot {L}_{DF_G}(\alpha,  (e_{s|r},  \mathcal{G}_{DF}(e_{s|r}'))   \;.
\end{align}


\subsubsection{Applying $ITA_T$ to AttnGAN} 
AttnGAN~\cite{xu2018attngan} is a widely used backbone for GAN-based text-to-image generation baseline. 
Since AttnGAN uses both sentence-level $e_{s|r}$ and word-level semantics $e_w$ embeddings, we implement augmented sentence and words $e_{s|r}', e_w'$ as  $e_{s|r}' = ITA_T(e_{s|r}) , e_w' = e_w + (e_{s|r}' - e_{s|r}) $. Other implementations refer to Section~\ref{Supplementary:implementation_DF}.


All embeddings used in AttnGAN are gained from DAMSM images and text encoders.

\subsubsection{Applying $ITA_C$ and $L_r$ to AttnGAN and DF-GAN}
It is easy to apply  $ITA_C$ and $L_r$ to AttnGAN and DF-GAN, by just using augmented textual embeddings for training and using $L_r$ as additional constraining.

\subsubsection{Parameter Settings}
\label{Supplementary:para}
We train each backbone from the start on the CUB dataset and tune their released checkpoint on the COCO dataset. Due to no released checkpoints for discriminators of AttnGan and DF-GAN, we retrain discriminators during the tuning phase. If there is no specification, we follow the original experimental settings of each backbone. 
Specified parameters used for producing final results in the paper are shown in Table~\ref{tab:parameters}. Notice that $\beta$ for $ITA_T$ can be set to zero due to the weak supervision of generative adversarial networks. Specifically, we double the learning rate for $ITA_C + L_r$ tests due to their regularity.

\subsection{Applying SADA to VQ-GAN + CLIP}
We use the released checkpoint and code of \cite{wang2022clip} for tuning. Notice the \cite{wang2022clip} is originally trained on the clip embeddings of images; we directly altered it by using textual CLIP embeddings.
We only tune the transformer part for organizing the discrete code,  while the image-generating decoder part is fixed.
Due to the long training time, we only tune the model for 20 epochs with $L_r$ and use its original $L_{vqclip}(\theta, X)$ for our augmented $X'$ as $L_{vqclip}(\theta, X')$. Other settings follow the original settings. 
All embeddings used in VQ-GAN + CLIP are gained from CLIP images and text encoders. We only test $ITA_C + L_r$ with VQ-GAN + CLIP due to its long training time.


\begin{table}[t]
    \centering
    \begin{tabular}{c|cc}
    \hline 
         Dataset: CUB & \multicolumn{2}{c} {+  $ITA_T$} \\ \hline
         Backbone & Warm-up & $r$  \\ \hline
         AttnGAN &  50 & 0 \\
         DF-GAN &  100 & 0.2 \\ 
         \hline \hline
         Dataset: COCO & \multicolumn{2}{c} {+  $ITA_T$} \\ \hline 
         Backbone & Warm-up & $r$ \\\hline
         AttnGAN &  0 & 0 \\
         DF-GAN &  0 & 0.2 \\   
         \hline \hline
         Dataset: CUB & \multicolumn{2}{c} {+  $ITA_C$} \\ \hline 
         Backbone & $\beta$ & $r$ \\\hline
         AttnGAN &  0.05 & 0 \\
         DF-GAN &  0.05 & 0.2 \\ 
         \hline \hline
         Dataset: COCO & \multicolumn{2}{c} {+  $ITA_C$} \\ \hline 
         Backbone & $\beta$ & $r$ \\ \hline
         AttnGAN &  0.01 & 0 \\
         DF-GAN &  0.01 & 0.2 \\
         VQ-GAN + CLIP &  0.05 & 0.2 \\ 
         \hline \hline
         Dataset: CUB & \multicolumn{2}{c} {+  $ITA_C$ + $L_r$} \\ \hline
         Backbone & $\varphi$ \\ \hline
         AttnGAN &  0.01  \\
         DF-GAN &  0.01  \\ 
         \hline \hline
         Dataset: COCO & \multicolumn{2}{c} {+  $ITA_C$ + $L_r$} \\ \hline
         Backbone & $\varphi$ & Learning Rate  \\ \hline
         AttnGAN &  0.001 & As original \\
         DF-GAN &  0.001 & Doubled \\
         VQ-GAN + CLIP &  0.05  & As original \\ 
         \hline
    \end{tabular}
    \caption{Parameters for experiments.}
    \label{tab:parameters}
\end{table}


\subsection{Applying SADA to Conditional DDPM}
{For conditional DDPM~\cite{ho2020denoising}, $ITA$ should be applied to conditional embeddings (including textual conditional embeddings).
The $GisC$ should be applied to features of generated images at each step.  
}


{Specifically, experiments based on the conditional DDPM, specifically utilizes the MNIST dataset~\cite{deng2012mnist}. The methodology applied involved incorporating our proposed $ITA_C$ on condition embeddings, with further integration of $L_r$ on calculated feature shift of generated images from U-Net's bottleneck.
We first train the bare DM and then use its bottleneck's hidden feature as $e_{s|r}$ and the bottleneck's hidden feature of the next step as $e_{f|s}$. Then other details will be the same as aforementioned.
}


Especially, $\mathbb{C}_{ss|r}$ for $ITA_C$ and $\mathbb{C}_{rr|s}$ for $L_r$ are calculated on the training set using the encoders of the framework. 
We use 30K random samples from each dataset in our experiments. 
Limited sampling also leads to the possible implementation of $ITA_C$ and $L_r$ on super-large datasets.


\subsection{Applying SADA to Stable Diffusion}

We apply $ITA_C$ to Stable Diffusion (SD) by adding $\epsilon \odot \beta \cdot \mathbb{C}_{ss|r}\mathbb{I}$ to 
textual embeddings.  $\beta$ for $ITA_C$ is set to $0.001$. All
SADA applications, including applying $ITA_C$ to Stable 
Diffusion can be referred to as Alg.~\ref{alg:itac_alg} and 
Alg.~\ref{alg:itat_alg}, where $L_{ori}$ is the originally used loss of the applied text-to-image generation model.

\textbf{SD tuning experiments settings:} 
For better verification, we chose datasets that have an obvious domain gap or domain shift with the SD's original training set.
We utilized the Pok\'emon BLIP captions~\cite{pinkney2022pokemon}  as our tuning dataset, comprising 833 paired images and texts that were generated by using BLIP~\cite{li2022blip}. LoRA~\cite{hu2021lora} was employed as our tuning tool, and both experiments shared identical parameter settings, including learning rate, training epochs, batch size, and inference steps.   
The goal of these experiments was to enable the tuned SD model to generate diverse Pok\'emon-style drawings. Following the submitted paper, we employed  CLIPScore (CS)and FID as evaluation metrics. It is worth noting that the limited size of the original dataset led to a relatively large FID score. 
{For CS, as the tuning goal is to generate Pok\'emon style drawings, we use the average embedding of the given text and the sentence `this is a Pok\'emon' because most given text only contains attribute descriptions and does not specify  the Pok\'emon. }
Similarly, we use one additional open-source dataset, Emoji~\footnote{Avaliable at \url{https://github.com/microsoft/fluentui-emoji}} dataset that contains $7.56$K samples, to test tuning with SD with SADA. Corresponding results can be seen in Supplementary Materials~\ref{app:More_Results_SD_more}.


\subsection{$ITA_T$ Implementation Suggestions}
Training $ITA_T$ with adversarial models needs concern about how to avoid exploding gradient. Because in the early stage, the discriminators may not provide meaningful semantic bounding on $ITA_T$, causing the augmented $e_{s|r}'$ located too far from $e_{s|r}$ and then a too large loss for generators which cannot be optimized. 
Thus we suggest a warmup phase before training $ITA_T$. For AttnGAN, we set a warmup phase to avoid this kind of crush. Due to DF-GAN using hinge losses, which cannot be larger than one, it can have no warmup phase. Refers to Table~\ref{tab:parameters} for more parameter details.
We also suggest scaling the learning rate up when training with $ITA$ with $L_{db}$ or $L_r$ due to their regularity.

\subsection{Implementing other augmentations with AttnGAN and DF-GAN}
\textbf{Random Mask.} 
We randomly mask $15\%$ tokens and use the original settings of AttnGAN and DF-GAN. 
AttnGAN with Random Mask collapsed multiple times during the training. We use the checkpoints that were saved before the collapse to resume the training.

\textbf{Random Noise.} 
We sample random noise from Gaussian Distribution and add it back to textual embeddings. Note that the noise scale is the same for AttnGAN and DF-GAN due to they use the same textual encoder (DAMSM).

\textbf{Mixup and DiffAug.}
We use the official code of Mixup~\cite{zhang2017mixup} and DiffAug~\cite{zhao2020differentiable} for our experiments. Augmented images and mixed textual embeddings are used for model training. All model settings follow the original settings of AttnGAN and DF-GAN.



\section{More Results and Analysis}
\label{app:More_Results}

\subsection{More Results of Tuning CLIP}
\label{app:More_Results_clip}
Table~\ref{tab:retrieval} shows additional retrieval task results using varying amounts of training data with a consistent testing set, further validating the efficacy of $ITA$. The results highlight $ITA$'s adaptability across various training set scales, especially smaller ones.

\begin{table*}[t]
    \centering
    \begin{tabular}{l|cc|cc|cc|c}
    \hline
        CLIP  & \multicolumn{2}{c|}{1280} & \multicolumn{2}{c|}{64,000}  &\multicolumn{2}{c|}{ All (118,287)} & Used samples
    \\ 
        & wo/ $ITA$ & w/ $ITA$ & wo/ $ITA$ & w/ $ITA$ & wo/ $ITA$ & w/ $ITA$ & I:image;T:text
    \\ 
    \hline 
       30.40 &  36.02 & \textbf{37.28}+1.26 & 40.76 & \textbf{41.08}+0.32  & 44.43 & \textbf{44.88}+0.45 & IR top1   
    \\ 54.73 &  62.54 & \textbf{63.74}-1.20 & 67.74 & \textbf{68.34}+0.60  & 72.38 & \textbf{72.42}+0.04 & IR top5  
    \\ 
    \hline
    49.88 &  50.90 & \textbf{52.74}+1.84 & 57.92 & \textbf{58.58}+0.96  & 61.20 & \textbf{62.76}+1.56 & TR top1  
    \\ 74.96 &  76.22 & \textbf{76.84}+0.62 & 81.68 & \textbf{82.44}+0.76  & 85.16 & \textbf{85.38}+0.22 & TR top5   
    \\ \hline
      & & + 1.23& & + 0.66 & & +0.57 & Avg.   
    \\ \hline
    \end{tabular}%
\caption{ Retrieval (R) tasks use various numbers of training data.}
\label{tab:retrieval}
\end{table*}

\subsection{More Results of Tuning Stable Diffusion}
\label{app:More_Results_SD_more}

We use one additional open-source dataset, the Emoji dataset that contains $7.56$K samples, to test tuning with SD with SADA. Quantitative results can be seen in Table~\ref{tab:sd_res_emoji} and qualitative results can be seen in Figure~\ref{fig:sd_emoji}. Similar to tuning results on the Pok\'emon BLIP captions dataset, tuning SD with SADA brings improvements in CS and FID. 
Specifically, with SADA, the diversity of generated images is improved. As shown in Figure~\ref{fig:sd_emoji} the left top group, SD tuned with SADA generates person with various skin colors. 

\begin{table}[t]
    \centering
    \begin{tabular}{c|cc|c|cc}
        \hline
         & CS & FID &  & CS & FID \\ \hline
        SD Tuned &  63.28  & 71.44 &+SADA & \textbf{63.44} & \textbf{68.33} \\ \hline
    \end{tabular}
    \caption{Tuning results of SD on  Emoji dataset. Results better than the baseline are highlighted as \textbf{bold}.}
    \label{tab:sd_res_emoji}
\end{table}

\subsection{$ITA_T$ with different $r$ }
\label{app:More_Results_ITA_T_r}
As stated, $r$ can control the augmentation strength.
Larger $r$ in $ITA_T$ leads to more intensive augmentation. However, as shown in Table~\ref{tab:alpha_results}, an inappropriate large $r$ can cause model collapse because $S(\alpha, (e_{s|r}, \mathcal{G}(e_{s|r}'))$ will lose its constraint, causing that $e_{s|r}'$ is too different from $e_{s|r}$ (i.e., $e_{s|r}'$ cannot maintain the main semantics of $e_{s|r}$). Collapse examples are shown in Figure~\ref{fig:itat_collapse}. It can be seen that using $\alpha = 0.3$. $ITA_T$ cannot produce a semantic maintaining $e_{s|r}'$ for $G$.
Within the appropriate range, larger $r$ offers better text-image consistency and image quality.

\begin{figure}[t]
    \centering
    \includegraphics[width=\linewidth]{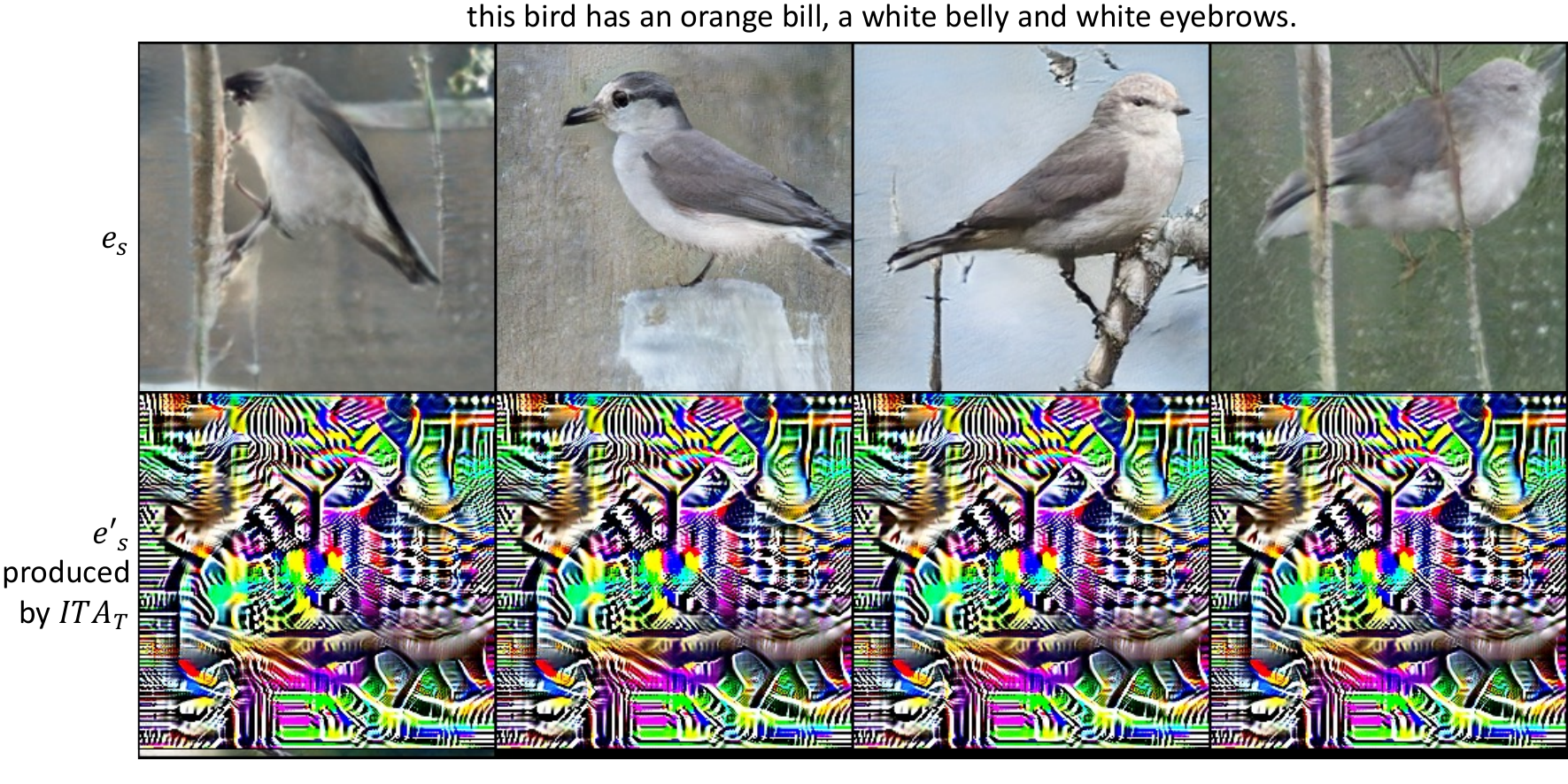}
    \caption{Collapse examples of DF-GAN with $ITA_T + L_{db}$ using $\alpha = 0.3$ generated on $e_{s|r}$ and augmented $e_{s|r}'$.}
    \label{fig:itat_collapse}
\end{figure}





\begin{table}[t]
    \centering 
    \caption{Results of DF-GAN with $ITA_T + L_{db}$ using different $\beta$ values on the CUB dataset, training within 600 epochs.}
    \begin{tabular}{c|cc}
    \hline
        $\beta$ & CS & FID   \\ \hline
         0 & 57.91 & 13.96 \\
         0.1 & 57.93 & 12.7 \\
         0.2 & \textbf{58.07} & \textbf{11.74} \\
         0.3 &  \multicolumn{2}{c}{$ITA_T$ collapses }\\ 
    \hline
    \end{tabular}
    \label{tab:alpha_results}
\end{table}

\subsection{More Results of Different Frameworks with SADA}

We show more generated examples and visualizations of different frameworks.
\begin{itemize}
    \item \textbf{AttnGAN}~\cite{xu2018attngan}: CUB results in Figure~\ref{fig:AttnCUB} and COCO results in Figure~\ref{fig:AttnCOCO_more}. The diversity improvement is more obvious for AttnGAN.
    
    \item \textbf{DF-GAN}~\cite{tao2022df}:  CUB results in Figure~\ref{fig:DFCUB_more} and COCO results in Figure.~\ref{fig:DFCOCO_more}. Semantic collapse can be observed when the model is trained with other augmentations.
    
    \item \textbf{VQ-GAN + CLIP}~\cite{wang2022clip}: COCO results in Figure~\ref{fig:CLIP_more}. Significant semantic consistency can be observed, as missing objects in the generated results of the model that tuned without SADA appear in the model that tuned  with SADA.
    
    \item  \textbf{DDPM}~\cite{ho2020denoising}: Figure~\ref{fig:ddpm_ori}-\ref{fig:ddpm_aug}. Semantic collapse can be observed when the model is trained without SADA.
    
    \item \textbf{SD}~\cite{rombach2021highresolution}:  Pok\'emon-like BLIP tuning examples in Figure~\ref{fig:sd_res} and training loss in Figure~\ref{fig:sd_loss}. The training loss of two experiments can be seen in Figure~\ref{fig:sd_loss}. It can be observed that the coverage state tuning with SADA achieves a lower training loss than without it. We present more qualitative results in Figure~\ref{fig:sd_res}. It can be seen that with SADA, generated images of the tuned model exhibit a more  Pok\'emon-like cartoon style. Emoji tuning results in Figure~\ref{fig:sd_emoji} also reveal the effectiveness of SADA.
\end{itemize}



\subsection{More Results of Other Augmentations}
We show more generated examples and visualizations of different backbones with different augmentations settings.
\begin{itemize}
    \item AttnGAN with different augmentations and SADA: Figure-\ref{fig:AttnCUB} and \ref{fig:AttnCOCO_more}.
    \item DF-GAN with different augmentations and SADA: Figure~\ref{fig:interpo}, Figure~\ref{fig:vs},
    Figure~\ref{fig:DFCUB}, Figure~\ref{fig:DFCOCO}, and 
    Figure~\ref{fig:diffaug}.
\end{itemize}

\subsection{More Results of Ablation Studies of SADA}
The generated examples of the framework applied with different components of SADA can be seen in: 
\begin{itemize}
    \item Ablation Studies on AttnGAN: Figure~\ref{fig:AttnCUB},  Figure~\ref{fig:AttnCOCO_more}.
    \item Ablation Studies on DFGAN: Figure~\ref{fig:DFCUB}, Figure~\ref{fig:DFCUB_more}, Figure~\ref{fig:DFCOCO}, Figure~\ref{fig:DFCOCO_more}.
\end{itemize}

\subsection{Results of Interpolation between $e_{s|r}$ to $e'_{s|r}$}
The interpolation between $e_{s|r}$ to $e'_{s|r}$ can be seen in following figures:
\begin{itemize}
    \item Dense interpolation, across different augmentation methods: Figure~\ref{fig:interpo} and Figure~\ref{fig:vs}.
    
    \item Interpolation, across different augmentation methods: Figure~\ref{fig:DFCUB}, Figure~\ref{fig:DFCOCO},
    and Figure~\ref{fig:diffaug}
\end{itemize}


\begin{figure*}[t]
    \centering
    \includegraphics[width=0.9\linewidth]{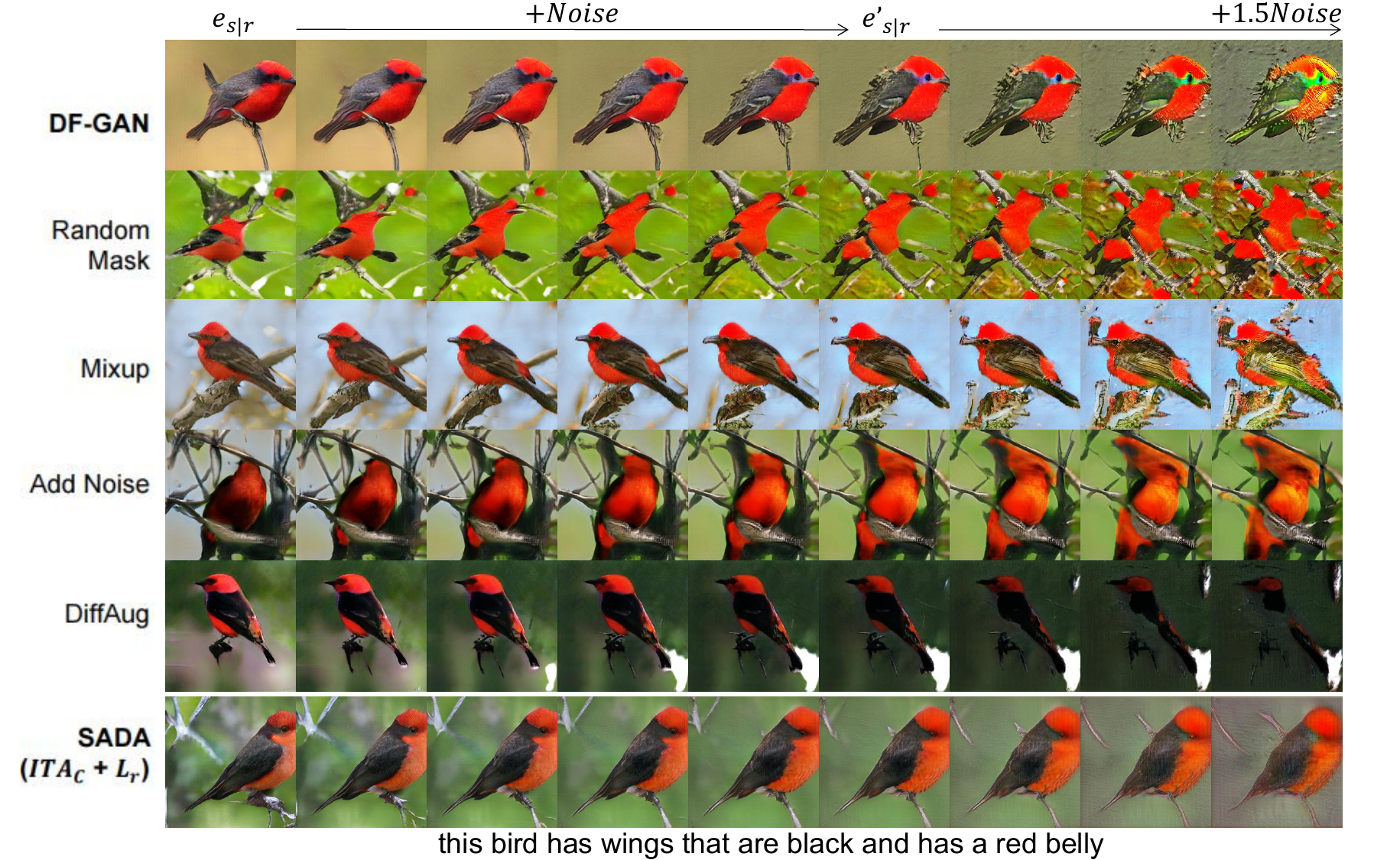}
    \caption{Interpolation examples at the inference stage between $e_{r|s}$ and  $e_{r|s}'$ of DF-GAN with different augmentations methods. $e_{r|s}'$, input noise for generator $G$, and textual conditions are the same across all rows. }
    \label{fig:interpo}
\end{figure*}

\begin{figure*}[t]
\centering
\includegraphics[width=\linewidth]{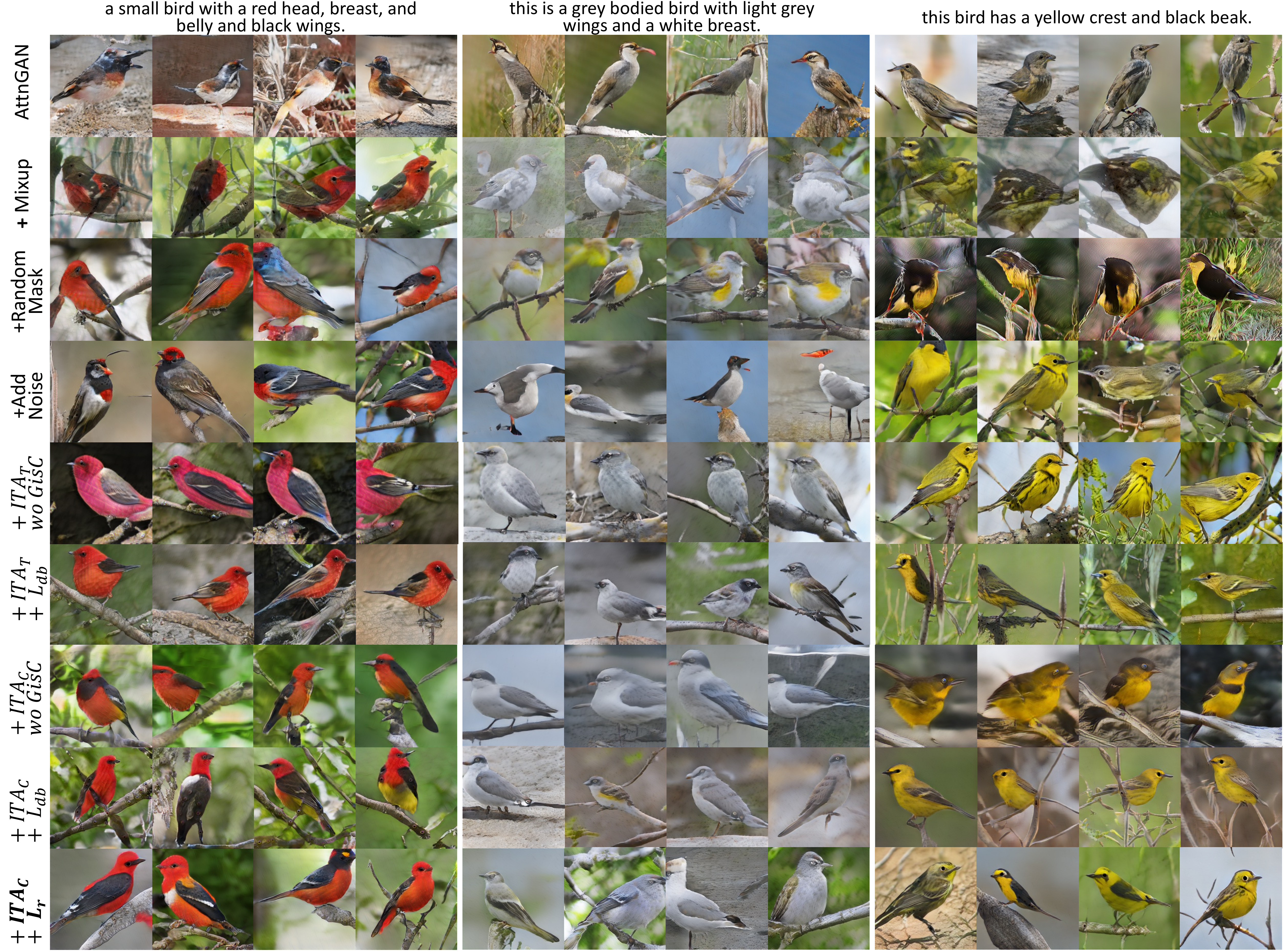}
\caption{Generated results of AttnGAN on CUB. A significant improvement in diversity with $+ITA_C +L_r$ can be observed, especially in terms of color pattern,  backgrounds, and undescribed attributes such as wing bars. }
\label{fig:AttnCUB}
\end{figure*}

\begin{figure*}[t]
    \centering
    \includegraphics[width=0.75\linewidth]{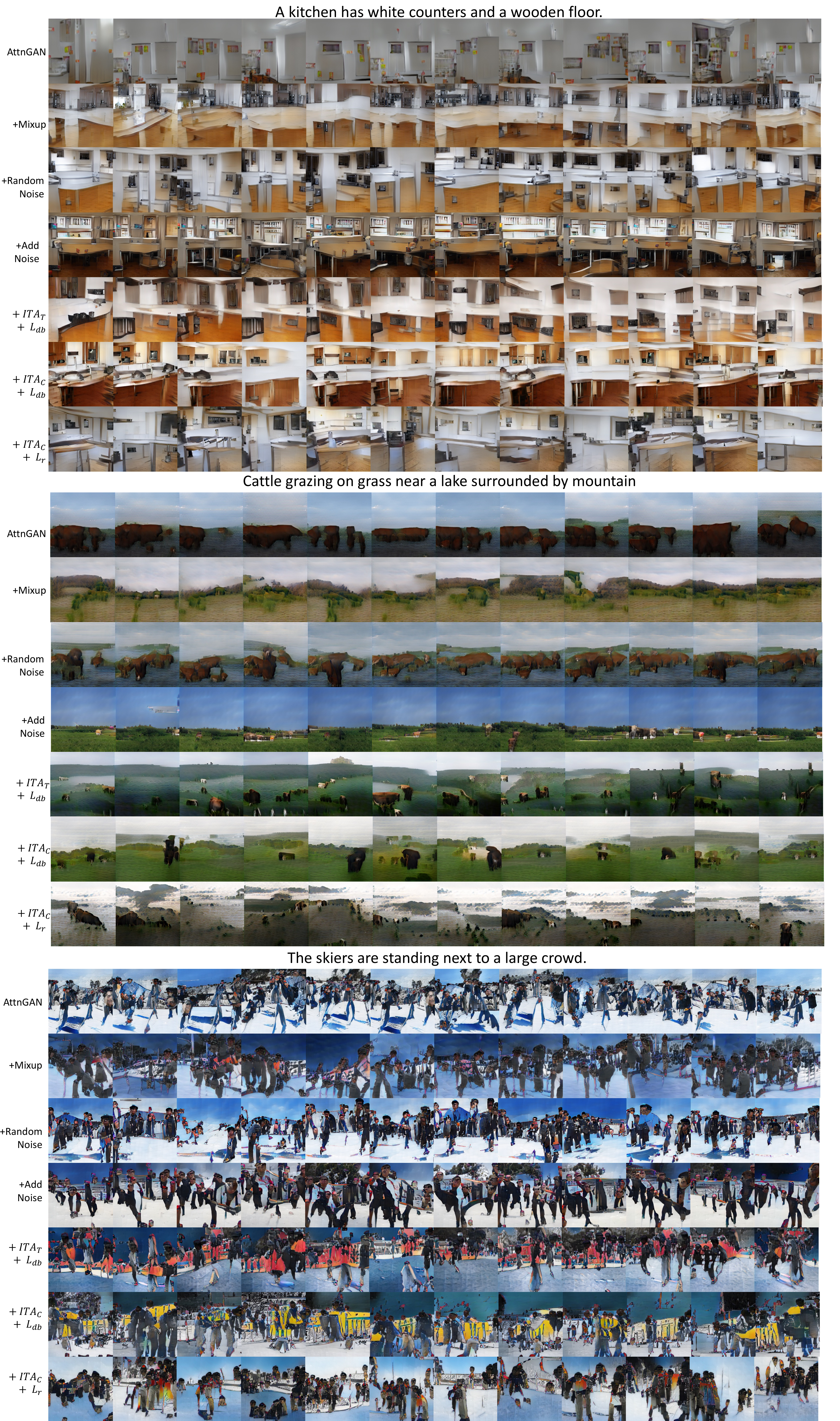}
    \caption{Generated results of AttnGAN on COCO.}
    \label{fig:AttnCOCO_more}
\end{figure*}

\begin{figure*}[t]
    \centering
    \includegraphics[width=0.75\linewidth]{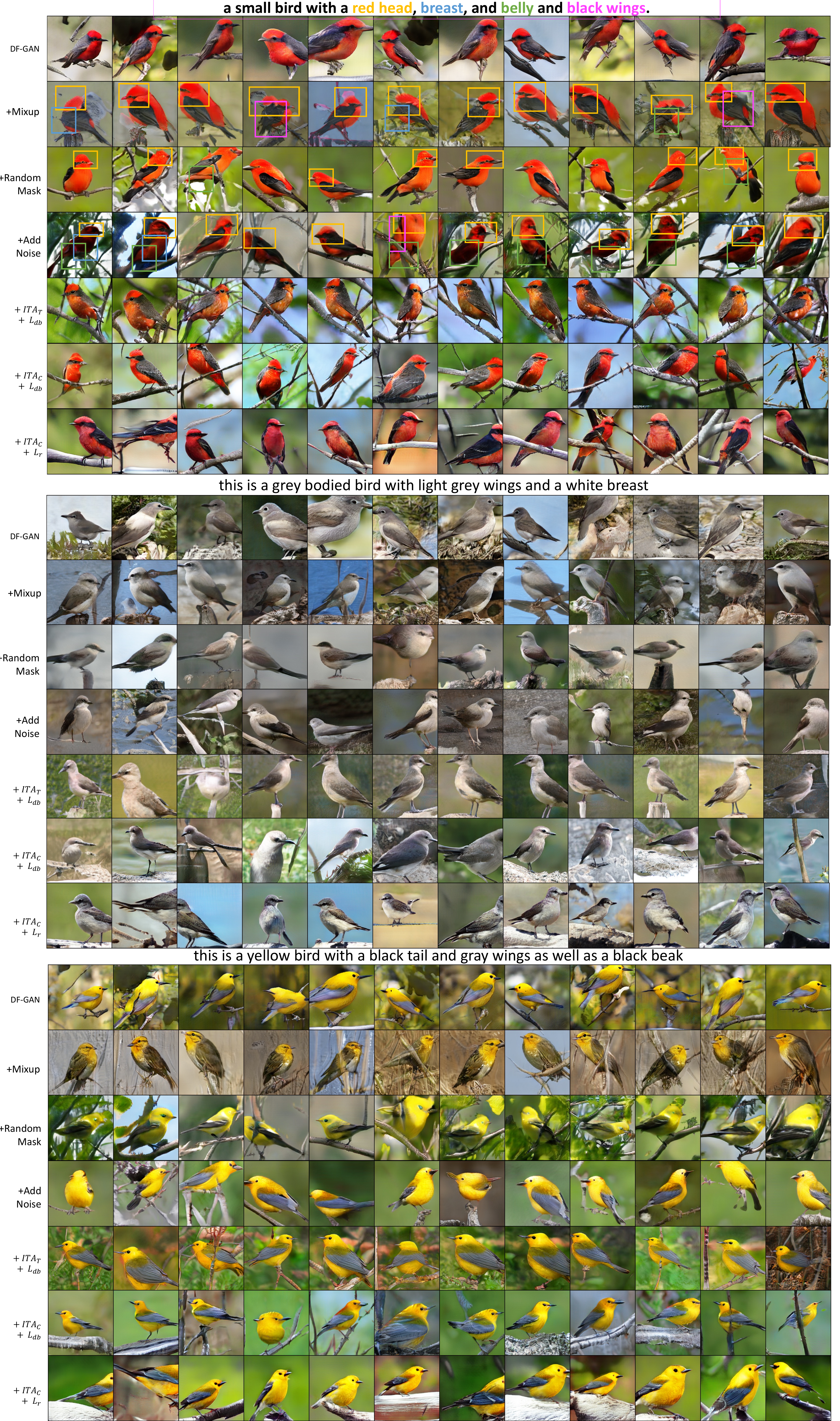}
    \caption{Generated results of DF-GAN on CUB. Semantic mismatches and image quality degradation are highlighted for generated results of DF-GAN w/ Mixup, w/ Random Mask, and w/ Add Noise in the top group. }
    \label{fig:DFCUB_more}
\end{figure*}

\begin{figure*}[t]
    \centering
    \includegraphics[width=0.75\linewidth]{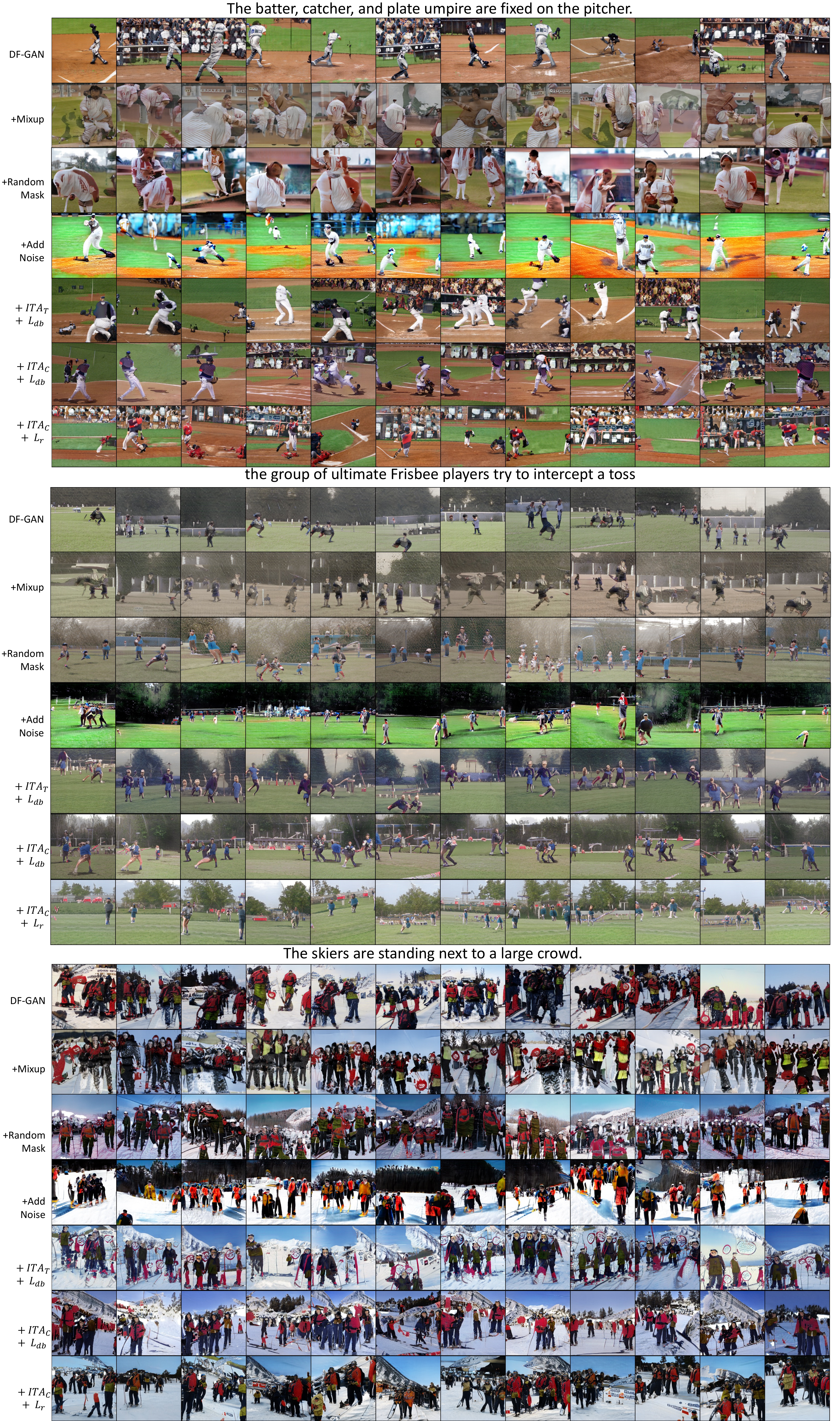}
    \caption{Generated results of DF-GAN on COCO.}
    \label{fig:DFCOCO_more}
\end{figure*}

\begin{figure*}[t]
    \centering
    \includegraphics[width=0.75\linewidth]{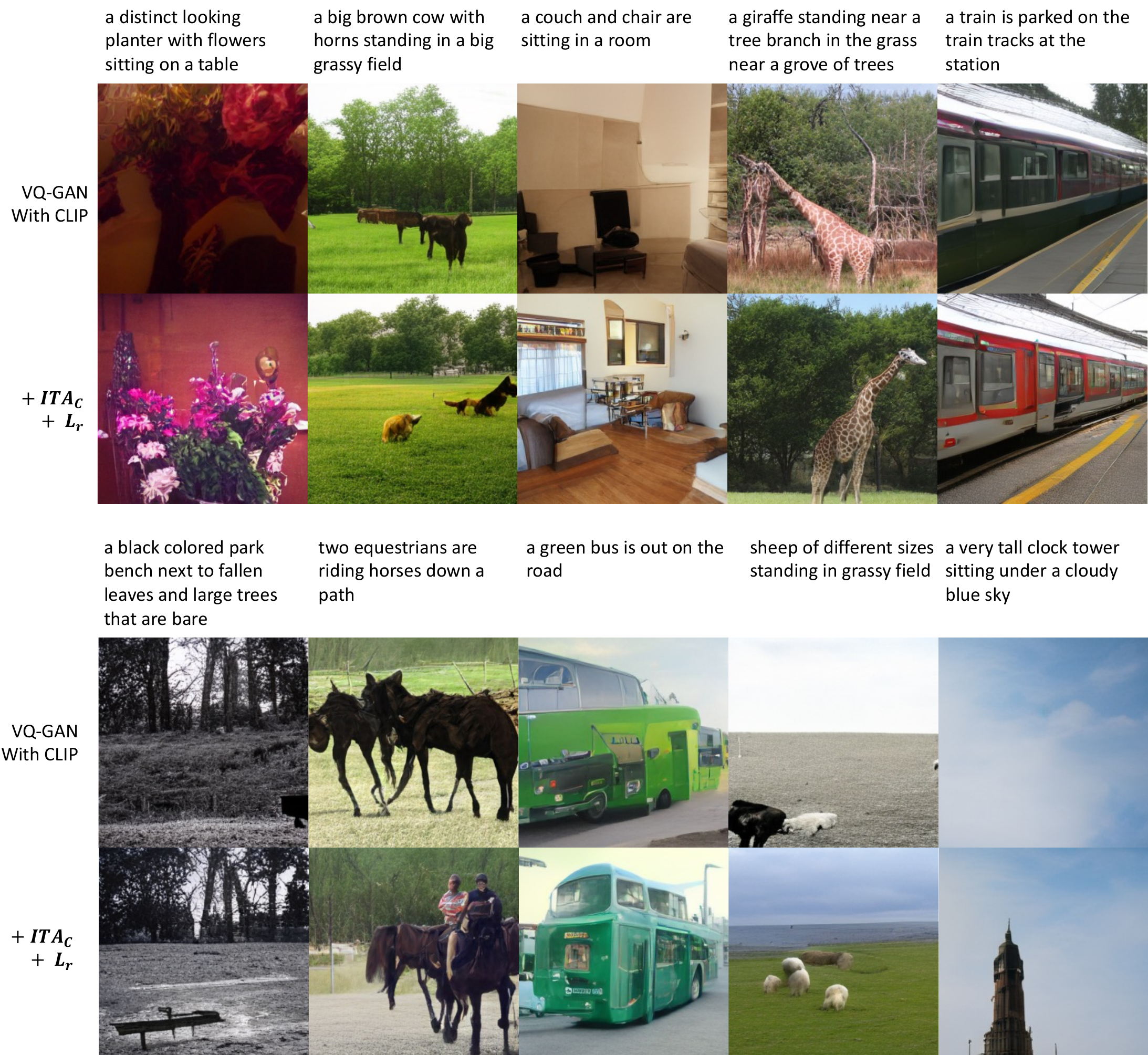}
    \caption{Generated results of VQ-GAN + CLIP on COCO. Significant text-image consistency can be observed.}
    \label{fig:CLIP_more}
\end{figure*}

\begin{figure*}[t]
    \centering
    \includegraphics[width=\linewidth]{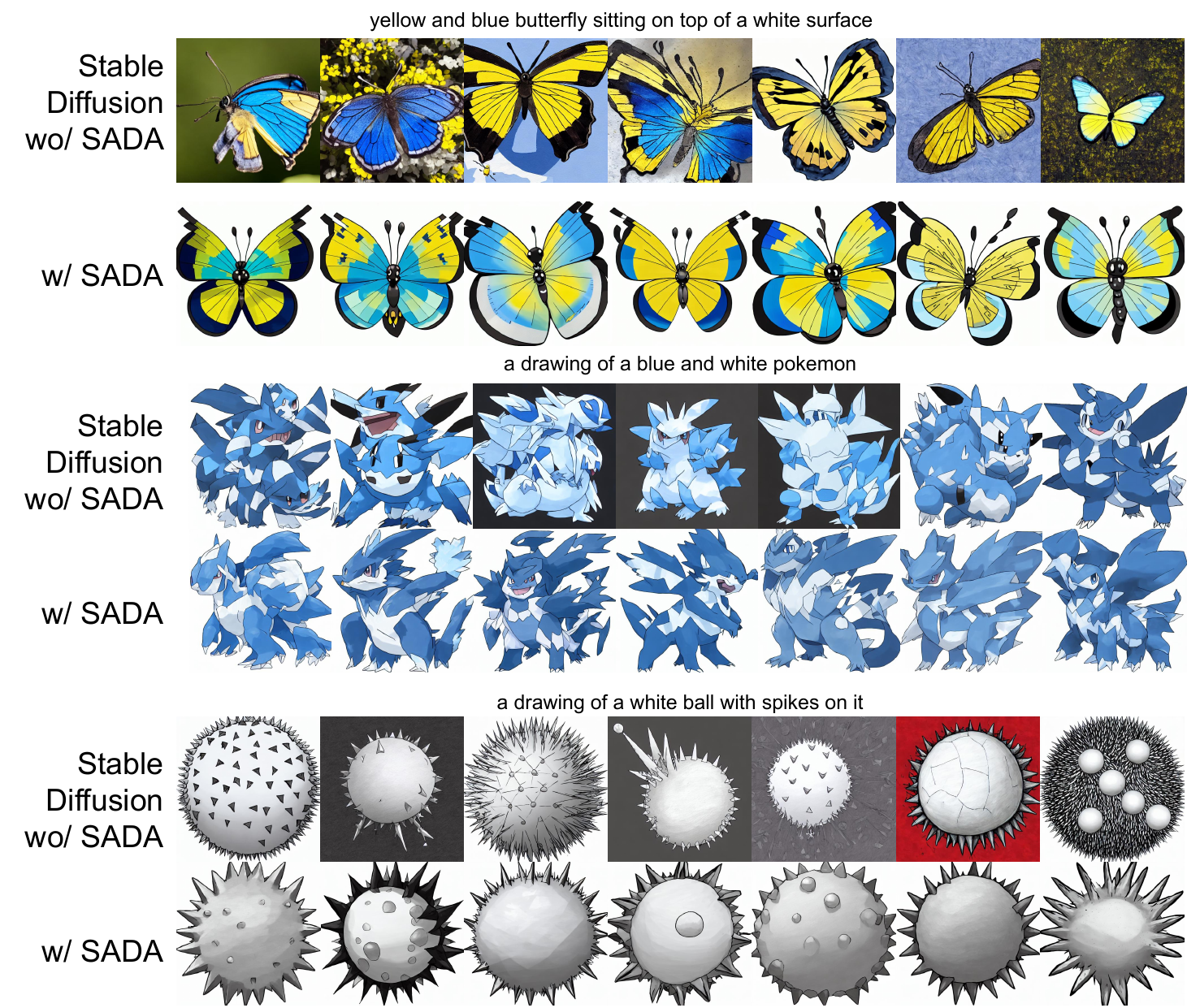}
    \caption{Generated examples of  SD LoRA tuning wo/ and w/ SADA on the Pok\'emon BLIP caption dataset. With SADA, generated images have better image quality and they are more Pok\'emon style-like.}
    \label{fig:sd_res}
\end{figure*}

\begin{figure*}[t]
    \centering
    \includegraphics[width=\linewidth]{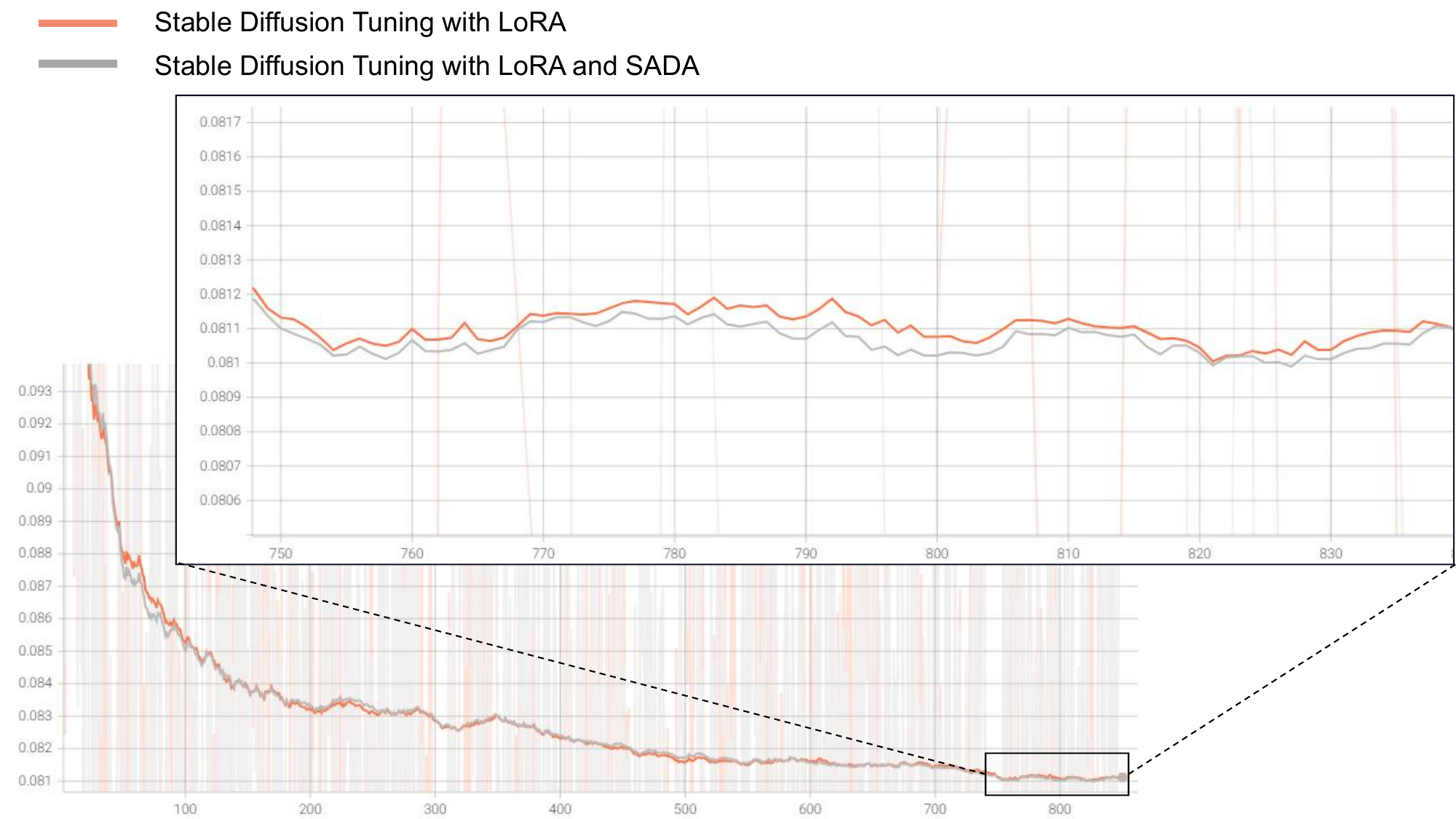}
    \caption{Loss during SD tuning.}
    \label{fig:sd_loss}
\end{figure*}


\begin{figure*}[!t]
\begin{center}
\centerline{
\includegraphics[width=0.75\linewidth]{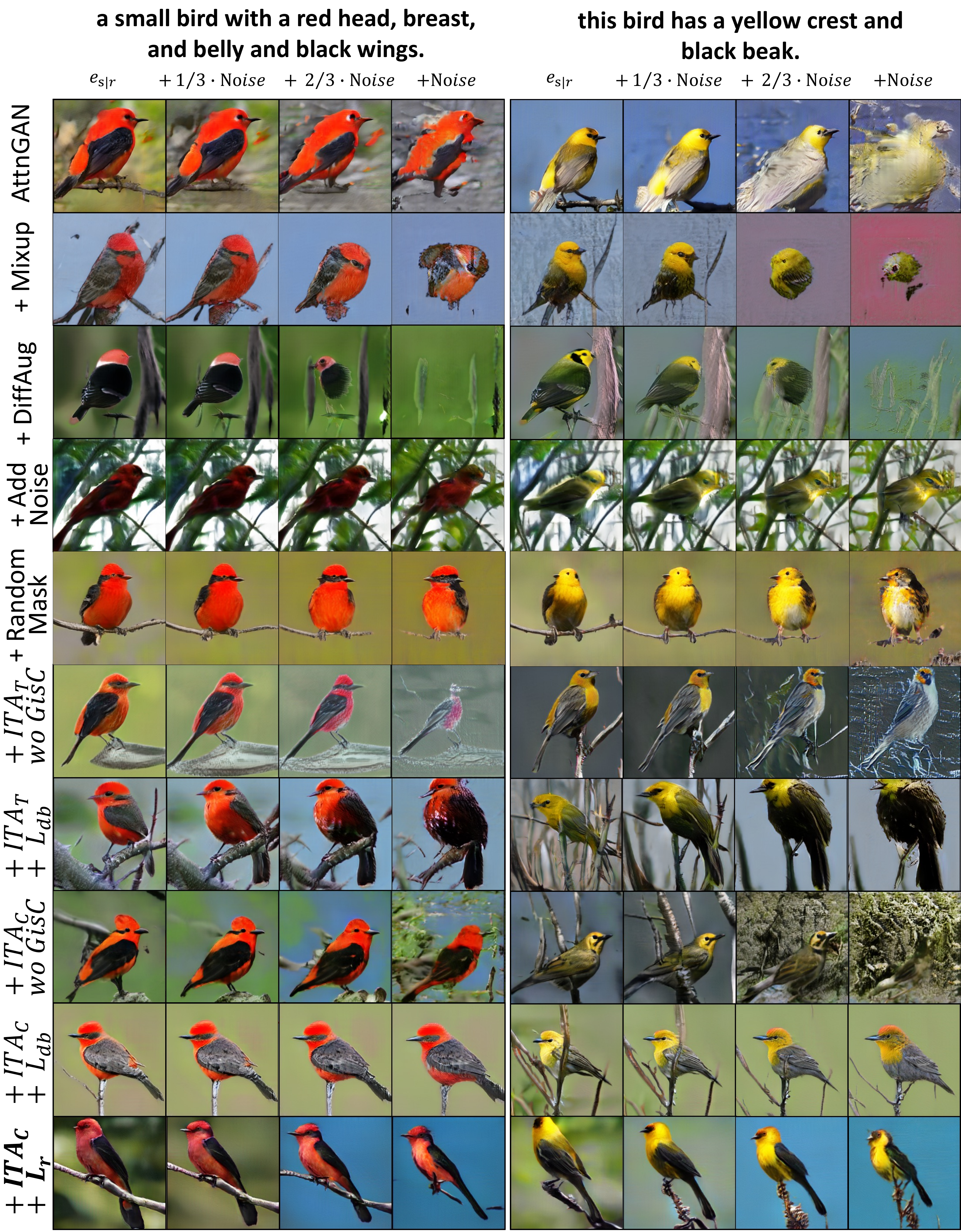}
}
\caption{Generated results of DF-GAN on CUB.}
\label{fig:DFCUB}
\end{center}
\end{figure*}

\begin{figure*}[!t]

\begin{center}
\centerline{
\includegraphics[width=0.75\linewidth]{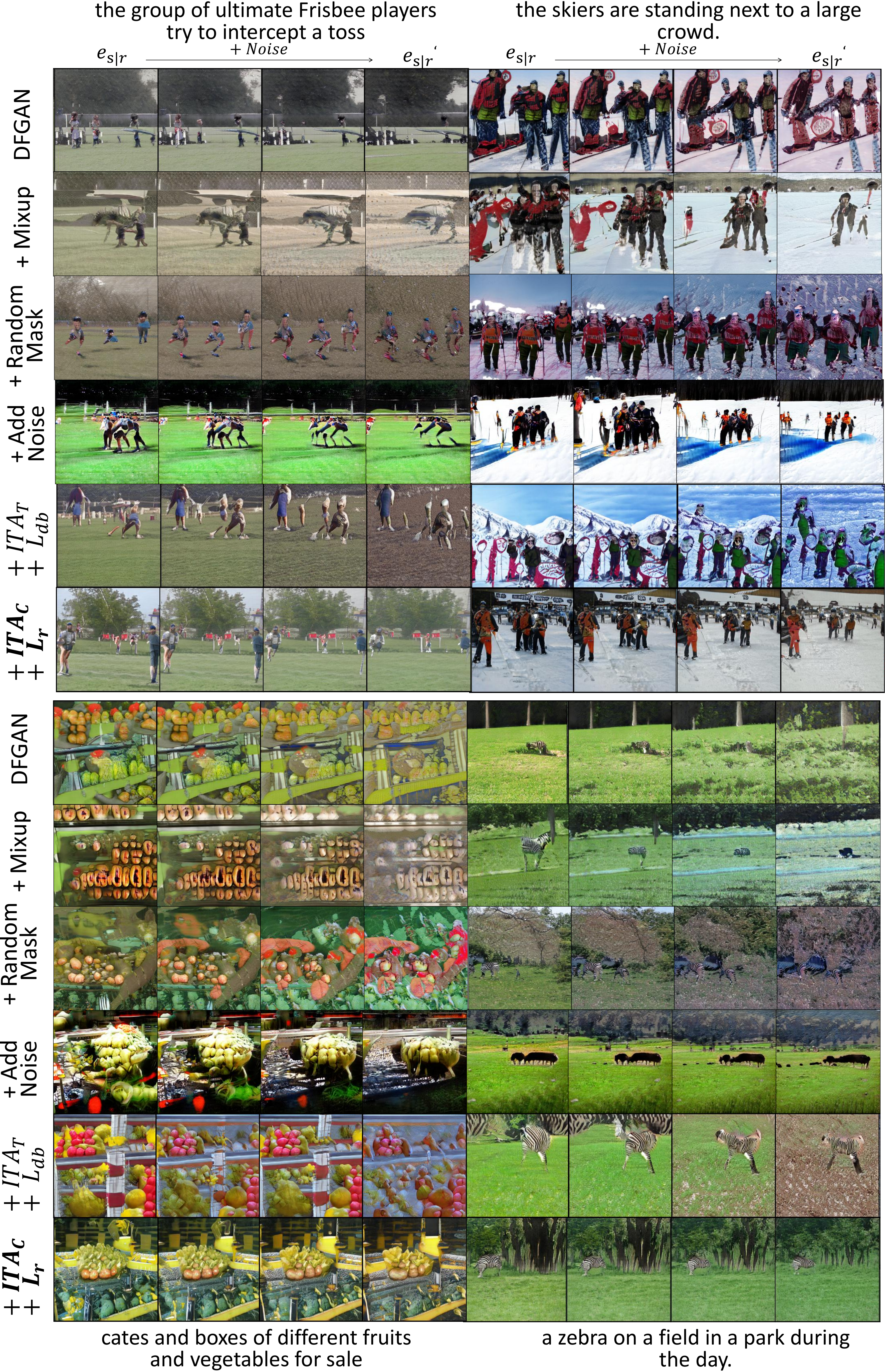}
}
\caption{Generated results of DF-GAN on COCO. Our $ITA_C + L_r$ prevents semantic collapse mostly.}
\label{fig:DFCOCO}
\end{center}
\end{figure*}

\begin{figure*}[!t]
    \centering
    \includegraphics[width=\linewidth]{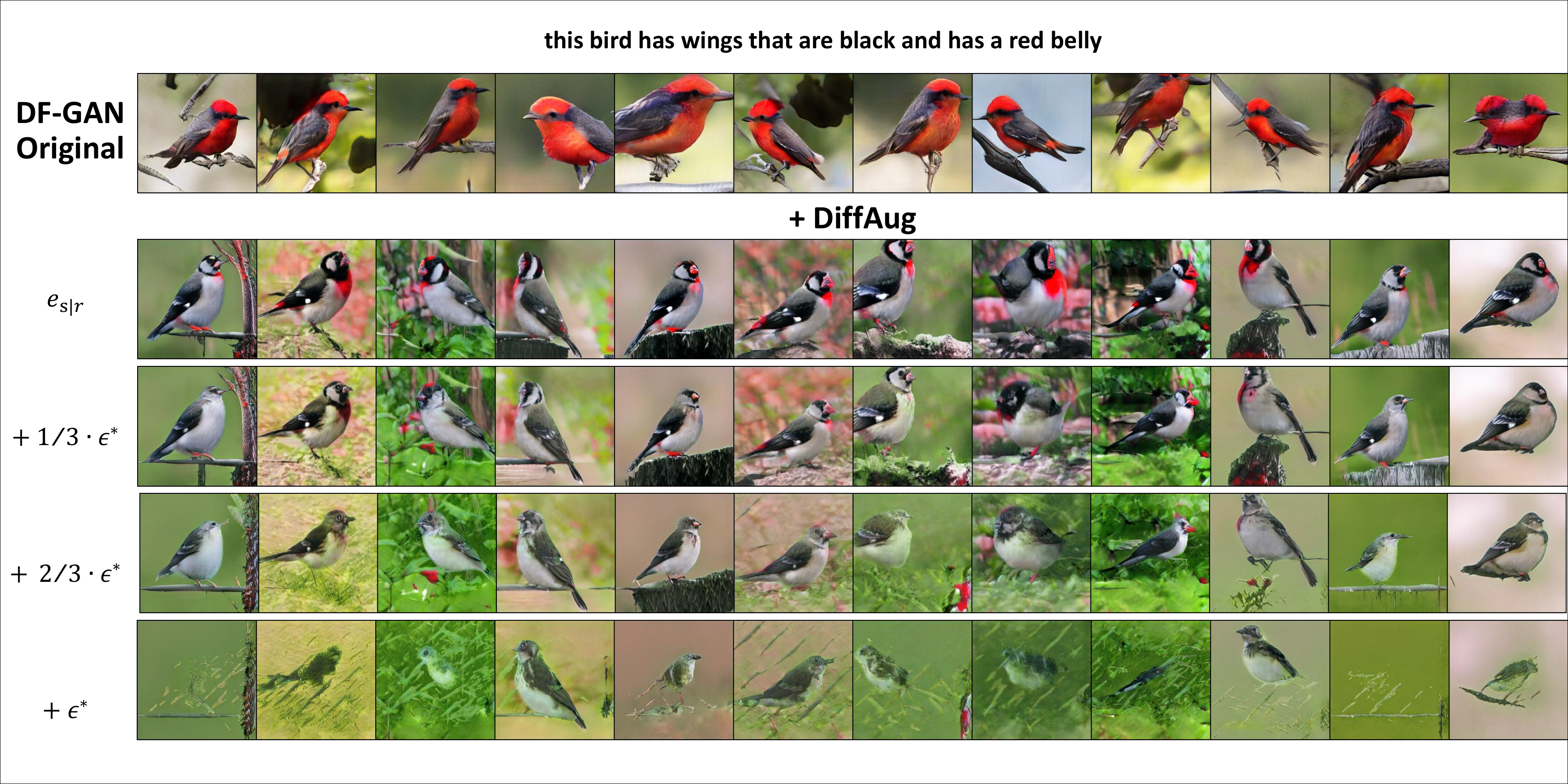}
    \caption{Generated images of DF-GAN with DiffAug on CUB dataset with ascending scales of $\epsilon$ are added.}
    \label{fig:diffaug}
\end{figure*}

\begin{figure*}[t]
    \centering
    \includegraphics[width=0.46\linewidth]{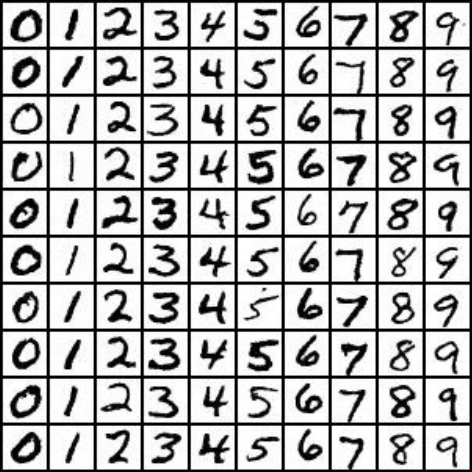} 
    \includegraphics[width=0.46\linewidth]{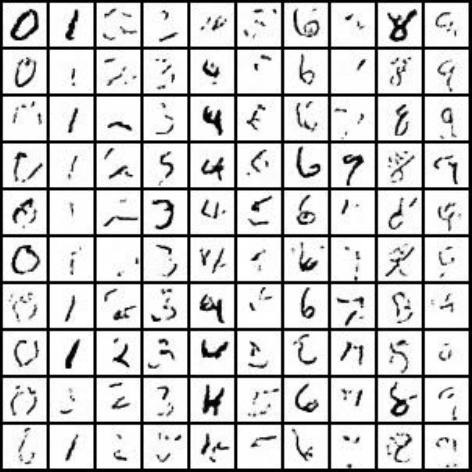}
    \caption{Generated images of DDPM wo/ (left) and w/ (right) perturbations. }
    \label{fig:ddpm_ori}
\end{figure*}

\begin{figure*}[t]
    \centering
    \includegraphics[width=0.46\linewidth] {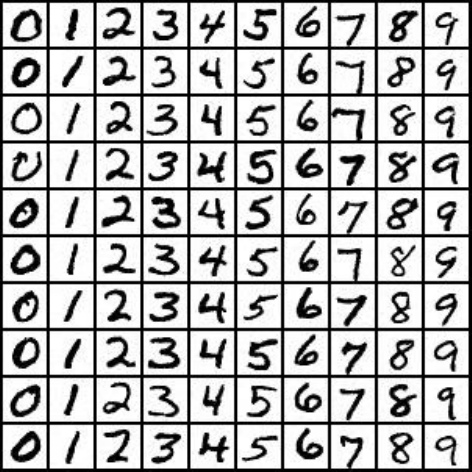} 
    \includegraphics[width=0.46\linewidth]{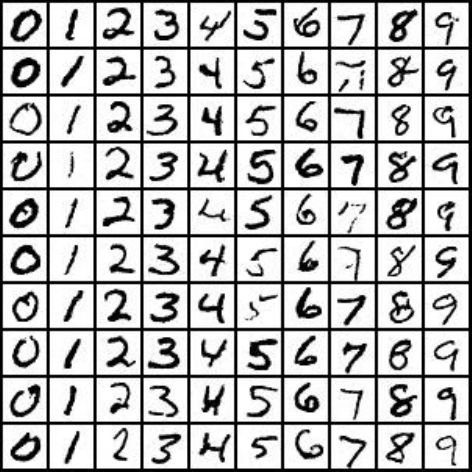}
    \caption{Generated images of DDPM $+ITA_C + L_r$ wo/ (left) and w/ (right) perturbations. }
    \label{fig:ddpm_aug}
\end{figure*}

\end{document}